\newcommand{\MMD}{\text{MMD}}
\newcommand{\VGAN}{$\mathbf{V}$-GAN}
\newcommand{\vx}{{\mathbf{x}^\nu}}
\definecolor{ForestGreen}{rgb}{0.13, 0.55, 0.13}
\definecolor{BrickRed}{rgb}{0.8, 0.25, 0.33}
\newcommand{\pluseq}{\mathrel{+}=}
\newtheorem{definition}{Definition}
\newtheorem{theorem}{Theorem}
\newtheorem{colorary}[theorem]{Corollary}
\newtheorem{example}{Example}
\newtheorem{proposition}[theorem]{Proposition}
\newtheorem{lemma}[theorem]{Lemma}
\providecommand{\customgenericname}{} %% For the Lemma, Corollary and Theorem in the appendix
\newcommand{\newcustomtheorem}[2]{%
  \newenvironment{#1}[1]
  {%
   \renewcommand\customgenericname{#2}%
   \renewcommand\theinnercustomgeneric{##1}%
   \innercustomgeneric
  }
  {\endinnercustomgeneric}
}
\title{Adversarial Subspace Generation for Outlier Detection in High-Dimensional Data}
\author{%
  \name Jose Cribeiro-Ramallo \email jose.cribeiro@kit.edu\\
  \addr Karlsruhe Institute of Technology
  \ANDD
  \small \text{Federico Matteucci} \email federico.matteucci@kit.edu\\
  \addr Karlsruhe Institute of Technology
  \ANDD 
   \small \text{Paul Enciu} \email paul.enciu@student.kit.edu\\
  \addr Karlsruhe Institute of Technology
  \ANDD
  \small \text{Alexander Jenke} \email alexander.jenke@student.kit.edu\\
 \addr Karlsruhe Institute of Technology
  \ANDD  \small \text{Vadim Arzamasov}  \email vadim.arzamasov@kit.edu\\
  \addr Karlsruhe Institute of Technology
  \ANDD \small \text{Thorsten Strufe} \email thorsten.strufe@kit.ed\\
  \addr Karlsruhe Institute of Technology
  \ANDD \small \text{Klemens Böhm} \email klemens.boehm@kit.edu\\
  \addr Karlsruhe Institute of Technology
}
\begin{document}
\maketitle

\begin{abstract}
%(1)
Outlier detection in high-dimensional tabular data is challenging since data is often distributed across multiple lower-dimensional subspaces~---~a phenomenon known as the Multiple Views effect (MV). This effect led to a large body of research focused on mining such subspaces, known as \emph{subspace selection}. However, as the precise nature of the MV effect was not well understood, traditional methods had to rely on heuristic-driven search schemes that struggle to accurately capture the true structure of the data. 
%(2)
Properly identifying these subspaces is critical for unsupervised tasks such as outlier detection or clustering, where misrepresenting the underlying data structure can hinder the performance. 
%(3)
We introduce Myopic Subspace Theory (MST), a new theoretical framework that mathematically formulates the Multiple Views effect and writes %JC: I am looking for a synonym to formulate. "expresses"? "phrases"? "renders"?
subspace selection as a stochastic optimization problem. %Based on MST, we introduce \VGAN, a generative method trained to solves such optimization problem via a loss function based in the Maximum Mean Discrepancy metric. 
Based on MST, we introduce \VGAN, a generative method trained to solve such an optimization problem.
This approach avoids any exhaustive search over the feature space while ensuring that the intrinsic data structure is preserved. 
%(4)
Experiments on 42 real-world datasets show that using \VGAN~subspaces to build ensemble methods leads to a significant increase in one-class classification performance~---~compared to existing subspace selection, feature selection, and embedding methods. Further experiments on synthetic data show that \VGAN~identifies subspaces more accurately while scaling better than other relevant subspace selection methods.
These results confirm the theoretical guarantees of our approach and also highlight its practical viability in high-dimensional settings.
\end{abstract}

\section{Introduction}

High-dimensional data, such as images, text, or some tabular datasets, constitutes much of the available data on the internet, in medical domains, and even in the private sector. %\ts{\textbf{1.} only on the internet? what about medical data, data in private sector companies, etc? Aren't they also important?}
Especially when high-dimensional, data can exhibit multiple complex relations between its features. 
Outlier Detection (OD), as well as other downstream tasks, can greatly benefit from correctly exploiting these relations to achieve more accurate results~\citep{aggarwal_outlier_2017, trittenbach_dimension-based_2019}. 
A popular research direction in the literature is to search for %``interesting''  
%\ts{\textbf{2.} sounds colloquial, why don't you write that they're special in some sense, or useful for some tasks} 
subspaces maximizing a given quality metric. %either to study the complex interactions between features or build ensemble models. 
%\ts{\textbf{3.} why do they help to build ensemble models? Make explicit.}
The multiple subspaces are later employed either to study complex interactions between features or to build an ensemble of models, with each member in a different subspace \citep{aggarwal_outlier_2017}.

Methods for obtaining subspaces work in two ways. 
One type extracts a single subspace %(not necessarily linear 
%\ts{\textbf{4. }why would it be, why do you make this side note?} %JC: Feature Selection is a form of embedding method with linear subspaces. However, I agree that the note might confuse more than help so I removed it.
%) 
that better represents all data~--- %\ts{\textbf{5. }this is especially true for...?} %JC: I don't understand what you mean
like embedding and feature selection methods \citep{balin_concrete_2019, meila_manifold_2023, healy_uniform_2024}. 
These methods assume that the data lies on a single,  low-dimensional subspace preserving its properties, such as point distances, topology, or 
notably, the underlying distribution.
As 
discussed in Example~\ref{ex:intro:ex2}, however, a single, 
low-dimensional subspace might not be enough to characterize the data.
It is therefore common in unsupervised tasks
%\ts{\textbf{6. }I don't think the problems are unsupervised, but the approaches for the corresponding problems are} 
to assume that the data instead lies on multiple subspaces~---~known as the \emph{Multiple Views effect (MV)}~\citep{keller_hics_2012}.
Subspace selection methods handle this latter scenario by providing a list of interesting subspaces and, hence, better preserving the relationships within the data~\citep{keller_hics_2012,trittenbach_dimension-based_2019, agrawal_automatic_2005}. 

The extra information provided by mining for multiple subspaces can be crucial for %\ts{\textbf{7. }again}
unsupervised tasks with little prior information, like clustering and outlier detection \citep{keller_hics_2012, qu_survey_2023,cribeiroramallo2024}. 
%Particularly, outlier detection methods such as LOF \citep{breunig_lof_2000} \ts{\textbf{8. }is this so well common that you never need to write the full version of the acronym?} %JC: Pretty much. It is the (or one of the) most cited papers in outlier detection (>11k). However a citation was needed.
%can be \ts{\textbf{9. }what do you mean by "made into"} made into ensembles %JC: I had to rewrite the entire sentence to make it make sense
Particularly, one can create ensembles of outlier detection methods, like LOF \citep{breunig_lof_2000},
by training multiple detectors, each on a lower-dimensional projection of the data.
Subspace selection methods that provide projections into each subspace are known as \emph{subspace search methods}.
The most common approach, as discussed in~\citep[Chapter 4]{aggarwal_outlier_2017}, is to search for those feature subspaces maximizing some heuristic quality metric. 
The obtained subspaces are hence necessarily axis-parallel. %\ts{\textbf{10. }that's a non-sequitur I believe. Why "hence necessarily", that doesn't seem to be implied by the text you write above?} %JC: But it does, though. A feature subspace is a subspace in the space of Parts({1,...,d}), making it by definition an axis-parallel subspace. 
Despite this limitation, this approach proved to be an effective technique for some downstream tasks, including outlier detection. %\ts{\textbf{11. }why is outlier detection now a downstream task, you started this argument referring to clustering and outlier detection} %JC: Well, because subspace selection, and any other form of dimensionality reduction, are upstream tasks in the machine learning pipeline (or at least, I consider them to be decoupled from the outlier detection part).  I've made some small changes in the above to better reflect this --- like in line 29 (of the Overleaf editor) and line 6, aiming to make clear that obtaining subspaces and finding outliers/ clustering are two separate tasks.
The usage of a heuristic as the quality metric, however, does not guarantee that the extracted subspaces preserve the data's properties, such as its distribution.
As we elaborate in the following example, this can happen even in simple settings.

\begin{figure}
    \centering
    \begin{subfigure}{0.33\linewidth}
    \includegraphics[width=\linewidth]{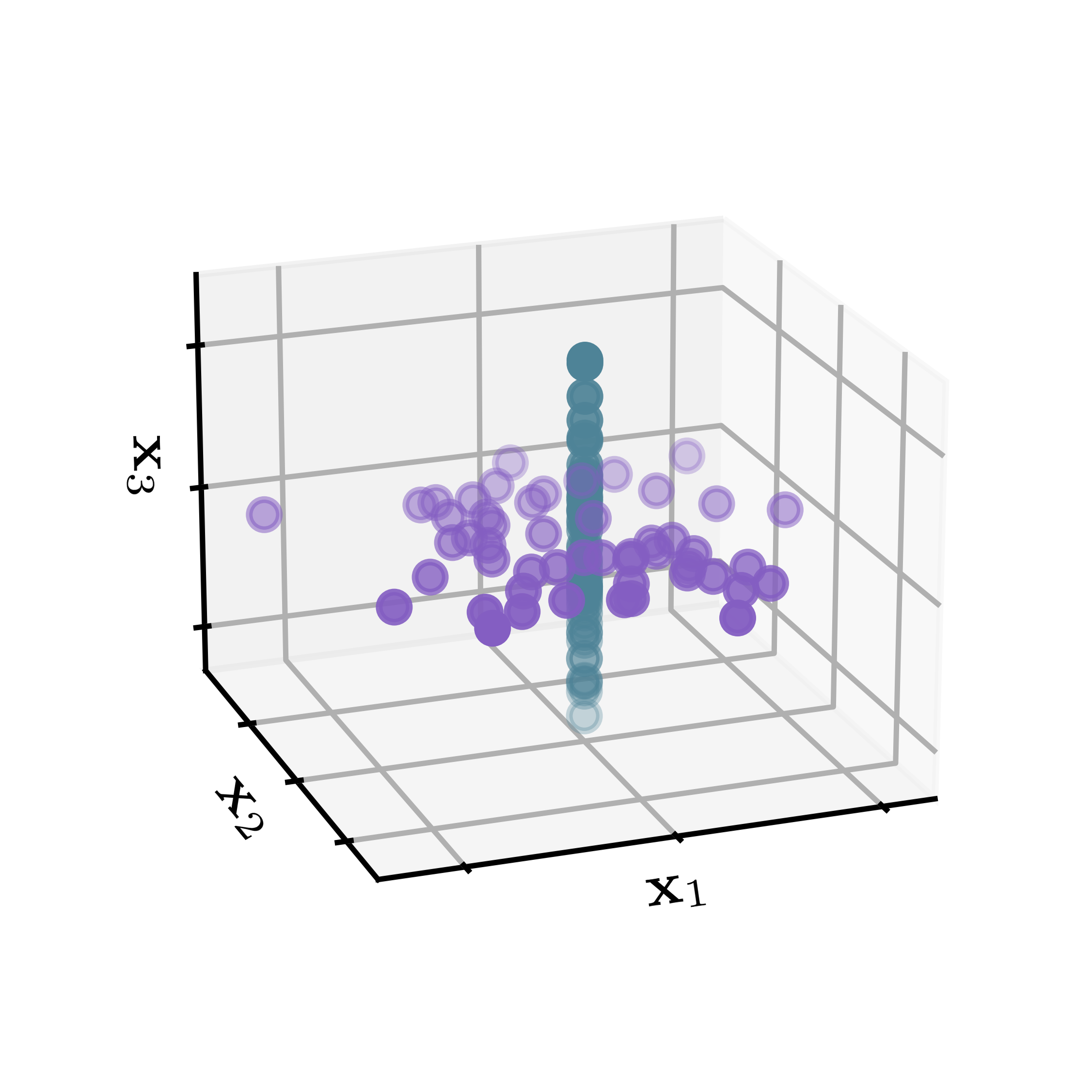}
    \caption{Population from example \ref{ex:intro:ex2}.}
    \label{fig::ex1}
    \end{subfigure}
    ~
    \begin{subfigure}{0.33\linewidth}
    \includegraphics[width=\linewidth]{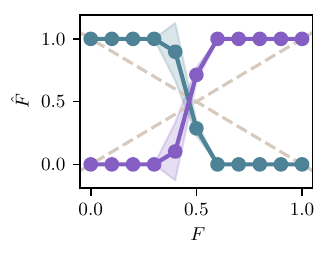}
    \caption{SotA for subspace search} 
    \label{sfig:intro:hics}
    \end{subfigure}
    \caption{
    \textbf{(a)} Population from example \ref{ex:intro:ex2} and the performance of the SotA for subspace search in it. 
     %FM: isn't the second half of the sentence related to (b)?
     We colored in blue those in subspace $U_1$ and purple those in subspace $U_2$.
    \textbf{(b)} The normalized weights $\hat F$ and $1-\hat F$ assigned by GMD \citep{trittenbach_dimension-based_2019} to subspaces $S_1$ and $S_2$ should be as close as possible to $F$ and $1-F$, i.e., the dashed grey lines.}
    \label{}
\end{figure} 
\begin{example}\label{ex:intro:ex2}
Consider a population as in Figure~\ref{fig::ex1}, where 3-dimensional data lies with probability $F$ on the $x_1$-$x_2$ plane and with probability $1-F$ on the $x_3$ line. 
We refer to these two subspaces as $S_1$ and $S_2$ respectively.
The data exhibits Multiple Views, as it lies within $S = S_1 \cup S_2 \subset \mathbb{R}^3$.
As the data lies on two subspaces, methods that return a single subspace are not able to correctly represent it.
On the other hand, methods that return a list of subspaces should be able to (1) identify $S_1$ and $S_2$ as the relevant subspaces and (2) assign them scores proportional to $F$ and $1-F$. 
However, as shown in~\ref{sfig:intro:hics}, the state-of-the-art (GMD~\citep{trittenbach_dimension-based_2019}) fails to identify both subspaces and to assign them an accurate score.
Indeed, the retrieved subspaces approximately degenerate to the sole $S_1$ if $F < 0.5$ and to $S_2$ otherwise. %FM: hallo! annoying me is here: this statement contradicts the figure, for instance for F ~ 0.4. %JC: Indeed, you are annoying >:(
\end{example}

Our goal is to avoid using a heuristic quality metric while guaranteeing that the underlying data distribution is preserved. 
This presents the following challenges.
(1) To the best of our knowledge, the only previous attempt at formalizing the Multiple Views effect focused on proving the efficacy of subspace ensembles for tabular Outlier Detection (OD)~\citep{cribeiroramallo2024}. %\ts{\textbf{12. }overdose? what is the most common meaning of "OD"?... (introduce your abbreviations/acronyms)} %JC: Overdosing in Outlier Detection MUST now be the official title of my thesis x). The abbreviation was already added in line 6. However, that was too long ago, and I will repeat it
Consequently, their theory is not directly usable to find subspaces, nor does it extend to arbitrary data types; we will elaborate on this in Section~\ref{subsec::cribeiro_mv}.
(2) Even with a theory able to recognize the subspaces relevant for MV, these latter live in an exponential search space~---~the power set of the set of features. 
An exhaustive search is therefore unfeasible.
The designed method should be able to find relevant subspaces and approximate their weights while avoiding searching in such a power set. %\ts{\textbf{13. }I'm not sure I understand what you mean by "searching within the space"}
%JC: Within the space --> within such power set (introduced the previous line)

Our contributions are the following:
(1) We generalize the theory from~\citeauthor{cribeiroramallo2024} %FM: \citet?
to be both more applicable to general data types and to allow us to obtain subspaces in practice.
In our revised theory, subspaces can be obtained by solving a stochastic optimization problem, while we can provide guarantees on the underlying data distribution being preserved.
(2) To solve the optimization problem, we propose the generative network \VGAN, whose goal is to   %\ts{\textbf{14. }"focused in"? what exactly do you mean?} %JC: focused in --> whose goal is to
generating projections into the desired subspaces. 
We prove that its loss function is optimizable and that the network converges into the desired global optimum under mild conditions.
(3) We validate our theoretical results in practice using synthetic data, showing how \VGAN~can extract the predicted subspaces by our theory. 
(4) %We test the performance of the subspaces \ts{\textbf{15. }how can a "subspace" "perform" or have "performance"? Is it quality? Or is it applicability for a specific task? What is it exactly?} returned by \VGAN~to train ensembles of outlier detectors \ts{\textbf{16. }quality for training, I guess}. %JC: This is a completely valid point. Meassring the Quality of subspaces is done (most of the time) by its quality for performance/inference. I've rewritten the sentence to better explain what we are doing. The particular
We study the quality of \VGAN~subspaces by training an ensemble of outlier detection methods using them, and testing their performance.
In particular, we show how \VGAN's subspaces consistently lead to ensembles that are significantly better than all their competitors on 42 real-world benchmark datasets from~\citep{han_adbench_2022}.
(5) Finally, we provide the code for all of our experiments and methods\footnote{https://github.com/jcribeiro98/V-GAN}.

\section{Related Work}
This section briefly overviews the subspace selection field together with its subfields. Table \ref{tab:rw:summary} includes a summary of the discussion.

%\ts{\textbf{17. }since above, you're hinging everything on subspace selection, maybe make sure that in the related work section we learn the relationship to subspace search, generation, etc... I understand it's the super-class, but what do each exactly achieve/output? Maybe that's obvious to people from the domain, to me it's not (really)..} %JC: I understand the point. I think that it was a clarity/wording issue, as each methods goal is in the text. I've changed some wording to be more precise (i.e., obtain, build, recover --> output) and checked that these type of comments do come at the beggining (or relatively close to) the beggining.  

%FM: oh yeah, that's what i wanted to see
\begin{table}
\small
\caption{Summary of existing methods and their capabilities.}
\centering
\begin{tabular}{llll}
\toprule
                              & \emph{Multi-subspace}               & \emph{Represent}                 & \emph{Project}               \\ \midrule
Feature Selection                   & \textcolor{BrickRed}{\ding{55}}     & \textcolor{ForestGreen}{\checkmark} & \textcolor{ForestGreen}{\checkmark} \\
Embedding Methods  & \textcolor{BrickRed}{\ding{55}}     & \textcolor{ForestGreen}{\checkmark} & \textcolor{ForestGreen}{\checkmark} \\
Subspace Search                  & \textcolor{ForestGreen}{\checkmark} & \textcolor{BrickRed}{\ding{55}}     & \textcolor{ForestGreen}{\checkmark} \\
Subspace Discovery                  & \textcolor{ForestGreen}{\checkmark} & \textcolor{ForestGreen}{\checkmark} & \textcolor{BrickRed}{\ding{55}}     \\
\textbf{Subspace Generation} (Ours) & \textcolor{ForestGreen}{\checkmark} & \textcolor{ForestGreen}{\checkmark} & \textcolor{ForestGreen}{\checkmark} \\ \bottomrule
\end{tabular}
\label{tab:rw:summary}
\end{table}

A classic approach to dealing with high-dimensional data is to assume that data lies on a lower-dimensional manifold and to provide a projection into it. This can be done by removing unwanted features \citep{balin_concrete_2019} or finding other (not necessarily) orthogonal transformations \citep{jones_revisiting_2021,meila_manifold_2023}. These transformations always focus on preserving the original distribution of the data in order to use it in a given downstream task. 
For unsupervised downstream tasks, it is common to use a more general assumption on the data, known as \emph{Multiple Views effect} (MV).
Under MV, data lies in a collection of subspaces, rather than in a single well-behaved one \citep{keller_hics_2012,elhamifar_sparse_2013}. Subspace selection methods focus on obtaining these subspaces for a given dataset. Using these subspaces, one can build powerful ensembles for outlier detection \citep{trittenbach_dimension-based_2019} or obtain more precise clusters \citep{qu_survey_2023}. 
There exist two big families of methods in subspace selection, as we will in what follows. 
 
\paragraph{Subspace Search. } Subspace search methods assume that the subspaces conforming the data are feature subspaces, i.e., axis-parallel projections of the data. Thanks to this, 
they work on a finite search space $\mathcal{P}(\{1,...,d\})$ simplifying the search scheme.
More specifically, these methods explicitely output the subspaces that maximize a certain quality metric in $\mathcal{P}(\{1,...,d\})$. 
As the subspaces are explicitly known, one can trivially \emph{project} the data into each subspace for the desired downstream task. 
They are popular for outlier detection, as they allow the use of subspaces to create ensembles with off-the-shelf outlier detectors \citep{aggarwal_outlier_2017}. However, as discussed earlier, the main drawbacks of these methods are the cardinality of $\mathcal{P}({1,...,d})$ and the selection of a quality metric. Although some efforts address the search space problem, no work in the subspace search literature offers a theoretical definition of what an 'important' subspace is. Current methods rely on heuristic quality metrics that do not guarantee the selected subspaces accurately preserve the data's properties~---see Example \ref{ex:intro:ex2}. This, leads to a collection of subspaces that do not \emph{represent} the data correctly.

\paragraph{Subspace Discovery. } Given the problems with subspace search, a second group of methods adopted a different approach for subspace selection. That is, they do not assume that the subspaces are necessarily feature subspaces, nor do they explicitly output the subspace themselves. Instead, they solely focus on identifying which data points are likely belong to the same subspace to output an adjacency matrix based on it. This relationship matrix is then typically used as a graph adjacency matrix for spectral clustering. Authors successfully used these methods to develop clustering techniques for various applications, including face, motion, and sentiment recognition \citep{qu_survey_2023,elhamifar_sparse_2013}. However, since they only focus on building a relationship matrix between points, they do not provide a way to \emph{project} the data, limiting its use for outlier detection. 

%FM: this does not fit here, the related work should imo focus on ss methods %JC: I disagree. RW should be about what relates with your work. My work consists of two parts. The Myopic Subspace Theory and the Subspace Selection Method. Our approach to the theory of subspace selection is related to the definition of MV from [Cribeiro], so I think is worth mentioning, no? %VA: I see the JC logic, but it does not fit - breaks the story. Just move it to the next section, it is OK
\paragraph{Previous Descriptions of Multiple Views. } In the recent literature on outlier detection, \citeauthor{cribeiroramallo2024} attempted to mathematically describe the MV effect for a given population. In particular, the authors defined a family of distributions called \emph{myopic distributions}, and show how the MV effect occurs when the data is generated as such. For example, a distribution of a population $\mathbf{x}$ is \emph{myopic} when its density $P_\mathbf{x}$ is invariant under the transformations of a random orthogonal projection matrix $\mathbf{U}$. That is, whenever
$P_\mathbf{x} = P_\mathbf{Ux},$
with each realization of $\mathbf{U}$ being an orthogonal projection matrix \citep{cribeiroramallo2024}. As an example, if one considers again the population from Example \ref{ex:intro:ex2}, one can easily verify that it is myopic under the effects of: \[
\mathbf{U} = \left\{\begin{matrix}
    &U_1 = diag(1,1,0) &\text{ with probability $F$,}\\ 
    &U_2 = diag(0,0,1) & \text{else.}
\end{matrix}\right.
\]
While the theory can predict certain behavior on paper, \citep{cribeiroramallo2024} do not provide any way to obtain such $\mathbf{U}'s$ in practice. Additionally, it lacks sufficient generality to do so trivially, as density functions are difficult to estimate in practice \citep{guo_estimating_2022}. This complicates the task of obtaining the random projection matrix by directly using the definition in \citep{cribeiroramallo2024}. 

\paragraph{Subspace Generation.} Our work centers around a generalization of the definition of myopic distribution that allows us to frame it using components one can easily estimate in practice. Thanks to this, we can fit a generative method capable of approximating the distribution of a $\mathbf{U}$ verifying the definition. This way, we can sample projections into these subspaces 
This novel approach to subspace selection, which we dubbed \textbf{Subspace Generation}, avoids searching in $\mathcal{P}(\{1,...,d\})$, and provides a suitable notion of "important" subspace. This solves the \emph{representation} problem of subspace search, while not sacrificing the ability to \emph{project} the data into the subspaces. 
 
\section{Myopic Subspace Theory}\label{sec::mst}
In this section, we will discuss the preliminaries for introducing our Subspace Generation method. We will frame our theoretical background in a generalization of the theory of myopic distributions introduced by \citeauthor{cribeiroramallo2024}. In particular, we will introduce their definition first and discuss the main drawbacks that motivate a more general framework (Section \ref{subsec::cribeiro_mv}). After that,  we will propose such a generalization and use it to write subspace selection as an optimization problem (Section \ref{subsec::myop_via_its_rep_in_H}). Lastly, we will show optimality guarantees under general conditions (Section \ref{subsec::optimality_guarantees}).

\subsection{Original Definition}\label{subsec::cribeiro_mv}
 A large collection of authors observed that high-dimensional tabular data seem to behave differently in certain feature subspaces than in others. In particular, a significant body of research empirically examines the occurrence of data variability concentrating on a specific collection of subspaces \citep{keller_hics_2012,nguyen_near-linear_2014,trittenbach_dimension-based_2019}. This effect is called \emph{Multiple Views} of the data (MV) \citep{muller_outlier_2012}. \citeauthor{cribeiroramallo2024} tried to mathematically describe MV, to then propose a way to train parametric methods under it. Their definition goes as follows.

First, consider $(E^d,\mathcal{T})$ a metric space. Further consider $\vx : (\Omega, \mathcal{B}_\sigma , \mathbb{P}) \longrightarrow E^d$ a random vector with $(\Omega, \mathcal{B}_\sigma , \mathbb{P})$ a measurable probability space with Borel's sigma algebra. Lastly, consider $Diag(\{0,1\})_{d\times d}$ the space of $d\times d$ diagonal binary matrices\footnote{Without the identity.}, and $\mathbb{P}_\vx  = \vx {_*}\mathbb{P}$ the distribution of $\vx $ with $P_\vx  = \frac{d\mathbb{P}_\vx }{d\mu}$ its density in the Radon-Nikodym sense\footnote{With $\mu >> \mathbb{P}_\vx $.}.
\begin{definition}[\citeauthor{cribeiroramallo2024}]\label{def::mv_cribeiro}
    Consider $E=\mathbb{R}$ and $\mathbf{U}:(\Omega, \mathcal{B}_\sigma , \mathbb{P})\longrightarrow Diag(\{0,1\})_{d\times d}$ a random binary matrix. We will say that $\vx $ is \emph{myopic under the views of $\mathbf{U}$}, iff: \[
        P_\vx  = P_{{\mathbf{U}\vx}}, \text{ pointwise in their support}.    
    \]
    In this case, we call $\vx $ \emph{myopic under $\mathbf{U}$} or simply, \emph{myopic}, if there is no risk of confusion. 
\end{definition}

By Definition \ref{def::mv_cribeiro}, a population is myopic under the views of a random binary matrix iff the random vector ${\mathbf{U}\vx}$ has the same density as $\vx $ for all points in its support. As diagonal matrices are orthogonal projections, it is the same as saying that observing $\vx $ and a randomly projected version of $\vx $ lead to the same density for any point in its support. The authors then prove that one can calculate $P_{\mathbf{U}\vx}$ under myopicity by \[
P_{\mathbf{U}\vx} = \sum_{i=1}^N \mathbb{P}_{\mathbf{U}_i}(U_i)P_{U_i\vx },
\]
% with $U_i$ a realization of $\mathbf{U}$ and $\mathbb{P}_\mathbf{U}$ the pmf of $\mathbf{U}$ \cite[Proposition 1]{cribeiroramallo2024}. This allows us to characterize the density of a myopic $\vx$ by its actions on each subspace. The following example shows how one can prove by hand that $\vx $ is myopic under some $\mathbf{U}$.
% \begin{example}\label{ex:mst:ex}
% Consider again a population as in Example \ref{ex:intro:ex2}.
% By invoking the law of total probabilities and the Radon-Nikodym derivative, one can see that \[
% P_\vx  = F P_{\vx |\vx \in S_1} + (1-F)P_{\vx |\vx \in S_2}.
% \]  
% And then, by considering an appropriate $\mathbf{U}$, like \[
% \mathbf{U} = \left\{\begin{matrix}
%     U_1 = diag(1,1,0), & \text{with probability } F\\
%     U_2 = diag(0,0,1) & \text{with probability } 1-F,
% \end{matrix}\right.
% \]
% one can show by Proposition 1 from \cite{cribeiroramallo2024} that\footnote{That $P_{\vx |\vx \in S_i} = P_{Ui\vx }$ is a consecuence of the desintegration theorem.} \[
%     P_{{\mathbf{U}\vx}} = \sum_{i=1}^2 \mathbb{P}_{\mathbf{U}}(U_i)P_{U_i\vx } = P_\vx .
%     \]
% \end{example}

% As one can see, finding a suitable $\mathbf{U}$ for a (real) population allows us to rewrite the density in terms of the marginals $P_{U\vx }$. 
This result is very important for the particular use case of outlier detection \citep{cribeiroramallo2024,trittenbach_dimension-based_2019,aggarwal_outlier_2017}.
However, how to find such $\mathbf{U}$ is not properly described. In particular, we identify the following problems with Definition \ref{def::mv_cribeiro}.
\begin{enumerate}[noitemsep]
    \item\label{enum::p1} \textbf{The point-wise equality of densities}. In order to provide an estimate for the validity of the definition, one would have to estimate first both densities. Not only density estimation is hard in high-dimensional data, but the existence of densities is not guaranteed for a general distribution, limiting the applicability. 
    \item\label{enum::p2}\textbf{Limited to $E=\mathbb{R}$ and $U \in Diag(\{0,1\})_{d\times d}$}. The limitation of the metric space $E$ to the real line and the realizations of $\mathbf{U}$ to diagonal binary matrices further restricts the use of this theory to more general data types.
    \item\label{enum::p3} \textbf{Estimation in Practice}. Even with all the limitations to the definition of $\vx$ and $\mathbf{U}$, it is unclear how to properly find a $\mathbf{U}^*$ that verifies Definition \ref{def::mv_cribeiro} for a given $\vx$. Even assuming that we can perfectly estimate the densities, how to find such a random matrix that $|P_\vx(p)-P_\mathbf{U\vx}(p)| = 0$ for almost all $p$ is unclear for the finite sample setting. 
\end{enumerate}

In the following section, we will propose a general definition that addresses all previous weaknesses, while also giving certain generality conditions for it. 

\subsection{Myopicity via its Representation in $\mathcal{H}$}\label{subsec::myop_via_its_rep_in_H}
We will first introduce a collection of notations and necessary conditions for our generalized definition. After that, we will explain how our generalization solves all of the previously raised problems.

\subsubsection{Tackling the Point-wise Equality of Densities and the Space Limitations}
%\ts{I'd suggest to call them by their name, rather than their numbers in your list :-)}

Consider $(E,\mathcal{T})$ a separable metric space, $\mathcal{H}$ the associated Reproducing Kernel Hilbert Space (RKHS) of real-valued functions on $E$ with kernel $\kappa$ and $\mathcal{M}^+_1(E)\subset\mathcal{M}^+(E)$ the space of positive signed measures with value 1 (i.e., probability measures) on $E$. Further consider $\mathbf{x} : (\Omega, \mathcal{B}_\sigma , \mathbb{P}) \longrightarrow E$ a random variable with $(\Omega, \mathcal{B}_\sigma , \mathbb{P})$ a measurable probability space with Borel's sigma-algebra, and $\mathfrak{X}$ the space of such random variables. In order to avoid problems \ref{enum::p1} and \ref{enum::p2}, one can consider a richer definition as follows:
\begin{definition}[Myopicity of a distribution]\label{def::gen_myop}
    Consider $\mathcal{C}(\mathfrak{X})$ the class of continuous operators from and to the space of random variables on $E$, $\mathfrak{X}$, and a subset $\Theta\mathfrak{(X)}\subset \mathcal{C}(\mathfrak{X})$. Further consider  \[
    \mathbf{U}:(\Omega, \mathcal{B}_\sigma , \mathbb{P})\longrightarrow \Theta\mathfrak{(X)}\subset \mathcal{C}(\mathfrak{X}),
    \]
    a random operator taking values on $\Theta\mathfrak{(X)}$. We say that $\mathbf{x}$ \emph{is myopic to the views of} $\mathbf{U}$ iff \begin{equation}\label{eq::myop}
    \mathbb{P}_\mathbf{x} = \mathbb{P}_\mathbf{Ux}.
    \end{equation}
    In this case, we say that $\mathbf{x}$ is \emph{$\Theta\mathfrak{(X)}$-myopic} and $\mathbf{U}$ is a \emph{lens operator} for $\mathbf{x}$.
\end{definition}

It is clear that Definition \ref{def::gen_myop} generalizes Definition \ref{def::mv_cribeiro}, by taking $E = \mathbb{R}^d,$ $\Theta(\mathfrak{X})=Diag(\{0,1\})_{d\times d}$ and invoking the uniqueness of Radon-Nikodym's derivative \citep[Chapter 10]{simonnet_measures_1996}. Furthermore, $\mathbf{Ux}$ is correctly defined as the mapping \[
\mathbf{Ux}: \omega \in (\Omega, \mathcal{B}_\sigma , \mathbb{P}) \longmapsto Ux \in E,
\]
with both $U$ and $x$ being realizations of $\mathbf{U}$ and $\mathbf{x}$ respectively.

Certaintly, both problems \ref{enum::p1} and \ref{enum::p2} are successfully addressed by Definition \ref{def::gen_myop}. However, equality between two measures in $\mathcal{M}^+_1$
is still too general to tackle problem \ref{enum::p3}. Generally, when searching for a way to determine when two elements $a,b$ of the same space $X$ are equal, one defaults to check whether $m(a,b)_X = 0$, if such a space $X$ is equipped with a metric $m(\cdot,\cdot)_X$. Our goal is to do the same for two probability measures $\mathfrak{p},\mathfrak{q} \in \mathcal{M}^+_1(E)$. In what follows, we will introduce how to obtain such a metric, and in which conditions that metric exists.

\subsubsection{Tackling the Estimation}

There exists a large body of literature focusing on embedding $\mathcal{M}^+_1(E) \subset \mathcal{M}^+ (E)$ into a RKHS $\mathcal{H}$ of real-valued functions on $E$~--- see \citep[Chapter 4]{berlinet_reproducing_2004} for a survey. The particular embedding employed to represent a measure as a function in $\mathcal{H}$ will determine the metric that one obtains at the end. This is why is important to carefully embed $\mathcal{M}^+$ in a way that the resulting metric can be easily estimated. For that, we will follow the existing body of work that aims to obtain a metric \citep{gretton_kernel_2012,schrab_mmd_2023,fukumizu_kernel_2007} that is easy to estimate and has a known asymptotic \citep{gretton_kernel_2012} distribution. 

First, consider the linear functional on $\mathcal{H}$: \[
\mathbb{E}_\mathfrak{p}: f\in \mathcal{H} \longmapsto \mathbb{E}_\mathfrak{p}f = \int fd\mathfrak{p} \in \mathbb{R}.
\]
Given this mapping, one can define: 
\begin{definition}[Definition 2 in \cite{gretton_kernel_2012}]
    Let $\mathcal{F}\subset \mathcal{H}$ be a class of functionals on $E$. The Maximum Mean Discrepancy (\emph{MMD}) is defined as: \begin{equation}
    \emph{\MMD}_\kappa(\mathfrak{p}, \mathfrak{q}) = \sup_{f\in \mathcal{F}}\left(\mathbb{E}_\mathfrak{p}f - \mathbb{E}_\mathfrak{q}f\right).
    \end{equation}

\end{definition}

In $\mathcal{H}$ an RKHS with kernel $\kappa$ is measurable and such that\footnote{here the kernel is integrated with respect to both variables at the same time. I.e, as: $$\kappa(\cdot,\cdot):x\in E\longmapsto \kappa(x,x)$$} $\int\sqrt{\kappa(\cdot,\cdot)}d\mathfrak{p} < \infty,$ for all $\mathfrak{p}\in \mathcal{M}^+_1(E),$ and $\mathcal{F}$ the unit ball in $\mathcal{H}$, one can easily prove \citep{sriperumbudur_hilbert_2010} that:\begin{align}
        &\exists! \mu_\mathfrak{p} \in \mathcal{H}\text{ such that } \mathbb{E}_\mathfrak{p}f = <\mu_\mathfrak{p},f>_\mathcal{H},~\forall \mathfrak{p} \in \mathcal{M}^+_1\label{eq::exist_me}, \\ 
        & \MMD_\kappa(\mathfrak{p}, \mathfrak{q})^2 = \|\mu_\mathfrak{p}-\mu_\mathfrak{q}\|^2_\mathcal{H} = \mathbb{E}_{\mathbf{x,x'}\sim \mathfrak{p}}\left[ \kappa(\mathbf{x},\mathbf{x'})\right] - 2 \mathbb{E}_{\mathbf{x}\sim \mathfrak{p}, \mathbf{x'}\sim \mathfrak{q}}\left[\kappa(\mathbf{x},\mathbf{x'})\right] + \mathbb{E}_{\mathbf{x,x'}\sim \mathfrak{q}}\left[\kappa(\mathbf{x},\mathbf{x'})\right]  
        \label{eq::mmd_in_unit_ball}.
    \end{align}

The unique representer of $\mathfrak{p}$ in $\mathcal{H}$, $\mu_\mathfrak{p}$, is known as
the \emph{mean embedding} of $\mathfrak{p}$ in $\mathcal{H}$. The use of the unit ball for $\mathcal{F}$ is not arbitrary, as different function classes lead to different metrics. We want to use the one in Equation \ref{eq::mmd_in_unit_ball} as it has a consistent\footnote{More precisely, $\sqrt{mn/(m +n)}-$consistent} $U$-estimator with better rate of convergence than other popular metrics in $\mathcal{M}^+_1$~--- see  \citep{sriperumbudur_hilbert_2010}. An interesting consequence of working in an RKHS is that one can characterize specific properties of the MMD and the estimator by properties of the kernel $\kappa$. The most important one for us is that the MMD is defined as in eq. \ref{eq::mmd_in_unit_ball}, on a RKHS $\mathcal{H}$ with a characteristic\footnote{$\kappa$ is characteristic iff $\mathbb{E}_\mathfrak{p}f=\mathbb{E}_\mathfrak{q}f, ~\forall f \Longrightarrow \mathfrak{p} = \mathfrak{q}.$} kernel is a metric~---see \citep{fukumizu_kernel_2007}. Thus, MMD$_\kappa(\mathfrak{p},\mathfrak{q})=0 \iff \mathfrak{p} = \mathfrak{q}$.

Therefore, one can state the following: 
\begin{lemma} \label{lemma::myop_by_the_representer_in_charct_RKHS}
Consider $\mathcal{H}$ a RKHS with a characteristic kernel $\kappa$; and $\mathbf{x}$, $\mathbf{U}$ and \emph{MMD} as previously defined. Further, consider $\mathbf{V}$ to be a lens operator for $\mathbf{x}$. Then, \[\underset{\mathbf{U}}{\arg\min}\emph{\MMD}_\kappa(\mathbb{P}_\mathbf{x},\mathbb{P_\mathbf{Ux}})\ni\mathbf{V}\]
\end{lemma}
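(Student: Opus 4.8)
The plan is to reduce the statement to the defining property of a characteristic kernel, namely that $\MMD_\kappa$ is a genuine metric on $\mathcal{M}^+_1(E)$. The excerpt already records (citing \citep{fukumizu_kernel_2007}) that on an RKHS $\mathcal{H}$ with characteristic kernel one has $\MMD_\kappa(\mathfrak{p},\mathfrak{q}) = 0 \iff \mathfrak{p} = \mathfrak{q}$, and from the expansion in Equation~\ref{eq::mmd_in_unit_ball} together with $\MMD_\kappa = \|\mu_\mathfrak{p} - \mu_\mathfrak{q}\|_\mathcal{H}$ it is immediate that $\MMD_\kappa(\mathfrak{p},\mathfrak{q}) \geq 0$ for every pair of measures. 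So the first step is simply to invoke these two facts: non-negativity and the identity-of-indiscernibles property.

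Next I would apply non-negativity to the family of measures $\{\mathbb{P}_{\mathbf{Ux}}\}$ indexed by random operators $\mathbf{U}$ valued in $\Theta(\mathfrak{X})$: for every such $\mathbf{U}$ we have $\MMD_\kappa(\mathbb{P}_\mathbf{x}, \mathbb{P}_{\mathbf{Ux}}) \geq 0$, hence $\inf_{\mathbf{U}} \MMD_\kappa(\mathbb{P}_\mathbf{x}, \mathbb{P}_{\mathbf{Ux}}) \geq 0$. Here I would note that $\mathbf{Ux}$ is a well-defined random variable on $E$ (as spelled out after Definition~\ref{def::gen_myop}), so $\mathbb{P}_{\mathbf{Ux}} \in \mathcal{M}^+_1(E)$ and the MMD is defined for each admissible $\mathbf{U}$; in particular the index set is nonempty because $\mathbf{V}$ itself belongs to it.

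Then I would use the hypothesis that $\mathbf{V}$ is a lens operator for $\mathbf{x}$: by Definition~\ref{def::gen_myop} this means exactly $\mathbb{P}_\mathbf{x} = \mathbb{P}_{\mathbf{Vx}}$. By the identity-of-indiscernibles property of the metric $\MMD_\kappa$ (characteristic kernel), this gives $\MMD_\kappa(\mathbb{P}_\mathbf{x}, \mathbb{P}_{\mathbf{Vx}}) = 0$. Combining with the previous step, $\mathbf{V}$ attains the value $0$, which is a lower bound for the objective over all admissible $\mathbf{U}$; hence $\MMD_\kappa(\mathbb{P}_\mathbf{x}, \mathbb{P}_{\mathbf{Vx}}) = \min_{\mathbf{U}} \MMD_\kappa(\mathbb{P}_\mathbf{x}, \mathbb{P}_{\mathbf{Ux}}) = 0$, so $\mathbf{V} \in \arg\min_{\mathbf{U}} \MMD_\kappa(\mathbb{P}_\mathbf{x}, \mathbb{P}_{\mathbf{Ux}})$, which is the claim.

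There is no real analytic obstacle here — the content is entirely in having set up $\MMD_\kappa$ as a metric via the characteristic-kernel assumption. The only points deserving a sentence of care are bookkeeping ones: checking that every competitor $\mathbf{U}$ in the $\arg\min$ indeed induces a probability measure $\mathbb{P}_{\mathbf{Ux}}$ on $E$ (so that $\MMD_\kappa$ applies and is $\geq 0$), and observing that the $\arg\min$ is over the nonempty class of random operators valued in $\Theta(\mathfrak{X})$ so that "$\ni \mathbf{V}$" is meaningful. If one wanted a slightly stronger statement, the same argument shows the $\arg\min$ set is precisely the set of lens operators for $\mathbf{x}$ (equivalently, the zero set of the objective), but for the lemma as stated the inclusion $\ni \mathbf{V}$ is all that is needed.
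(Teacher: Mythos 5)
Your argument is correct and is essentially the same as the paper's: use that $\mathbf{V}$ being a lens operator gives $\mathbb{P}_\mathbf{x} = \mathbb{P}_{\mathbf{Vx}}$, hence $\MMD_\kappa(\mathbb{P}_\mathbf{x},\mathbb{P}_{\mathbf{Vx}}) = 0$ by the characteristic-kernel property, and then conclude by non-negativity of $\MMD_\kappa$. The extra bookkeeping you add (well-definedness of $\mathbb{P}_{\mathbf{Ux}}$, nonemptiness of the competitor class) is harmless but not part of the paper's proof.
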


Thus, by Lemma \ref{lemma::myop_by_the_representer_in_charct_RKHS}, for a $\Theta(\mathfrak{X})$-myopic $\mathbf{x}$, one could find a lens operator $\mathbf{V}$ by solving the stochastic optimization problem \begin{equation*}\label{eq::t_sop}
    \underset{\mathbf{U}}{\arg\min}{\MMD}_\kappa(\mathbb{P}_\mathbf{x},\mathbb{P_\mathbf{Ux}})\ni \mathbf{V}.
\end{equation*}
As we only have access to the sample estimate of the MMD in the finite sample setting, $\widehat{\MMD}_\kappa$ \citep{gretton_kernel_2012}, we need to work with the problem \begin{equation}\label{eq::sop}
    \underset{\mathbf{U}}{\arg\min}\widehat{\MMD}^2_\kappa(\mathbb{P}_\mathbf{x},\mathbb{P_\mathbf{Ux}})\ni\mathbf{V}.
\end{equation}
The question now is whether the optimization problem \ref{eq::sop} is optimizable, and under which conditions we can obtain a lens operator. We will answer these questions in what follows.

\subsection{Convergence to a Lens Operator}\label{subsec::optimality_guarantees}

Consider now a random operator $\mathbf{U}$ as before, and the space 
$\mathcal{M}_\mathbf{x}^{{\Theta(\mathfrak{X})}}\subset \mathcal{M}_1^+$ 
of probability measures on $E$ generated by $\Theta(\mathfrak{X})$ and $\mathbf{x}$. I.e., \[
\mathfrak{p} \in 
\mathcal{M}_\mathbf{x}^{{\Theta(\mathfrak{X})}}
\iff \exists \mathbf{U} \text{ such that } \mathbb{P}_\mathbf{Ux} = \mathfrak{p}.
\]
The following theorem and corollary establishes the conditions for the optimization problem \ref{eq::sop} to have a global minima for $\mathcal{M}_1^+$~--- i.e., a lens operator. We first will write it in terms of probabilities in 
$\mathcal{M}_\mathbf{x}^{{\Theta(\mathfrak{X})}}\subset\mathcal{M}_1^+$, 
and then we will show that one can rewrite it in terms of the random operators under certain conditions.
\begin{theorem}\label{th::pvx_convergence}
Consider $\mathbf{x}$ a random variable on $(E,\mathcal{T})$~--- a \emph{separable} metric space~--- and $\mathbf{U}$ a random operator taking values on $\Theta(\mathfrak{X})\subset \mathcal{C}(\mathfrak{X})$. Further consider the associated \emph{RKHS} $\mathcal{H}$ of functions on $E$ with \emph{characteristic} kernel $\kappa$, the induced \emph{MMD} metric on $\mathcal{M}_1^+$. Under these conditions, if 
$\mathcal{M}_\mathbf{x}^{{\Theta(\mathfrak{X})}}$
is \emph{compact} and $\mathbf{x}$ $\Theta(\mathfrak{X})$-\emph{myopic}, we have that:
    \begin{enumerate}
    \item[] Given an iterative convergence strategy $\mathfrak{F}$ such that $\mathfrak{F}(\mathfrak{p}_{n-1}) = \mathfrak{p}_n \in \mathcal{N}\subset \mathcal{M}_1^+$ and $\{\mathfrak{p}_n\}_{n\in\mathbb{N}} \longrightarrow \mathfrak{p}'\in 
    \underset{\mathfrak{p}\in \mathcal{N}}{\arg\inf} 
    \text{ \emph{MMD}}_\kappa(\mathbb{P}_\mathbf{x},\mathfrak{p})$, it follows that:\[
    \mathfrak{F}(\mathfrak{p}_{n-1})=\mathfrak{p}_n \in \mathcal{M}_\mathbf{x}^{\Theta(\mathfrak{X})}\Longrightarrow \{\mathfrak{p_n}\}_{n\in \mathbb{N}}\longrightarrow \mathfrak{p}' \in \underset{\mathfrak{p}\in 
    \mathcal{M}_1^+}
    {\arg\min} \text{ \emph{MMD}}_\kappa(\mathbb{P}_\mathbf{x},\mathfrak{p}) \text{ and } \mathfrak{p}'\in \mathcal{M}_\mathbf{x}^{\Theta(\mathfrak{X})}.
    \]
    \end{enumerate}
\end{theorem}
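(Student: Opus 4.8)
The plan is to lean on two facts that together trivialise the optimisation geometry. The first is that myopicity forces the target $\mathbb{P}_\mathbf{x}$ itself to lie inside the feasible family $\mathcal{M}_\mathbf{x}^{\Theta(\mathfrak{X})}$: by Definition~\ref{def::gen_myop}, $\mathbf{x}$ being $\Theta(\mathfrak{X})$-myopic means there is a lens operator $\mathbf{U}$ with $\mathbb{P}_\mathbf{Ux}=\mathbb{P}_\mathbf{x}$, i.e. $\mathbb{P}_\mathbf{x}\in\mathcal{M}_\mathbf{x}^{\Theta(\mathfrak{X})}$. The second is that, since $\kappa$ is characteristic, $\MMD_\kappa$ is a genuine metric on $\mathcal{M}_1^+$ (as recalled before Lemma~\ref{lemma::myop_by_the_representer_in_charct_RKHS}), so $\MMD_\kappa(\mathbb{P}_\mathbf{x},\mathfrak{p})\ge 0$ for every $\mathfrak{p}$, with equality exactly when $\mathfrak{p}=\mathbb{P}_\mathbf{x}$. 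Combining the two, $\MMD_\kappa(\mathbb{P}_\mathbf{x},\cdot)$ already attains its global floor $0$ at a point of $\mathcal{M}_\mathbf{x}^{\Theta(\mathfrak{X})}$, so the constrained and the unconstrained problems share the same value and the same singleton minimiser set $\{\mathbb{P}_\mathbf{x}\}$.

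Concretely I would proceed in four steps. Step 1: record $\mathbb{P}_\mathbf{x}\in\mathcal{M}_\mathbf{x}^{\Theta(\mathfrak{X})}$ from myopicity. Step 2: from the characteristic-kernel property deduce $\inf_{\mathfrak{p}\in\mathcal{M}_\mathbf{x}^{\Theta(\mathfrak{X})}}\MMD_\kappa(\mathbb{P}_\mathbf{x},\mathfrak{p})=\inf_{\mathfrak{p}\in\mathcal{M}_1^+}\MMD_\kappa(\mathbb{P}_\mathbf{x},\mathfrak{p})=0$, both infima being attained only at $\mathbb{P}_\mathbf{x}$. Step 3: use compactness of $\mathcal{M}_\mathbf{x}^{\Theta(\mathfrak{X})}$ in the metric space $(\mathcal{M}_1^+,\MMD_\kappa)$: it is then closed, so the limit $\mathfrak{p}'$ of the iterates $\mathfrak{p}_n\in\mathcal{M}_\mathbf{x}^{\Theta(\mathfrak{X})}$ lies in $\mathcal{M}_\mathbf{x}^{\Theta(\mathfrak{X})}$, which is the second conclusion; moreover, since $\mathfrak{p}\mapsto\MMD_\kappa(\mathbb{P}_\mathbf{x},\mathfrak{p})$ is ($1$-Lipschitz, hence) continuous, its infimum over the compact $\mathcal{M}_\mathbf{x}^{\Theta(\mathfrak{X})}$ is attained, so the convergence premise is not vacuous. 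Step 4: in the branch of the implication where every iterate lies in $\mathcal{M}_\mathbf{x}^{\Theta(\mathfrak{X})}$, the feasible set the strategy explores is $\mathcal{N}=\mathcal{M}_\mathbf{x}^{\Theta(\mathfrak{X})}$, so the convergence assumption yields $\mathfrak{p}'\in\arg\inf_{\mathfrak{p}\in\mathcal{M}_\mathbf{x}^{\Theta(\mathfrak{X})}}\MMD_\kappa(\mathbb{P}_\mathbf{x},\mathfrak{p})=\{\mathbb{P}_\mathbf{x}\}$ by Step 2; hence $\mathfrak{p}'=\mathbb{P}_\mathbf{x}$, which by Step 2 is also the unique element of $\arg\min_{\mathfrak{p}\in\mathcal{M}_1^+}\MMD_\kappa(\mathbb{P}_\mathbf{x},\mathfrak{p})$, giving the first conclusion.

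I expect the only genuine subtlety to be the passage from ``minimiser within the restricted family $\mathcal{M}_\mathbf{x}^{\Theta(\mathfrak{X})}$'' to ``global minimiser over $\mathcal{M}_1^+$'': this is precisely where the two standing hypotheses do their work — myopicity, so that the restricted problem can reach value $0$, and the kernel being characteristic, so that the value $0$ pins down $\mathfrak{p}'=\mathbb{P}_\mathbf{x}$ rather than merely some measure a degenerate pseudometric would fail to separate from it. Everything else is bookkeeping: making precise that $(\mathcal{M}_1^+,\MMD_\kappa)$ is a bona fide metric space (again via the characteristic kernel over the separable $E$) so that ``closed'', ``limit'' and ``continuous'' carry their usual meaning, and being explicit about identifying the strategy's feasible set $\mathcal{N}$ with $\mathcal{M}_\mathbf{x}^{\Theta(\mathfrak{X})}$ in that branch of the implication. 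Compactness is a secondary ingredient, needed only to keep $\mathfrak{p}'$ inside $\mathcal{M}_\mathbf{x}^{\Theta(\mathfrak{X})}$ and to ensure the constrained infimum is attained; translating the conclusion back to a statement about the random operators $\mathbf{U}$ — i.e. solving \eqref{eq::sop} — is then left to the accompanying corollary.
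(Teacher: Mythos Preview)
Your proposal is correct and follows essentially the same line as the paper's proof: both observe that myopicity places $\mathbb{P}_\mathbf{x}$ (equivalently $\mathbb{P}_\mathbf{Vx}$) inside $\mathcal{M}_\mathbf{x}^{\Theta(\mathfrak{X})}$, so the restricted infimum of $\MMD_\kappa(\mathbb{P}_\mathbf{x},\cdot)$ equals the global one (namely $0$), whence any restricted minimiser is automatically a global minimiser; compactness then traps the limit $\mathfrak{p}'$ in $\mathcal{M}_\mathbf{x}^{\Theta(\mathfrak{X})}$. Your write-up is slightly more explicit than the paper's --- you spell out uniqueness of the global minimiser via the characteristic kernel and note the $1$-Lipschitz continuity of $\mathfrak{p}\mapsto\MMD_\kappa(\mathbb{P}_\mathbf{x},\mathfrak{p})$ --- but the skeleton is identical.
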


Logically, any way of obtaining a sequence in a subset $\mathcal{N}\subset\mathcal{M}_1^+$ whose limit optimizes $\min$ MMD$_\kappa(\mathbb{P}_\mathbf{x},\mathfrak{p})$ in $\mathcal{N}$~---~like \citep{arbel_maximum_2019,mroueh_convergence_2021}~---, can be used to obtain such a sequence in $\mathcal{M}_\mathbf{x}^{\Theta(\mathfrak{X})}$. Under Theorem \ref{th::pvx_convergence}, we know that such sequence has a limit in $\mathcal{M}_\mathbf{x}^{\Theta(\mathfrak{X})}$, and that the limit will also be a global optimum in $\mathcal{M}_1^+$. The usefulness of this is made clear in the following corollary, which also give us the conditions to write Theorem \ref{th::pvx_convergence} in terms of operators on $\Theta(\mathfrak{X})$. This corollary will allow us to solve equation \ref{eq::sop} given a large enough sample size for $\widehat{\MMD}_\kappa$, and a proper way of sampling realizations of random operators.

\begin{colorary}[Convergence to a lens operator]\label{col::convergence_to_V}
    Consider $\mathbf{x}$ a random variable on $(E,\mathcal{T})$~--- a \emph{separable} metric space~--- and $\mathbf{U}$ a \emph{continous} random operator taking values on $\Theta(\mathfrak{X})\subset \mathcal{C}(\mathfrak{X})$. Further consider the associated \emph{RKHS} $\mathcal{H}$ of functions on $E$ with \emph{characteristic} kernel $\kappa$ and the induced \emph{MMD} metric on $\mathcal{M}_1^+$. Under this conditions, if $\Theta(\mathfrak{X})$ is \emph{compact} and $\mathbf{x}$ is $\Theta(\mathfrak{X})$-\emph{myopic}, we have that
    \begin{enumerate}
    
    \item[] Given an iterative convergence strategy $\mathfrak{F}$ such that $\mathfrak{F}(\mathfrak{p}_{n-1}) = \mathfrak{p}_n \in \mathcal{N}\subset \mathcal{M}_1^+$ and $\{\mathfrak{p}_n\}_{n\in\mathbb{N}} \longrightarrow \mathfrak{p}'\in 
    \underset{\mathfrak{p}\in \mathcal{N}}{\arg\inf} 
    \text{ \emph{MMD}}_\kappa(\mathbb{P}_\mathbf{x},\mathfrak{p})$, it follows that:
    \[\{\mathbf{U}_n\}_{n\in\mathbb{N}} \text{ such that } \mathfrak{F}(\mathbb{P}_{\mathbf{U}_{n-1}\mathbf{x}})=\mathbb{P}_{\mathbf{U}_n\mathbf{x}} \Longrightarrow \text{\emph{MMD}}_\kappa(\mathbb{P}_\mathbf{x}, \mathbb{P}_{\mathbf{U}_n\mathbf{x}})\longrightarrow0, \text{ and } \{\mathbf{U}_n\}_{n\in\mathbb{N}}\overset{a.s}{\longrightarrow} \mathbf{V}\in \Theta(\mathfrak{X}).\] 
    \end{enumerate}
\end{colorary}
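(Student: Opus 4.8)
The plan is to derive the corollary from Theorem~\ref{th::pvx_convergence} by first lifting the compactness hypothesis from the operator side to the measure side, and then reading the measure-level conclusion back as a statement about the operators $\mathbf{U}_n$. \textbf{Step 1 (compactness transfer).} The extra word ``continuous'' in the corollary's hypothesis is there precisely so that the pushforward $U\mapsto\mathbb{P}_{U\mathbf{x}}$ is a continuous map from $\Theta(\mathfrak{X})$ into $(\mathcal{M}_1^+,\text{MMD}_\kappa)$, whence the mixture map $\mathbb{Q}\mapsto\int_{\Theta(\mathfrak{X})}\mathbb{P}_{U\mathbf{x}}\,d\mathbb{Q}(U)$ on $\mathcal{M}_1^+(\Theta(\mathfrak{X}))$ is continuous as well. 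Since $\Theta(\mathfrak{X})$ is a compact metric space it is Polish, so $\mathcal{M}_1^+(\Theta(\mathfrak{X}))$ is weakly compact by Prokhorov's theorem; and $\mathcal{M}_\mathbf{x}^{\Theta(\mathfrak{X})}$ is exactly the image of that compact set under the continuous mixture map, hence compact. This is the hypothesis Theorem~\ref{th::pvx_convergence} needs.

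\textbf{Step 2 (invoke the theorem and pin down the minimiser).} Setting $\mathfrak{p}_n:=\mathbb{P}_{\mathbf{U}_n\mathbf{x}}$, the defining relation $\mathfrak{F}(\mathbb{P}_{\mathbf{U}_{n-1}\mathbf{x}})=\mathbb{P}_{\mathbf{U}_n\mathbf{x}}$ says exactly that $\mathfrak{F}(\mathfrak{p}_{n-1})=\mathfrak{p}_n\in\mathcal{M}_\mathbf{x}^{\Theta(\mathfrak{X})}$, so Theorem~\ref{th::pvx_convergence} applies and gives $\mathfrak{p}_n\to\mathfrak{p}'\in\arg\min_{\mathfrak{p}\in\mathcal{M}_1^+}\text{MMD}_\kappa(\mathbb{P}_\mathbf{x},\mathfrak{p})$ with $\mathfrak{p}'\in\mathcal{M}_\mathbf{x}^{\Theta(\mathfrak{X})}$. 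Myopicity of $\mathbf{x}$ gives a lens operator and hence $\mathbb{P}_\mathbf{x}\in\mathcal{M}_\mathbf{x}^{\Theta(\mathfrak{X})}$, so $\min_{\mathfrak{p}\in\mathcal{M}_1^+}\text{MMD}_\kappa(\mathbb{P}_\mathbf{x},\mathfrak{p})=0$; because $\kappa$ is characteristic, $\text{MMD}_\kappa$ is a metric, so $\text{MMD}_\kappa(\mathbb{P}_\mathbf{x},\mathfrak{p}')=0$ forces $\mathfrak{p}'=\mathbb{P}_\mathbf{x}$. Continuity of the metric then yields $\text{MMD}_\kappa(\mathbb{P}_\mathbf{x},\mathbb{P}_{\mathbf{U}_n\mathbf{x}})\to 0$, the first half of the claim, and $\mathfrak{p}'=\mathbb{P}_\mathbf{x}\in\mathcal{M}_\mathbf{x}^{\Theta(\mathfrak{X})}$ re-exhibits a lens operator (this is also the content of Lemma~\ref{lemma::myop_by_the_representer_in_charct_RKHS}).

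\textbf{Step 3 (from measures to operators).} To promote this to $\mathbf{U}_n\overset{a.s}{\longrightarrow}\mathbf{V}\in\Theta(\mathfrak{X})$ I would treat the $\mathbf{U}_n$ as random elements of the compact metric space $\Theta(\mathfrak{X})$: their laws are automatically tight, so along a subsequence $\text{Law}(\mathbf{U}_{n_k})$ converges weakly to some $\mathbb{Q}$, and the Skorokhod representation theorem produces a common probability space carrying $\widetilde{\mathbf{U}}_{n_k}\overset{d}{=}\mathbf{U}_{n_k}$ with $\widetilde{\mathbf{U}}_{n_k}\to\mathbf{V}$ a.s.\ and $\text{Law}(\mathbf{V})=\mathbb{Q}$. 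Continuity of $U\mapsto\mathbb{P}_{U\mathbf{x}}$ together with bounded convergence identifies $\mathbb{P}_{\mathbf{V}\mathbf{x}}=\lim_k\mathbb{P}_{\widetilde{\mathbf{U}}_{n_k}\mathbf{x}}=\lim_k\mathbb{P}_{\mathbf{U}_{n_k}\mathbf{x}}=\mathbb{P}_\mathbf{x}$ by Step~2, so the limit $\mathbf{V}$ is a lens operator, as required.

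\textbf{Main obstacle.} The genuinely delicate point is Step~3. Because $U\mapsto\mathbb{P}_{U\mathbf{x}}$ need not be injective, $\mathbb{P}_\mathbf{x}$ may have a whole family of operator pre-images, so one cannot expect the raw sequence $\{\mathbf{U}_n\}$ to converge on the original probability space without either assuming the lens operator is essentially unique or exploiting extra regularity of the iterative scheme $\mathfrak{F}$; the cleanly provable statement is subsequential almost-sure convergence on the Skorokhod space. One also has to be explicit about the topology put on $\mathcal{C}(\mathfrak{X})\supset\Theta(\mathfrak{X})$ and on $\mathfrak{X}$, so that ``$\mathbf{U}$ continuous'' really does make $U\mapsto\mathbb{P}_{U\mathbf{x}}$ continuous into $(\mathcal{M}_1^+,\text{MMD}_\kappa)$ and so that compactness of $\Theta(\mathfrak{X})$ is meaningful. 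Once this topological bookkeeping is fixed, everything else --- the compactness transfer, the reduction to Theorem~\ref{th::pvx_convergence}, and the identification of the minimiser via the characteristic-kernel property --- is routine.
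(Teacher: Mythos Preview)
Your three-step arc --- transfer compactness from $\Theta(\mathfrak{X})$ to $\mathcal{M}_\mathbf{x}^{\Theta(\mathfrak{X})}$, invoke Theorem~\ref{th::pvx_convergence}, then lift the measure-level limit back to operators --- is exactly the paper's strategy. The execution differs, though. The paper collapses your Steps~1 and~3 into a single pointwise argument: for each $\omega$ it uses compactness of $\Theta(\mathfrak{X})$ to extract a limit $V(\omega)$ of the realisations $U_n(\omega)$, declares $\mathbf{U}_n\overset{a.s.}{\to}\mathbf{V}$, and then chains continuity with the a.s.\ $\Rightarrow$ in-distribution implication to obtain $\mathbb{P}_{\mathbf{U}_n\mathbf{x}}\to\mathbb{P}_{\mathbf{Vx}}$, which simultaneously yields compactness of $\mathcal{M}_\mathbf{x}^{\Theta(\mathfrak{X})}$ and the operator limit. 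Your route --- Prokhorov plus a continuous mixture map for compactness, and Skorokhod for the operator convergence --- is longer but more honest: you correctly flag that one only gets subsequential a.s.\ convergence on a Skorokhod space, and that non-injectivity of $U\mapsto\mathbb{P}_{U\mathbf{x}}$ obstructs full-sequence convergence on the original space. The paper's proof elides precisely these points (its ``compact $\Rightarrow$ every sequence converges'' and its $\omega$-by-$\omega$ extraction of a \emph{common} limit are both loose as written), so what its approach buys is brevity, while what yours buys is rigour and a clear statement of what is actually provable.
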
 %Note to self: The whole point of the Theorem is to make Corollary 3 easier to prove. The whole idea is that we know how to converge in a statistical submanifold N \subset M_1^+, when approximating the distribution P_x via sequences in N [papers cited]. What we dont know is, if N is generated by a sequence of operators acting on x, whether P_Unx will converge to a P_Vx such that V is lense for x. If we don't have this last guarantee, the other theorems simply state that our convergence will not be to a lens operator (not desired) or asymptotically to an operator V that is a lense operator, but not in Theta(X) (also, not desired). By Corollary 3, we know that if the network converges on a myopic population, then it is lens.  
In other words, Corollary \ref{col::convergence_to_V} shows that Theorem \ref{th::pvx_convergence} also imply that a sequence of operators $\{\mathbf{U}_n\}_{n\in\mathbb{N}}$ obained via $\mathfrak{F}$, will converge almost surely to a lens operator in $\Theta(\mathfrak{X})$~---~as long as $\Theta(\mathfrak{X})$ is compact, and $\mathbf{x}$ myopic. Thus, by Corollary \ref{col::convergence_to_V}, Equation \ref{eq::sop} will have a solution that is a lens operator for $\mathbf{x}$.

Now that we know that we can solve Equation \ref{eq::sop}, there are only two questions left\begin{enumerate}[noitemsep]
    \item In practice, how can we sample random operators to solve Equation \ref{eq::sop} in a differentiable manner?
    \item If we find a lens operator $\mathbf{V}$, can we still characterize the density $P_\mathbf{Vx}$ by the marginals $P_{V\mathbf{x}}$? I.e., is there an equivalent to \cite[Propositon 1]{cribeiroramallo2024} in this general theory?
\end{enumerate}
 Solving Question 1 will give us a way to obtain lens operators in $\Theta(\mathfrak{X})$-myopic populations in practice. Section \ref{sec::VGAN} will introduce such method. Solving Question 2 is important to the downstream task of outlier detection. It is immediate under the assumptions of Corollary \ref{col::convergence_to_V} by invoking the Disintegration and Radon-Nikodym Theorems \citep{faden_existence_1985, simonnet_measures_1996}. We included a more general result akin to \cite[Propositon 1]{cribeiroramallo2024} in the Appendix as such generality is not necessary in our setting.
\section{Adversarial Subspace Generation: $\mathbf{V}$-GAN}\label{sec::VGAN}
In this section, we will employ our previous theoretical findings to propose a method for sampling a lens operator. We will describe our setting and propose our method, and then propose a way to identify whether $\mathbf{V}$ is a lens operator or not. A pseudo-code of the training is included in the Appendix

\subsection{Subspace Generation with MMD-GANs} \label{subsec::VGAN-algorithm}

Our goal is to find a way to sample a lens operator $\mathbf{V}$. I.e., we want to approximate the sampling function of $\mathbf{V}$ using a parametric model. In particular, we aim to learn a parametric function $G_\theta$, from an arbitrary latent space $\mathcal{Z}$ to the space of operators $\Theta(\mathfrak{X})$. The goal is that, when $G_\theta$ is composed with a uniform random variable $\mathbf{z}$ in $\mathcal{Z}$, $\mathbb{P}_{G_\theta(\mathbf{z})} = \mathbb{P}_\mathbf{V}$. We do so by minimizing the loss function: 
\begin{equation}\label{eq::loss}
\mathcal{L}(\theta) = \widehat{\text{MMD}}^2_\kappa(\mathbb{P}_\mathbf{x},\mathbb{P}_{G_\theta(\mathbf{z})\mathbf{x}}).
\end{equation}

Approximating sampling functions is a common problem in the machine learning literature, being the main use case of generative models \citep[Chapter 20]{Goodfellow-et-al-2016}.
In particular, Generative Moment Matching Networks (MMD-GANs) use the squared sample MMD as their loss function, written as
\[
\mathcal{L}(\theta) = \widehat{\text{MMD}}^2_\kappa(\mathbb{P}_\mathbf{x},\mathbb{P}_{F_\theta(\mathbf{z})}),
\]
with $F_\theta:z\in \mathcal{Z} \longmapsto F_\theta(z) = \hat{x}\in E$ a generative network.
These networks guarantee convergence in distribution of $F_\theta(\mathbf{z})$ to $\mathbf{x}$ when minimizing the loss in terms of the parameters \citep{binkowski_demystifying_2021,arbel_maximum_2019,li_mmd_2017}. 
However, none of them guarantee convergence to a solution in $\mathcal{M}_\mathbf{x}^{{\Theta(\mathfrak{X})}}$ that is a global optimum also in $\mathcal{M}_1^+$~--- which we need for myopicity. 
Theorem~\ref{th::pvx_convergence} gives sufficient conditions that guarantee convergence within $\mathcal{M}_\mathbf{x}^{{\Theta(\mathfrak{X})}}$, and Corollary~\ref{col::convergence_to_V} writes it in terms of the space of operators $\Theta(\mathfrak{X})$.

As such, we will consider a neural network $G_\theta$ such that:\[
G_\theta: z\in \mathcal{Z} \longmapsto G_\theta(z) = U\in \Theta(\mathfrak{X}),
\]
with $\Theta(\mathfrak{X})$ compact. 
In practice, the architecture of $G_\theta$, the metric space $E$, and the space of operators $\Theta(\mathfrak{X})$ have to be defined case-by-case. We will study the case of axis-parallel subspace selection, as it is the most common setting in the literature of subspace search and subspace outlier detection \citep{aggarwal_outlier_2017}. 
We call this strategy of searching subspaces by generating them \emph{Subspace Generation}, and our proposed method, \VGAN. Section \ref{sec::otherdatatypes} in the Appendix contains examples of how one can apply the Myopic Subspace Theory and \VGAN~to different datatypes using different operators. 

\subsubsection{Axis-parallel Subspace Generation}
 Let $E=\mathbb{R}^d$ and $\Theta(\mathfrak{X})= Diag(\{0,1\})_{d\times d}$, separable and compact respectively. As matrix-vector multiplication with diagonal matrices is the same as the element-wise product of the vector and the diagonal, we will build $G_\theta$ such that $G_\theta(z) \in \{0,1\}^d$. Thus, the loss function of our network, given a set of samples $\{x_i\}_{i=1}^n$ and noise $\{z_j\}_{j=1}^n$, can be written as: \begin{equation}
\begin{split}
\mathcal{L}_\kappa\left(\{x_i\}_{i=1}^n, \{z_j\}_{j=1}^n;\theta\right) = \widehat{\text{MMD}}_\kappa^2(\mathbb{P}_\mathbf{x},\mathbb{P}_{G_\theta(\mathbf{z})\mathbf{x}}) = &\frac{1}{n(n-1)}\sum_{i=1}^n\underset{i\neq j}{\sum_{j=1}^n} \kappa(x_i, x_j) \\ +&\frac{1}{n(n-1)}\sum_{i=1}^n\underset{i\neq j}{\sum_{j=1}^n} \kappa(G_\theta(z_i)\odot x_i, G_\theta(z_j)\odot x_j) \\ -& \frac{2}{n^2}\sum_{i=1}^n\underset{i\neq j}{\sum_{j=1}^n} \kappa(x_i, G_\theta(z_j)\odot x_j), 
\end{split}
\end{equation}
with a characteristic kernel $\kappa$ \citep{gretton_kernel_2012}. To obtain a $G_\theta(z_j) \in \{0,1\}^d$ we will use the upper-softmax activation function, defined as: \[
\sigma_\text{us}(x) = u\left(\sigma_\text{sm}(x) - \frac{\overset{\rightarrow}{\mathbf{1}}}{d}\right),
\]
with $\sigma_\text{sm}$ the softmax activation, $u$ the element-wise unit-step function and $\overset{\rightarrow}{1/d}$ a vector of size $d$ with $\frac{1}{d}$ in each entry. As the unit-step function is not differentiable, we will use the softmax directly during backpropagation, similar to other binary-NN \citep{Goodfellow-et-al-2016}. We described the particular layers employed in the experiments in the Experimental Details in Section \ref{subsubsec::network}. 
\subsubsection{Kernel Learning for $\mathbf{V}$-GAN} \label{subsubsec::kernel_learn}

The literature of MMD-GANs also studies the case of using kernel learning, where $\kappa$ is now a trainable function $\kappa_\phi$. Particularly, \citeauthor{li_mmd_2017} provide a way to train such kernels while also maintaining the convergence guarantees. The resulting loss function can be written as: \begin{equation}\label{eq::kl_loss}
    \mathcal{L}_\text{kl}\left(\{x_i\}_{i=1}^n, \{z_j\}_{j=1}^n;\theta,\phi\right)) = \mathcal{L}_{\kappa_\phi}\left(\{x_i\}_{i=1}^n, \{z_j\}_{j=1}^n;\theta\right) - \sum_{i=1}^n\|x_i-\mathcal{E}_\phi^{-1}(\mathcal{E}_\phi(x_i))\|_2,
\end{equation}
with $\mathcal{E}_\phi$ and $\mathcal{E}_\phi^{-1}$ being an ecoder and decoder network, and $\kappa_\phi=\kappa \circ \mathcal{E}_\phi= \kappa(\mathcal{E}_\phi(\cdot),\mathcal{E}_\phi(\cdot))$. $\kappa_\phi$ will be a characteristic kernel as long as $\kappa$ is characteristic and $\mathcal{E}_\phi$ is injective \citep{berlinet_reproducing_2004}. The second addent of Equation \ref{eq::kl_loss} guarantees the injectivity \citep{binkowski_demystifying_2021}. Thus, the optimization problem becomes \citep{li_mmd_2017}:\begin{equation}
    \min_\theta \max_\phi \mathcal{L}_\text{kl}\left(\{x_i\}_{i=1}^n, \{z_j\}_{j=1}^n;\theta,\phi\right).
\end{equation}

\subsection{A Test for Myopicity}\label{subsec::myop_test}
In practice, we only have access to a sample of i.i.d realizations of $\mathbf{x}$. That is why, to assess whether the two random variables $\mathbf{x}$ and $\mathbf{Ux}$ have the same distribution, we need to use the following hypothesis test: 
\begin{equation}\label{test::Myop}
\begin{cases}
    H_0:\mathbb{P}_\mathbf{x} = \mathbb{P}_\mathbf{Ux},\\
    H_a: \mathbb{P}_\mathbf{x} \neq \mathbb{P}_\mathbf{Ux}.
\end{cases}
\end{equation}
As the sample MMD's asymptotic distribution is tabulated, one can use it for such statistical test.

In other words, we can test whether a given operator $\mathbf{U}$ is a lens operator for $\mathbf{x}$ by using the MMD test statistic \citep{gretton_kernel_2012} for the Test \ref{test::Myop}. This is ideal, as \citeauthor{gretton_kernel_2012} proved that the resulting test is asymptotically consistent\footnote{A test $\Delta$ is called consistent iff, given any level $\alpha$, the Type II error is $\beta=0$.}.%, and \citep{schrab_mmd_2023} gave specific Typer II error boundings for the finite case. 
Therefore, for a sufficiently large sample, we could study whether $\mathbf{U}$ is a lens operator for $\mathbf{x}$ with a probability of a false negative $\approx 0$.

\section{Experiments}\label{sec::experiments}
We evaluate different aspects of \VGAN~as follows. First, we examine its ability to recover a derived lens operator. Second, we compare its effectiveness in building one-class classification ensembles across 42 real-world datasets to nine competitors. Finally, we analyze its scalability in comparison to other subspace selection methods. We will start by describing the experimental setup. %FM: SS inconsistent %VA: I changed the order. It should increase readability, as one would first expect the hints about the experiments, not about the section structure.

\subsection{Experimental Details}
%This section covers the datasets, \VGAN's configuration, and the competitors. %VA: saved some space
This section has three parts. First, we describe the synthetic and real datasets for our experiments. Then, we describe \VGAN's configuration. Finally, we introduce our competitors. 

\subsubsection{Datasets}\label{subsubsec::datasets} 

\paragraph{Real} We used 42 normalized datasets from the benchmark study by \citeauthor{han_adbench_2022}, listed in Tables \ref{tab::Full-LOF-table}-\ref{tab::Full-CBLOF-table} in the appendix.
For those datasets with multiple versions, we chose the first in alphanumeric order. Details about each dataset are available in \citep{han_adbench_2022}.

\paragraph{Synthetic} Consider the random variables $\mathbf{x}_1,\mathbf{x}_2,\mathbf{x}_3\sim N(0,1)$. As data for our experiments in section \ref{subsec::extraction}, we will consider a 3-dimensional population $\mathbf{x}$ generated by randomly drawing points from $S_1 = \left<\mathbf{x}_1,\mathbf{x}_2,0\right>$ and $ S_2 = \left<0,0,\mathbf{x}_3\right>$ with probabilities $F_1 =F$ and $F_2 = 1-F$ respectively. In section \ref{subsec::scalability} we generate $n$ points from a $d$-dimensional Uniform distribution and vary $n$ and $d$ to study the scalability of various methods.

\subsubsection{Network Settings}\label{subsubsec::network}
\begin{figure}
\centering
    \begin{subfigure}{0.49\textwidth}
    \includegraphics[width=1\textwidth]{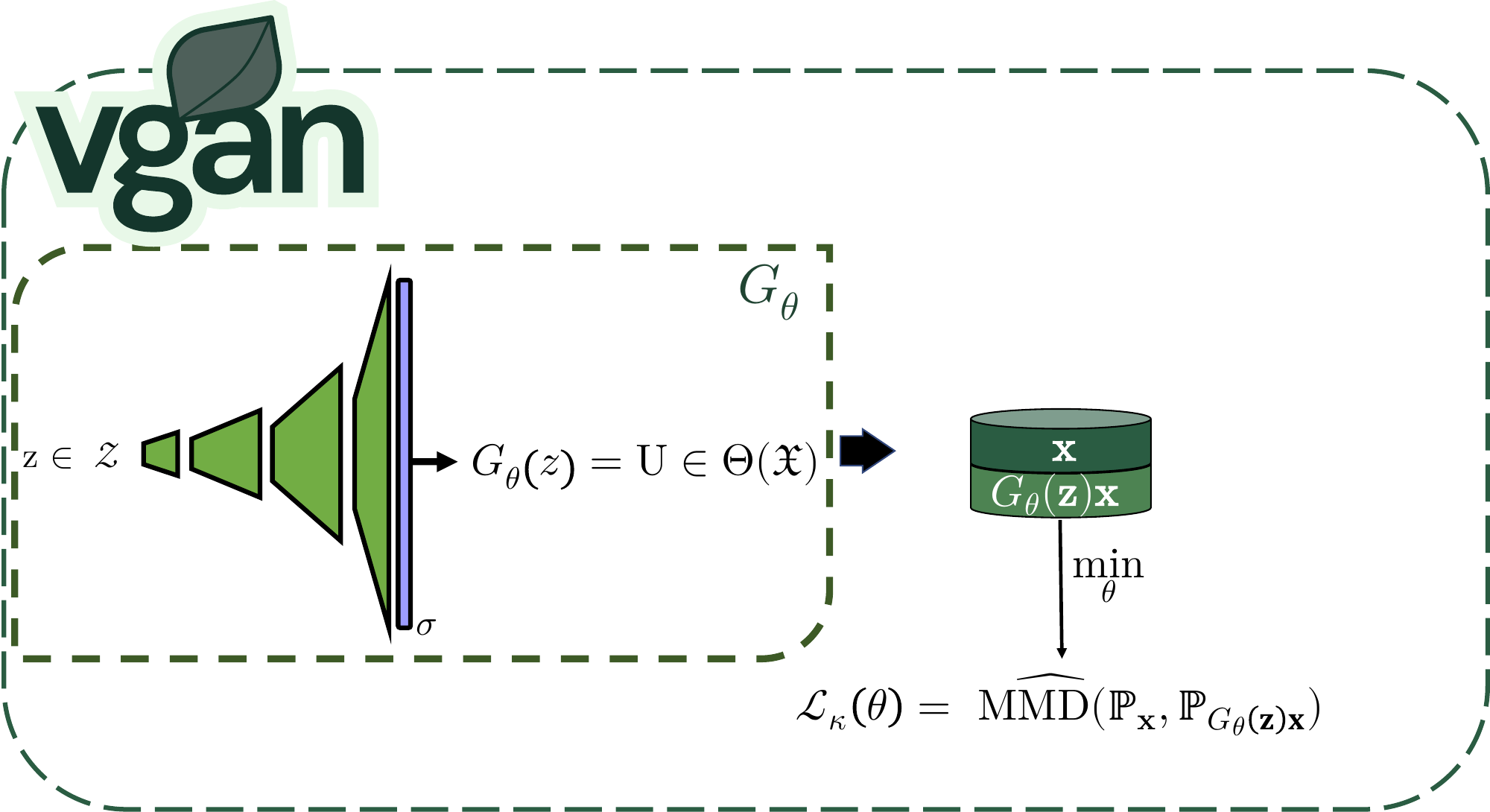}
    \caption{Without kernel learning}
    \end{subfigure}~
    \begin{subfigure}{0.49\textwidth}
    \includegraphics[width=1\textwidth]{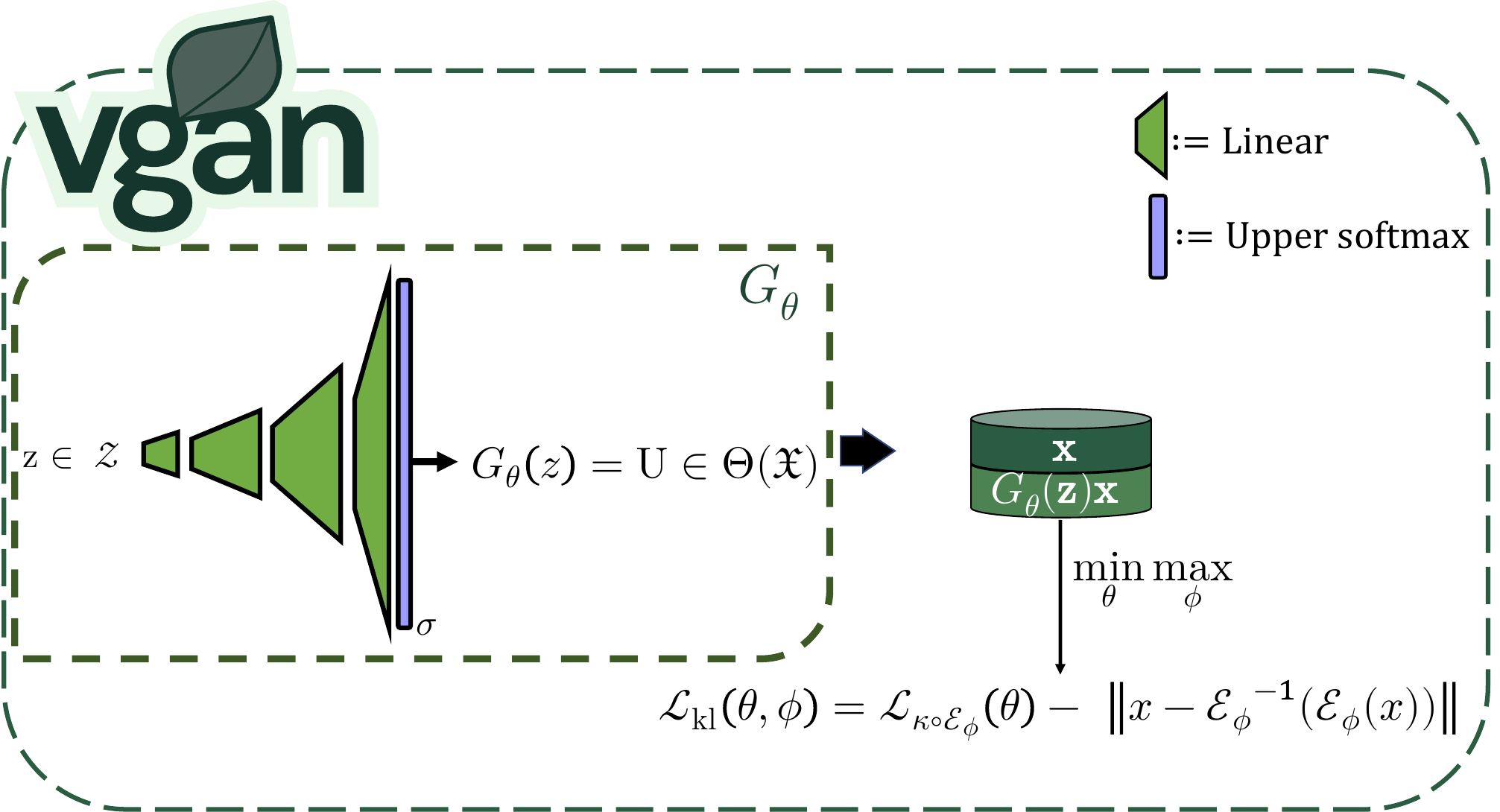}
    \caption{With kernel learning }
    \end{subfigure}
    \caption{Diagram of the network and the training without and with kernel learning, from left to right, respectively. (a) The network $G_\theta$ is trained to minimize the loss $\mathcal{L}_\kappa(\theta)$~---~the empirical estimator of the MMD \citep{gretton_kernel_2012} between samples of $\mathbf{x}$ and $G_\theta(\mathbf{z})\mathbf{x}$ using kernel $\kappa$. (b) The network $G_\theta$ is trained to minimize the same loss as before, but with $\kappa$ composed with $\mathcal{E}_\phi$, the encoder part of an autoencoder. At the same time, $\mathcal{E}_\phi$ is trained to maximize $\mathcal{L_{\kappa\circ\mathcal{E}_\phi}}$, while minimizing the reconstruction loss.}
    \label{fig::vgan_diagram}
\end{figure}
\paragraph{Generator} Figure \ref{fig::vgan_diagram} contains a diagram of the architecture and the training of the generator. It features four hidden linear layers with an increasing number of neurons: $h_{l_1}=\frac{d}{8}$, $h_{l_2}=\frac{d}{4}$, $h_{l_3}=\frac{d}{2}$, and $h_{l_4}=d$, where $d$ represents the data dimensionality. The input layer from the latent space has $\frac{d}{16}$ neurons, while the output layer employs the upper softmax activation $\sigma_\text{us}$~--- see Section \ref{sec::VGAN}.

\paragraph{Kernel} Unless stated otherwise, following the advice from \citep{li_mmd_2017}, we will use the kernel $\kappa_\phi = \varsigma\circ E_\phi$. Here, $\varsigma$ is a Gaussian kernel with the median heuristic bandwidth parameter \citep{garreau_large_2018} and $E_\phi$ an encoder trained by kernel learning~--- see Section \ref{subsubsec::kernel_learn}. 
Particularly, we use an upside-down version of the generator's hidden layers for $E_\phi$, with the identity function as the output layer. 

\paragraph{Training} We trainned the network for 2000 epochs, with minibatch gradient descent using the Adadelta optimizer \citep{zeiler_adadelta_2012} following preliminary results.
In particular, we use batches of size $500$, a learning rate of $lr_G = lr_E=0.007$ for the generator and the encoder, respectively. We set momentum ($0.99$) and weight-decay ($0.04$) \citep{Goodfellow-et-al-2016}. 
Additionally, we updated $E_\phi$ once every 5 epochs. 

\paragraph{Number of Subspaces} We generate $500$ samples of the lens operator $\mathbf{V}$ to approximate its distribution.
Thus, the number of subspaces depends on the number of unique values of its distribution. 

\subsubsection{Competitors \& Baselines} \label{subsubsec::competitors_and_baselines}

\begin{table}
\centering
\caption{Table of the different competitors in our experiments grouped by method type.}
\label{tab::competitors}
\begin{tabular}{lr}
\toprule
Type               & Competitors       \\ \midrule
Subspace Selection & CLIQUE \citeyear{agrawal_automatic_2005}, HiCS \citeyear{keller_hics_2012}, GMD \citeyear{trittenbach_dimension-based_2019} \\
Feature Selection  & CAE \citeyear{balin_concrete_2019}             \\
Embedding Method   & PCA \citeyear{doi:10.1080/14786440109462720}, UMAP \citeyear{healy_uniform_2024}, ELM \citeyear{xu_deep_2023} \\ \bottomrule
\end{tabular}
\end{table}

We selected popular and state-of-the-art (SotA) Subspace selection, Feature selection, and Embedding methods with openly available implementations as competitors; see Table~\ref{tab::competitors}. 
For all methods included, we used the recommended parameters and training regimes. Specific details for each competitor are in the appendix, Section \ref{subsec::occ_extended}. Additionally, we included regular Feature Bagging (FB) \citep{lazarevic_feature_2005} as a baseline in Section~\ref{subsubsec::baseline_comp}. 
We built homogeneous feature ensembles using off-the-shelf outlier detectors in the outlier detection experiments. With the Embedding method, we used the embedded version of the dataset to fit a singular off-the-shelf detector. Specifically, we utilized the most popular and best-performing detectors from \citep{han_adbench_2022}: LOF, kNN, CBLOF, ECOD, and COPOD \citep{breunig_lof_2000,aggarwal_outlier_2017,he_discovering_2003,li_ecod_2023,li2020copod}, with their respective recommended or default parameters.

All experiments were implemented in Python. We used popular implementations for all competitors and baselines and implemented \VGAN~in \texttt{PyTorch}. We used \texttt{pyod} for outlier detectors, except HiCS, which is in \texttt{Nim}. Experiments ran on a Ryzen 9 7900X CPU and an Nvidia RTX 4090 GPU. 

\subsection{Obtaining the Theoretical Lens Operator}\label{subsec::extraction} 
\begin{figure}
    \centering
    \begin{subfigure}{.3\linewidth}
        \centering
        \includegraphics[width=1\linewidth]{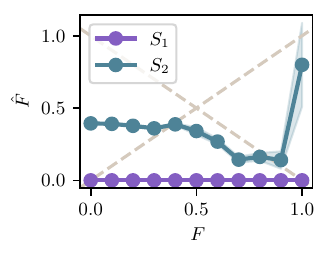}
        \caption{HiCS}
    \end{subfigure}
    \begin{subfigure}{.3\linewidth}
        \centering
        \includegraphics[width=1\linewidth]{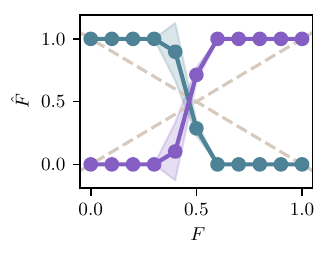}
        \caption{GMD}
    \end{subfigure}
    \begin{subfigure}{.3\linewidth}
        \centering
        \includegraphics[width=1\linewidth]{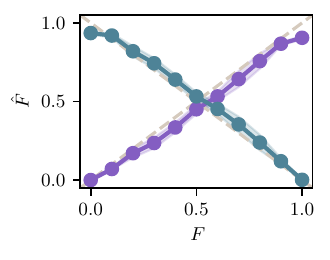}
        \caption{\VGAN}
    \end{subfigure}~
    \caption{Comparison of the relative scores for each subspace across different values of $F$. }
    \label{fig::Lens_experiment}
\end{figure}

To study the properties of the operator $\mathbf{V}$ obtained by \VGAN, we use synthetic data. Specifically, we consider a population similar to Example \ref{ex:intro:ex2}, where a lens operator can be directly calculated. Using the synthetic population described in Section \ref{subsubsec::datasets}, we define a random operator $\mathbf{U}$ with values $U_1 = \text{diag}(1,1,0)$ and $U_2 = \text{diag}(0,0,1)$, occurring with probabilities $F_1 = F$ and $F_2 = 1-F$, respectively. This operator is trivially a lens operator. The experiment aims to extract subspaces $S_1$ and $S_2$ with scores $\hat{F}_1$ and $\hat{F}_2$ as close as possible to $F_1$ and $F_2$, using a subspace selection method. The steps are as follows:
    \begin{enumerate}[nolistsep]
    \item\label{enum::extraction::first} Generate a dataset $D$ by sampling 10000 points from $\mathbf{x}$.
    \item Use $D$ to train a given subspace selection method.
    \item\label{enum::extraction::extracting}  Obtain the subspace qualities $\hat{F}_i$ of all selected subspaces $\left\{S_i\right\}$ and map them into $\left[0,1\right]$ probabilities by $\hat{F}'_{i} = \frac{\hat{F}_{i}}{\sum_j \hat{F}_{j}}$.
    \item\label{enum::extraction::last} Report the probabilties $\hat{F}'_1,\hat{F}'_2$ of subspaces $S_1$ and $S_2$.
    \item Repeat steps \ref{enum::extraction::first}$-$\ref{enum::extraction::last} 10 times. 
\end{enumerate}
In step \ref{enum::extraction::extracting} we considered the subspace selection methods HiCS and GMD \citep{keller_hics_2012, trittenbach_dimension-based_2019} apart from \VGAN. We could not include the subspace selection method CLIQUE \citep{agrawal_automatic_2005} as it does not report a quality metric for the subspaces. As we are using a $3$-dimensional dataset, it will be enough to employ a regular Gaussian kernel with the recommended bandwidth parameter for \VGAN's training. We reported the results in Figure \ref{fig::Lens_experiment}. As we can see, \VGAN~is the only method capable of properly extracting the true weight of each subspace.

\subsection{One-class Classification}\label{subsec::od}
This section presents outlier detection experiments using \VGAN~to build ensembles. The goal is to detect outliers in a test set $D^\text{test}$ after training on an inlier train set $D^\text{train}$, a problem known as one-class classification \citep{perera_one-class_2021}. The experimental process is as follows:
\begin{enumerate}[nolistsep]
    \item Split the dataset $D$ into a training set $D^\text{train}$ containing $80\%$ of the inliers from $D$, and a test set $D^\text{test}$ containing the remaining $20\%$ and the outliers. 
    \item\label{enum::occ_first} Obtain a collection of $K$ subspaces $\{S_i\}_{i=1}^K$ using a susbpace selection method.
    \item Given an outlier detector $\mathcal{M}$, obtain $\{\mathcal{M}_i\}_{i=1}^K$ by fitting $\mathcal{M}$ on each of the $K$ selected subspaces. As a dataset, use $D^\text{train}|_{S_i}$, the projection of $D^\text{train}$ into the subspace. 
    \item\label{enum::occ_last}  Evaluate the performance of each detector by reporting the AUC of the aggregated scores across all $D^\text{test}|_{S_i}$. 
    If $K=1$ (like in feature selection), use the score in $D^\text{test}|_{S}$.
    \item Repeat steps \ref{enum::occ_first} to \ref{enum::occ_last} 10 times.
\end{enumerate}

We aim to address two key questions about the performance of \VGAN's lens operator: (Q1) \emph{How does it compare to baselines for outlier detection, such as the full-space method and a randomly selected collection of subspaces (feature bagging)?} (Q2) \emph{How does it perform relative to other subspace selection methods and dimensionality reduction techniques?} Furthermore, we will evaluate its performance on datasets with and without a myopic distribution, providing insights into both the best-case scenario (where $\mathbf{V}$ acts as a lens operator) and the worst-case scenario (where $\mathbf{V}$ does not).

\subsubsection{Comparison with Baselines (Q1)}\label{subsubsec::baseline_comp}

\begin{figure}
    \centering
    \begin{subfigure}{.3\linewidth}
        \centering
        \includegraphics[width=1\linewidth]{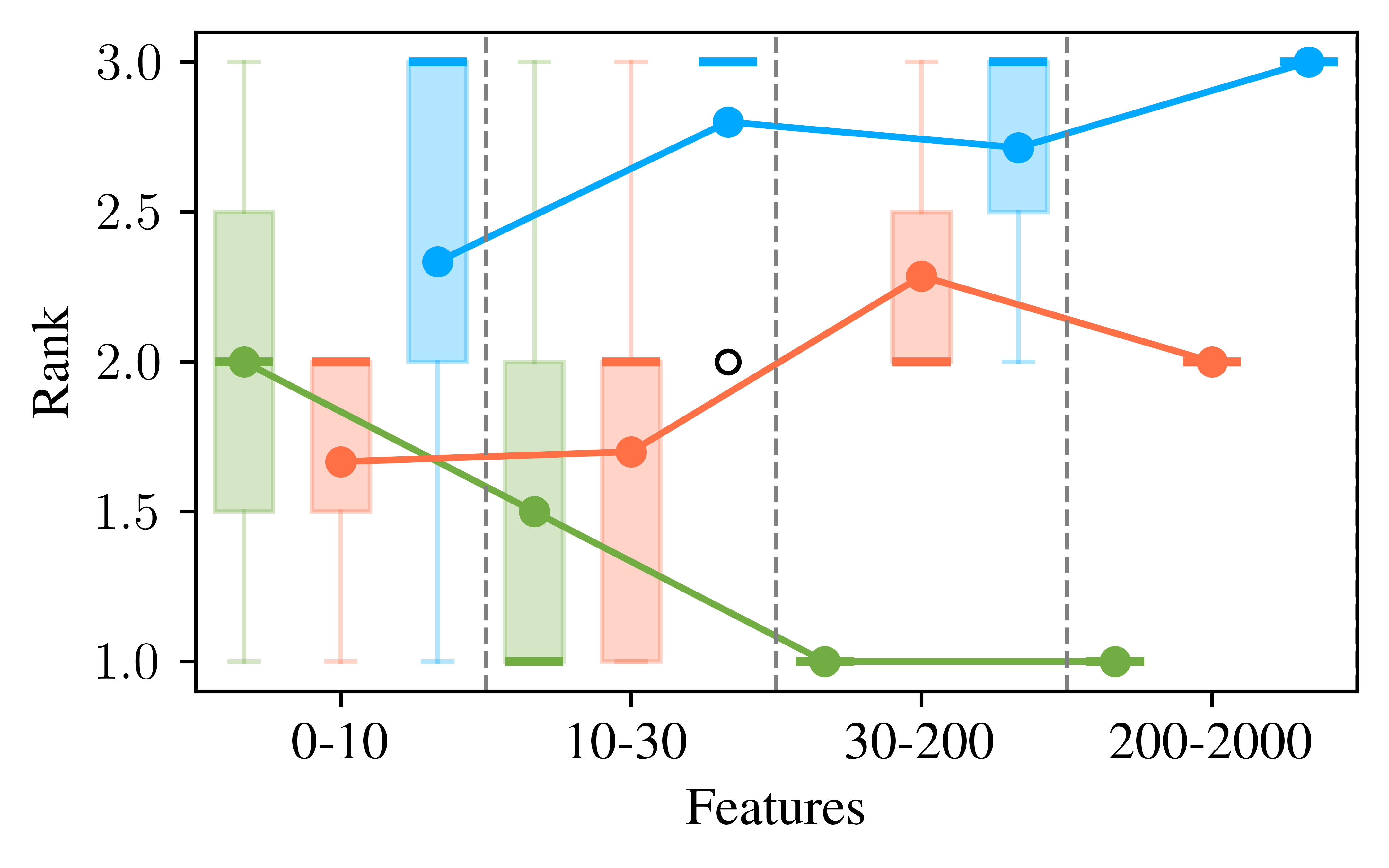}
        \caption{LOF}
    \end{subfigure}~
    \begin{subfigure}{.3\linewidth}
        \centering
        \includegraphics[width=1\linewidth]{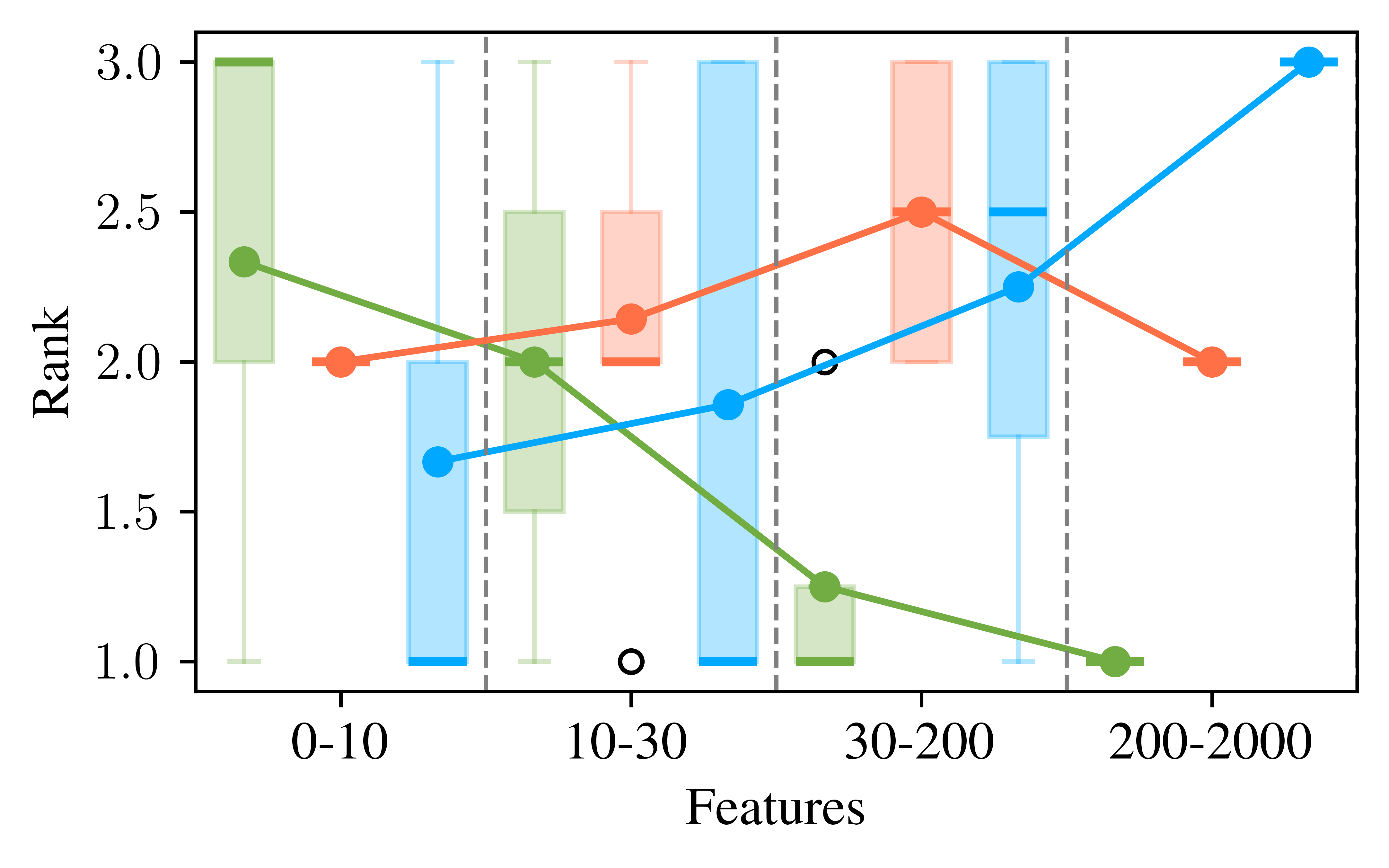}
        \caption{kNN}
    \end{subfigure}
    \begin{subfigure}{.3\linewidth}
        \centering
        \includegraphics[width=1\linewidth]{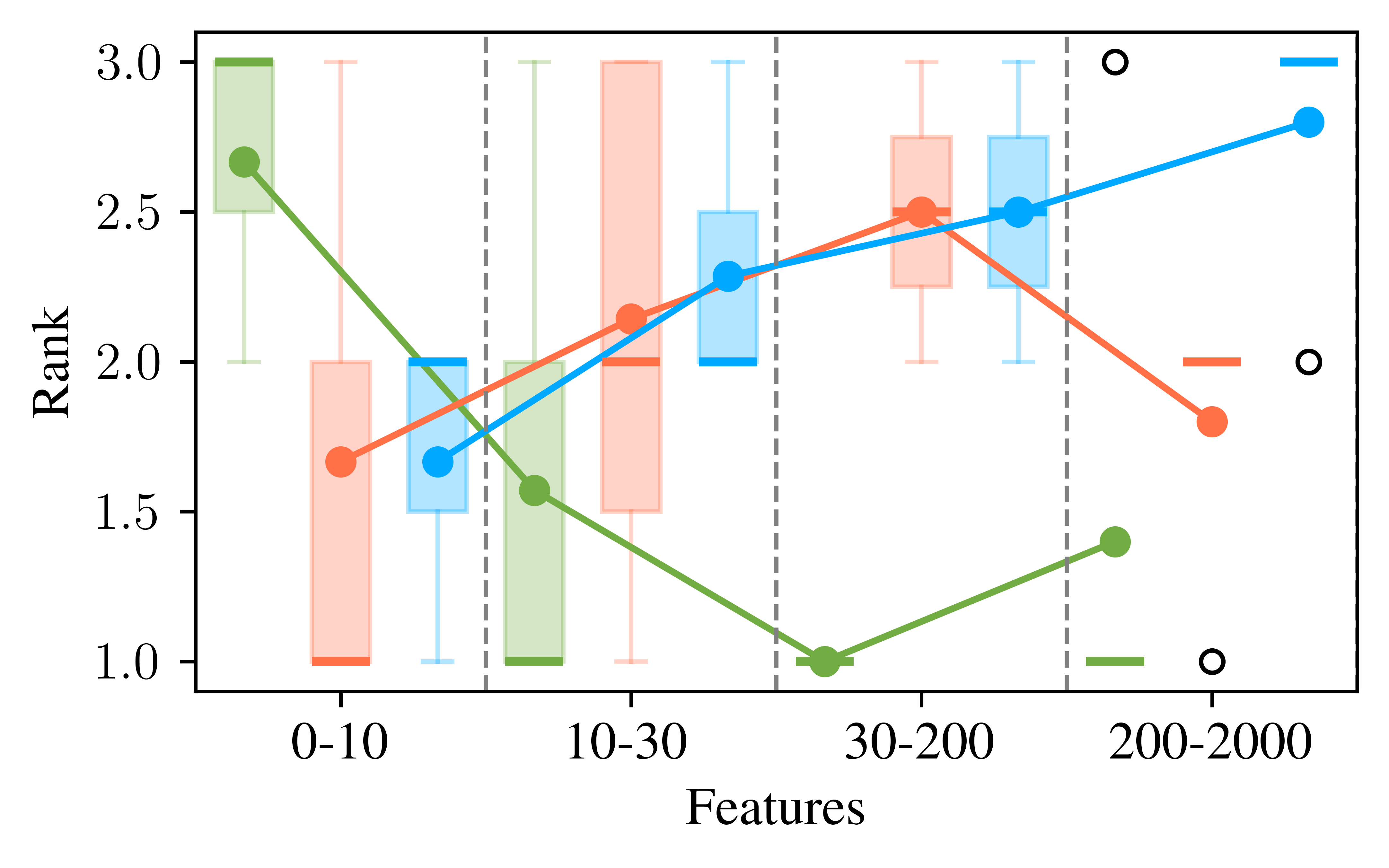}
        \caption{ECOD}
    \end{subfigure}\\
    \begin{subfigure}{.3\linewidth}
        \centering
        \includegraphics[width=1\linewidth]{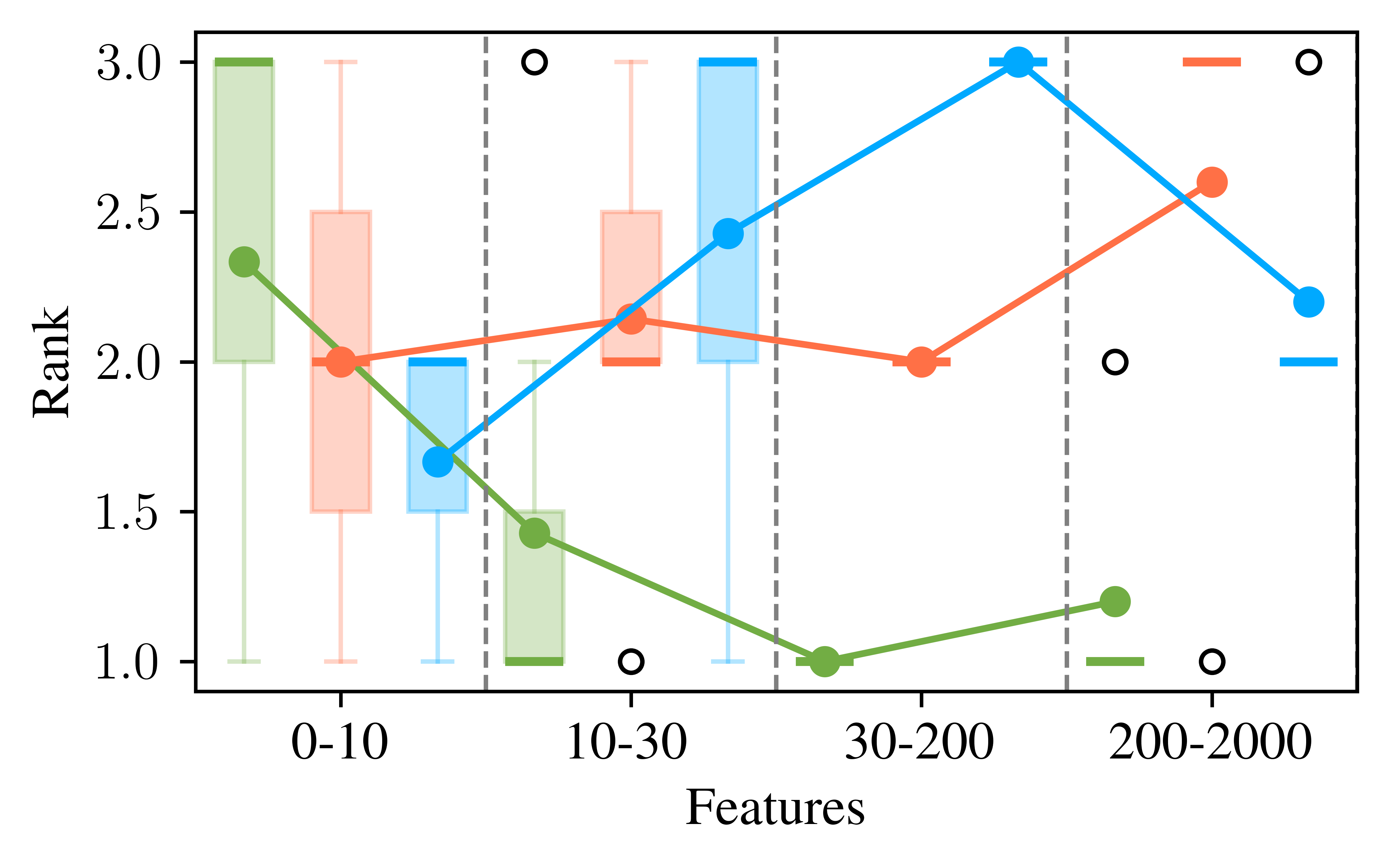}
        \caption{COPOD}
    \end{subfigure}
    \begin{subfigure}{.3\linewidth}
        \centering
        \includegraphics[width=1\linewidth]{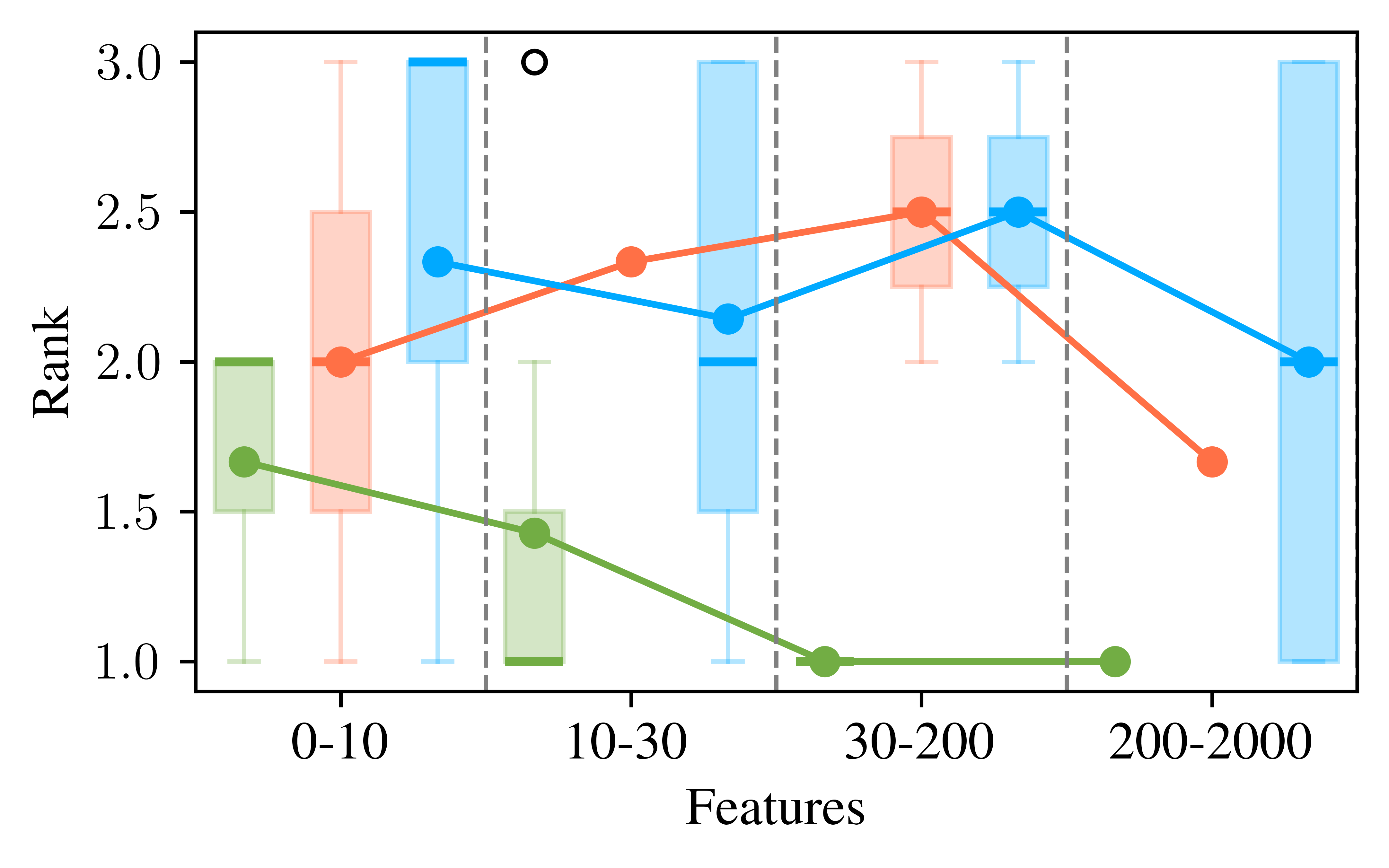}
        \caption{CBLOF}
    \end{subfigure}
    \begin{subfigure}{.3\linewidth}
        \centering
        \includegraphics[width=1\linewidth]{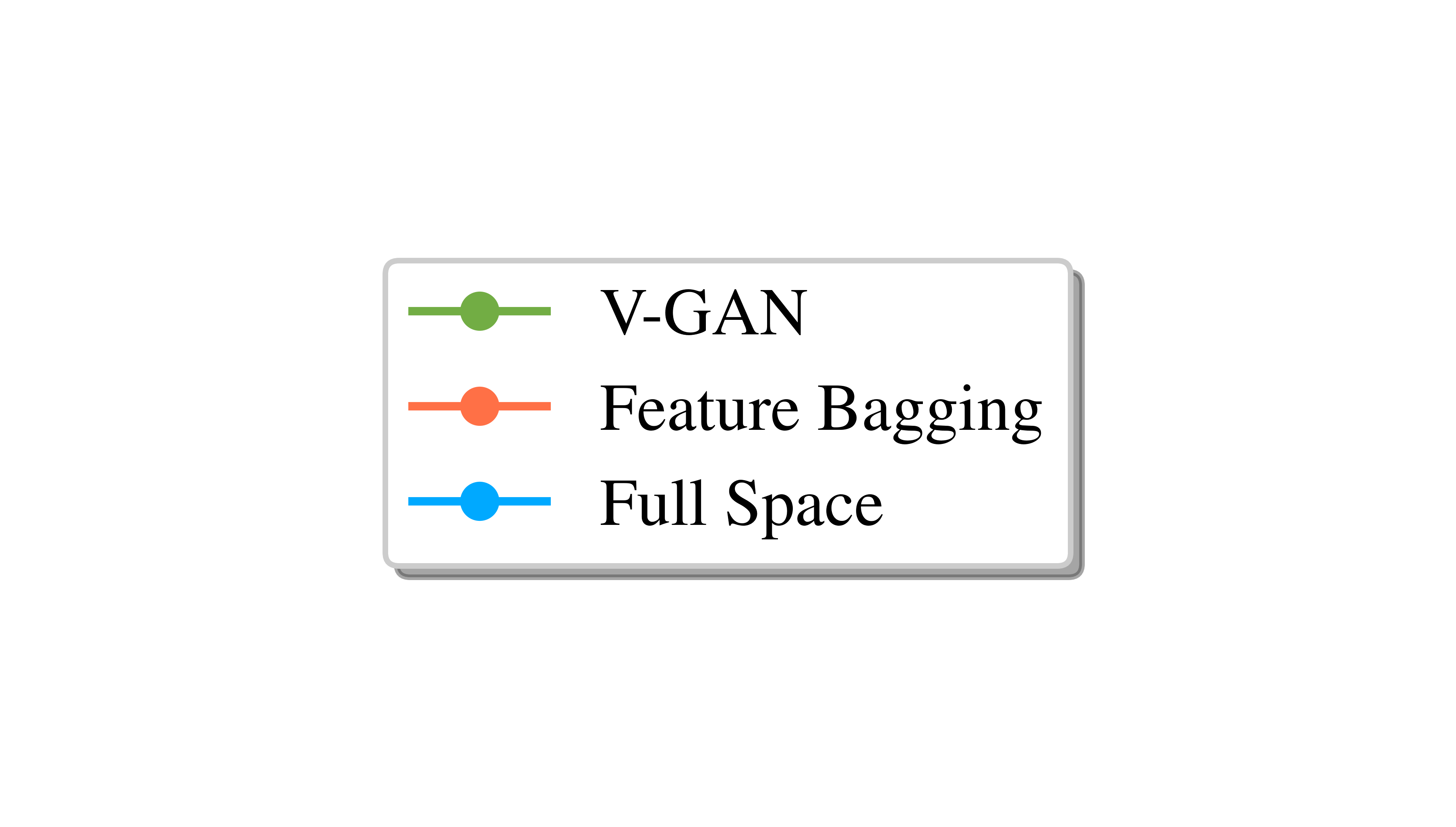}
        \caption{Legend}
    \end{subfigure}
    \caption{Boxplots of ranks of the comparison with baselines using myopic datasets with different numbers of features. The bins contained 3, 7, 5, and 6 datasets, respectively. } 
    \label{fig::AUC_boxplots}
\end{figure}

\setlength{\tabcolsep}{4pt}
\begin{table}[]
    \caption{Results of the Conover-Iman test for the rankings against baselines on myopic datasets}
    \label{tab::conover-iman_baselines_myopic}
    \centering
    \scriptsize
    \begin{tabular}{l|ccc|ccc|ccc|ccc|ccc}
    \toprule
    & \multicolumn{3}{c|}{LOF} & \multicolumn{3}{c|}{kNN} & \multicolumn{3}{c|}{ECOD} & \multicolumn{3}{c|}{COPOD} & \multicolumn{3}{c}{CBLOF} \\ 
    \midrule
    & {FB} & {None} & {\textbf{V-GAN}} 
    & {FB} & {None} & {\textbf{V-GAN}} 
    & {FB} & {None} & {\textbf{V-GAN}} 
    & {FB} & {None} & {\textbf{V-GAN}} 
    & {FB} & {None} & {\textbf{V-GAN}} \\ 
    \midrule
    FB &\cellcolor{gray!11}  & + +    & - -     &\cellcolor{gray!11}  &       & - -     &\cellcolor{gray!11}  &       & - -     &\cellcolor{gray!11}  &       & - -     &\cellcolor{gray!11}  &   & - -      \\
    None & - -  &\cellcolor{gray!11}   & - -     &     &\cellcolor{gray!11}    & - -     &     &\cellcolor{gray!11}    & - -     &     &\cellcolor{gray!11}    & - -     & - -   &\cellcolor{gray!11}   &      \\
    \textbf{V-GAN} & + +  & + +  &\cellcolor{gray!11}     & + +  & + +   &\cellcolor{gray!11}     & + +  & + +   &\cellcolor{gray!11}     & + +  & + +   &\cellcolor{gray!11}     & + +   & + +   &\cellcolor{gray!11}     \\ 
    \bottomrule
    \end{tabular}
\end{table}

In this section, we compare \VGAN~to two classical baselines in the subspace selection literature: the full-space method and Feature Bagging (FB) \citep{lazarevic_feature_2005}. For FB, we chose the number of subspaces $K$ from a set of five equidistant values from $50$ to $500$.
For each dataset, we selected the $K$ yielding the highest average AUC across 10 repetitions. To aggregate scores, we used a weighted average based on the probability assigned to each subspace, following \cite[Propositon 1]{cribeiroramallo2024}. For FB, this reduces to a simple average.

Furthermore, we will evaluate its performance on datasets with and without a myopic distribution, providing insights into both the best-case scenario (where $\mathbf{V}$ is a lens operator) and the worst-case scenario (where $\mathbf{V}$ is not). To study whether this is the case for each dataset, we will study the hypothesis test presented in Section \ref{subsec::myop_test}. We collected the test results in Tables \ref{tab::Full-LOF-table}-\ref{tab::Full-CBLOF-table}. Further details can be found in Section \ref{subsec::occ_extended} in the Appendix. 

\paragraph{Myopic Datasets.} Figure \ref{fig::AUC_boxplots} shows rankings contingent on dataset dimensionality group and average rankings. \VGAN~demonstrates consistent performance improvements as dimensionality increases, often outperforming baselines for all outlier detectors.
To assess statistical significance, we apply the Conover-Iman post-hoc test \citep{conover_multiple-comparisons_1979}, commonly used in outlier detection \citep{campos_evaluation_2016}, following a preliminary positive result from the Kruskal-Wallis test \citep{Kruskal01121952}. Table \ref{tab::conover-iman_baselines_myopic} contains the results, where `$+$' indicates the row method has a significantly lower median rank than the column method, and `$-$' indicates a significantly higher rank. One symbol marks p-values $\leq 0.1$, two symbols mark p-values $\leq 0.05$, and blanks indicate no significant difference. Entirely grayed-out subtables denote cases where the Kruskal-Wallis test, a prerequisite for using the Conover-Iman post-hoc test, was not passed.
\VGAN~outperformed all baselines across detectors.
The appendix summarizes the complete AUC results in Tables \ref{tab::Full-LOF-table}-\ref{tab::Full-CBLOF-table}.

\paragraph{Non-myopic Datasets.} Non-myopicity represents the worst-case scenario for \VGAN, as its guarantees rely on this property. Figure~\ref{fig::AUC_boxplots_baselines_non-myopic} in the Appendix shows rank boxplots for non-myopic datasets, similar to those for myopic datasets. It is evident that \VGAN's performance is not worse than any baseline, and the Conover-Iman test results (Table \ref{tab::conover-iman_baselines_non-myopic}) support this. \VGAN's performance in its worst-case scenario is no worse than that of a tuned feature bagging (FB) and outperforms the full-space approach for some outlier detectors.

\subsubsection{Comparison with Competitors (Q2)}
\begin{figure}
    \centering
    \begin{subfigure}{.45\linewidth}
        \centering
        \includegraphics[width=1\linewidth]{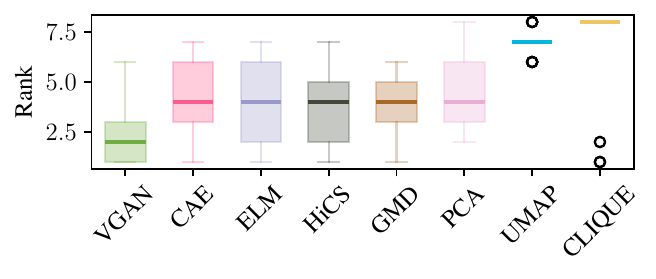}
        \caption{LOF}
    \end{subfigure}~
    \begin{subfigure}{.45\linewidth}
        \centering
        \includegraphics[width=1\linewidth]{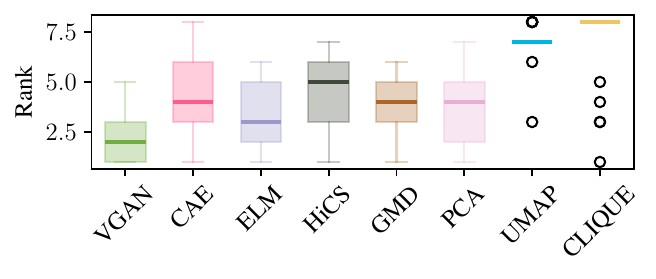}
        \caption{kNN}
    \end{subfigure}\\
    \begin{subfigure}{.45\linewidth}
        \centering
        \includegraphics[width=1\linewidth]{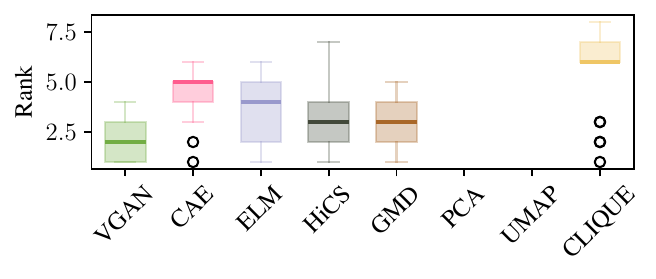}
        \caption{ECOD}
    \end{subfigure}~
    \begin{subfigure}{.45\linewidth}
        \centering
        \includegraphics[width=1\linewidth]{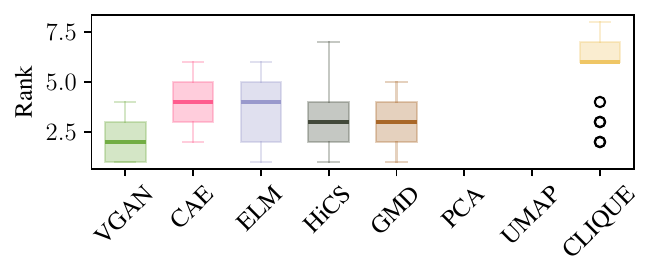}
        \caption{COPOD}
    \end{subfigure}\\
    \begin{subfigure}{.45\linewidth}
        \centering
        \includegraphics[width=1\linewidth]{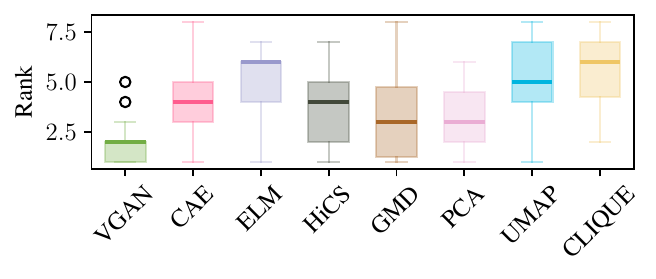}
        \caption{CBLOF}
    \end{subfigure}
    \caption{Boxplots of ranks of the comparison with our competitors using myopic datasets.}
    \label{fig::AUC_boxplots_competitors_myopic}
\end{figure}

We now compare \VGAN~to the competitors introduced in Section \ref{subsubsec::competitors_and_baselines} (see Table \ref{}). As before, we analyze performance separately for myopic and non-myopic datasets.

\paragraph{Myopic Datasets.} Figure \ref{fig::AUC_boxplots_competitors_myopic} plots the ranks of all competitors for myopic datasets. \VGAN consistently achieves the lowest median rank, with GMD typically being the closest competitor. Table~\ref{tab::conover-iman_competitors_myopic} contains the results of the Conover-Iman test. \VGAN~significantly outperforms all methods and is the best option for enhancing outlier detection performance under myopicity.

\paragraph{Non-myopic Datasets.} Figure \ref{fig::AUC_boxplots_competitors_non-myopic} plots ranks for the non-myopic case, and Table \ref{tab::conover-iman_competitors_non-myopic} contains the Conover-Iman test results. \VGAN~demonstrates a closer performance to its competitors on non-myopic datasets, as expected, but it is never statistically worse than any competitor. I.e., we can recommend  \VGAN~as a default approach for ensemble outlier detection using subspaces, which brings significant advantages in the myopic case while having no disadvantage in the absence of myopicity.

\begin{table}[]
    \caption{Results of the Conover-Iman test for the rankings against competitors on myopic datasets}
    \label{tab::conover-iman_competitors_myopic}
    \centering
    \scriptsize
    \begin{tabular}{llccccccccc|c}
    \toprule
    OD method & SS method & CAE  & HiCS & CLIQUE & ELM  & GMD  & PCA  & UMAP & \textbf{V-GAN} \\
    \midrule
    \multirow{8}{*}{LOF}     & CAE & \cellcolor{gray!11} &  & $+ +$ &  &  &  & $+ +$ & $- -$ \\
            & HiCS &  & \cellcolor{gray!11} & $+ +$ &  &  &  & $+ +$ & $- -$ \\
            & CLIQUE & $- -$ & $- -$ & \cellcolor{gray!11} & $- -$ & $- -$ & $- -$ &  & $- -$ \\
            & ELM &  &  & $+ +$ & \cellcolor{gray!11} &  &  & $+ +$ & $- -$ \\
            & GMD &  &  & $+ +$ &  & \cellcolor{gray!11} &  & $+ +$ & $- -$ \\
            & PCA &  &  & $+ +$ &  &  & \cellcolor{gray!11} & $+ +$ & $- -$ \\
            & UMAP & $- -$ & $- -$ &  & $- -$ & $- -$ & $- -$ & \cellcolor{gray!11} & $- -$ \\
            & \textbf{V-GAN} & $+ +$ & $+ +$ & $+ +$ & $+ +$ & $+ +$ & $+ +$ & $+ +$ & \cellcolor{gray!11} \\ %VA: IMO, this way ($++$ and $--$) is nicer
    \midrule
    \multirow{8}{*}{kNN}     & CAE & \cellcolor{gray!11} &  & $+ +$ & $- -$ &  &$ + $& $+ +$ & $- -$ \\
            & HiCS &  & \cellcolor{gray!11} & $+ +$ &$ + $&  &  & $+ +$ & $- -$ \\
            & CLIQUE & $- -$ & $- -$ & \cellcolor{gray!11} & $- -$ & $- -$ & $- -$ &  & $- -$ \\
            & ELM & $+ +$ &$-$& $+ +$ & \cellcolor{gray!11} &  &  & $+ +$ & $- -$ \\
            & GMD &  &  & $+ +$ &  & \cellcolor{gray!11} &  & $+ +$ & $- -$ \\
            & PCA &$-$&  & $+ +$ &  &  & \cellcolor{gray!11} & $+ +$ & $- -$ \\
            & UMAP & $- -$ & $- -$ &  & $- -$ & $- -$ & $- -$ & \cellcolor{gray!11} & $- -$ \\
            & \textbf{V-GAN} & $+ +$ & $+ +$ & $+ +$ & $+ +$ & $+ +$ & $+ +$ & $+ +$ & \cellcolor{gray!11} \\
    \midrule
    \multirow{8}{*}{ECOD}    & CAE & \cellcolor{gray!11} & $- -$ & $+ +$ &$ + $& $- -$ & \cellcolor{gray!10} & \cellcolor{gray!10} & $- -$ \\
            & HiCS & $+ +$ & \cellcolor{gray!11} & $+ +$ &  &  & \cellcolor{gray!10} & \cellcolor{gray!10} & $- -$ \\
            & CLIQUE & $- -$ & $- -$ & \cellcolor{gray!11} & $- -$ & $- -$ & \cellcolor{gray!10} & \cellcolor{gray!10} & $- -$ \\
            & ELM &$-$&  & $+ +$ & \cellcolor{gray!11} &  & \cellcolor{gray!10} & \cellcolor{gray!10} & $- -$ \\
            & GMD & $+ +$ &  & $+ +$ &  & \cellcolor{gray!11} & \cellcolor{gray!10} & \cellcolor{gray!10} & $- -$ \\
            & PCA & \cellcolor{gray!10} & \cellcolor{gray!10} & \cellcolor{gray!10} & \cellcolor{gray!10} & \cellcolor{gray!10} & \cellcolor{gray!10} & \cellcolor{gray!10} & \cellcolor{gray!10} \\
            & UMAP & \cellcolor{gray!10} & \cellcolor{gray!10} & \cellcolor{gray!10} & \cellcolor{gray!10} & \cellcolor{gray!10} & \cellcolor{gray!10} & \cellcolor{gray!10} & \cellcolor{gray!10} \\
            & \textbf{V-GAN} & $+ +$ & $+ +$ & $+ +$ & $+ +$ & $+ +$ & \cellcolor{gray!10} & \cellcolor{gray!10} & \cellcolor{gray!11} \\
    \midrule
    \multirow{8}{*}{COPOD}   & CAE & \cellcolor{gray!11} & $- -$ & $+ +$ &  & $- -$ & \cellcolor{gray!10} & \cellcolor{gray!10} & $- -$ \\
            & HiCS & $+ +$ & \cellcolor{gray!11} & $+ +$ &  &  & \cellcolor{gray!10} & \cellcolor{gray!10} & $- -$ \\
            & CLIQUE & $- -$ & $- -$ & \cellcolor{gray!11} & $- -$ & $- -$ & \cellcolor{gray!10} & \cellcolor{gray!10} & $- -$ \\
            & ELM &  &  & $+ +$ & \cellcolor{gray!11} &  & \cellcolor{gray!10} & \cellcolor{gray!10} & $- -$ \\
            & GMD & $+ +$ &  & $+ +$ &  & \cellcolor{gray!11} & \cellcolor{gray!10} & \cellcolor{gray!10} &$-$\\
            & PCA & \cellcolor{gray!10} & \cellcolor{gray!10} & \cellcolor{gray!10} & \cellcolor{gray!10} & \cellcolor{gray!10} & \cellcolor{gray!10} & \cellcolor{gray!10} & \cellcolor{gray!10} \\
            & UMAP & \cellcolor{gray!10} & \cellcolor{gray!10} & \cellcolor{gray!10} & \cellcolor{gray!10} & \cellcolor{gray!10} & \cellcolor{gray!10} & \cellcolor{gray!10} & \cellcolor{gray!10} \\
            & \textbf{V-GAN} & $+ +$ & $+ +$ & $+ +$ & $+ +$ &$ + $& \cellcolor{gray!10} & \cellcolor{gray!10} & \cellcolor{gray!11} \\
    \midrule
    \multirow{8}{*}{CBLOF}   & CAE & \cellcolor{gray!11} &  & $+ +$ &  &  &  &  & $- -$ \\
            & HiCS &  & \cellcolor{gray!11} & $+ +$ & $+ +$ &  &  & $+ +$ & $- -$ \\
            & CLIQUE & $- -$ & $- -$ & \cellcolor{gray!11} &  & $- -$ & $- -$ &  & $- -$ \\
            & ELM &  & $- -$ &  & \cellcolor{gray!11} & $- -$ & $- -$ &  & $- -$ \\
            & GMD &  &  & $+ +$ & $+ +$ & \cellcolor{gray!11} &  & $+ +$ &$-$\\
            & PCA &  &  & $+ +$ & $+ +$ &  & \cellcolor{gray!11} & $+ +$ & $- -$ \\
            & UMAP &  & $- -$ &  &  & $- -$ & $- -$ & \cellcolor{gray!11} & $- -$ \\
            & \textbf{V-GAN} & $+ +$ & $+ +$ & $+ +$ & $+ +$ &$ + $& $+ +$ & $+ +$ & \cellcolor{gray!11} \\

    \bottomrule
    \end{tabular}
\end{table} 

\subsection{Scalability}\label{subsec::scalability}

We now compare the scalability of \VGAN~with other subspace search methods. 
For these experiments, all methods were tested with their parameters set as in Section \ref{subsubsec::network}. 
The dataset consists of uniformly generated noise with $d \in \{0.1, 1.2, 2.3, 3.4, 4.5, 5.6, 6.7, 7.8, 8.9, 10\}\cdot10^3$ features. All experiments are run using a single CPU thread to ensure a fair comparison.

Figure \ref{fig::scalability} presents the results of scalability experiments, showing the runtime in hours required to obtain a collection of subspaces as a function of the number of features. \VGAN~is more scalable than all subspace search competitors: It is over 4, 30, and 8000 times faster than HiCS, GMD, and CLIQUE, respectively.

%FM: this figure is CHUNKY
%FM: also, clique go BRRRRRRRRRRRRRRRRRRRRRRRRRRRRRRRRRRRRRRR
%FM: consider log scale for y axis, so you can fit the other methods too. If you have the runtimes, ofc. I wouldn't want to be the one to actually run CLIQUE
\begin{figure}
    \centering
    \includegraphics[width=.5\linewidth]{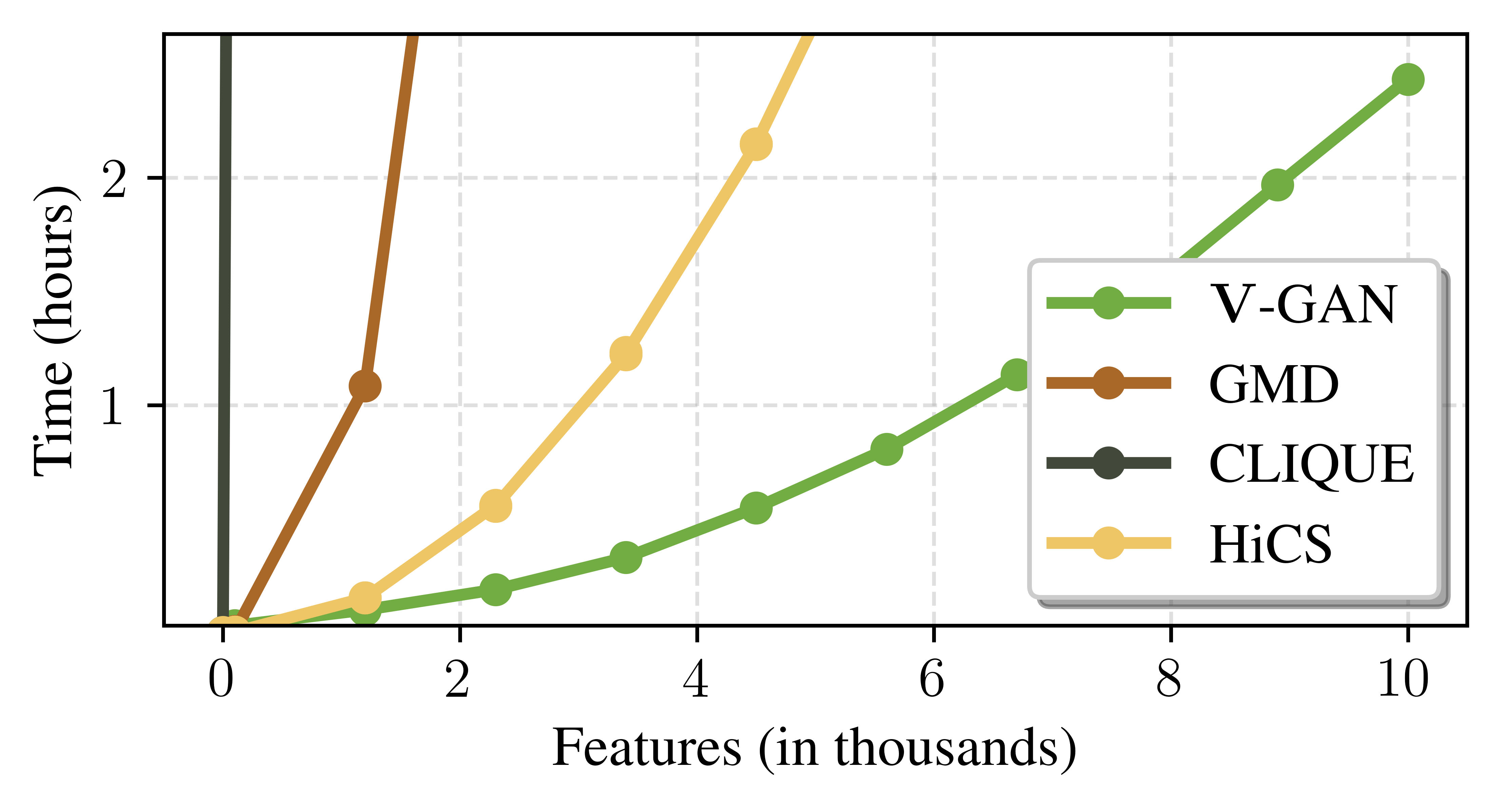}
    \caption{Time taken for performing subspace search, per feature count.}
    \label{fig::scalability}
\end{figure}

\section{Conclusions \& Future Work}

%FM: I would swap conclusions and future work, possibly with a paragraph to separate them further.
%JC: What you mean swap? Like make Future Work come before the conclusion? 

Subspace search can improve outlier detection for an off-the-shelf detector in tabular data \citep{muller_outlier_2012,keller_hics_2012,nguyen_near-linear_2014,trittenbach_dimension-based_2019}. 
%FM: 
In our experiments, however, we did not observe this improvement in all datasets, with the methods sometimes failing to beat n\"aive baselines.
Besides, existing subspace search methods can hardly be applied to non-tabular data due to poor scaling. 
The abovementioned factors hindered the use of such methods in practice.
%However, this performance increase did not happen for all datasets, sometimes being incapable to beat more simple baselines, as we showed. 
%This, combined with the fact that previously available subspace search methods can not be applied to novel data types due to computational complexity, hindered the use of subspace search methods in practice. 
This paper proposes a new theoretical framework that explains \textbf{when} subspace selection is helpful and, more importantly, \textbf{how} can we exploit it in our advantage. 
Using this theory, we introduced a new way of performing subspace selection, akin to subspace search methods, that is theoretically sound, scalable and usable in general scenarios~---~a strategy that we called \emph{subspace generation}. 
Our first attempt %FM: I thought VGAN was proven optimal, this statement can be bolder
in subspace generation, called \VGAN, demonstrate a significant performance increase againts other baselines and competitors in the downstream task of outlier detection~---~one of the main use cases for subspace search. 
In addition, our experiments suggest that the performance increase is conditioned on the data's distribution being myopic, a property we can infere from data without any prior knowledge. %FM: not sure about \emph. i missed this being a selling point in the intro %JC: Fair. I just wanted to make sure to try to kill the arguments of "well, it will only work under a specific type of data that you cannot know to have"
Furthermore, even when the data is not myopic, \VGAN~is still not outperformed by its competitors. 

Our findings not only validate the superior performance of \VGAN~for subspace selection, but also show the potential of our Myopic Subspace Theory (MST) beyond the use case of outlier detection on tabular data. %FM: correct? %JC: Yup
Thus, our most important future work is to assess whether MST can be useful with other datatypes, beyond the preliminary experiments in Section~\ref{sec::otherdatatypes}. 
Furthermore, as the operators introduced in MST are not limited to projections, other machine learning problems might benefit from it.
One example is constrastive learning, which relies on creating diverse augmentations or views of the data to learn invariant features \citep{chen_simple_2020}. 
A set of realizations from a lens operator are transformations that preserve the underlying data distribution, which could act as semantically meaningful "views" for contrastive pairs. 
                                                                                                                        
\section*{Aknowledgements}
This work was supported by the Ministry of Science, Research and the Arts Baden-Württemberg, project Algorithm Engineering for the Scalability Challenge (AESC).
\bibliographystyle{abbrvnat}
\bibliography{bib}

\appendix
\section{Theoretical Appendix}
This appendix contains the proofs for all the statements in Section \ref{sec::mst}, extra general statements, and a collection of examples of lens operators on different spaces. 
\subsection{Myopic Subspace Theory (Extension)}
We will first introduce all of the proofs of the statemes from Section \ref{sec::mst}, and then introduce all of the additional statements and proofs. To maintain the clarity of this section, we will re-introduce all of the statements before their proofs. 

\begin{customlemma}{1} \label{lemma::app_myop_by_the_representer_in_charct_RKHS}
Consider $\mathcal{H}$ a RKHS with a characteristic kernel $\kappa$; and $\mathbf{x}$, $\mathbf{U}$ and \emph{MMD} as previously defined. Further, consider $\mathbf{V}$ to be a lens operator for $\mathbf{x}$. Then, \[\underset{\mathbf{U}}{\arg\min}\emph{\MMD}_\kappa(\mathbb{P}_\mathbf{x},\mathbb{P_\mathbf{Ux}})\ni\mathbf{V}\]
\end{customlemma}
\begin{proof}
$\mathbf{V}$ lens for $\mathbf{x}\Longrightarrow$ $\mathbb{P}_{\mathbf{Vx}} = \mathbb{P}_\mathbf{x} \Longrightarrow \MMD_\kappa(\mathbb{P}_\mathbf{Vx}, \mathbb{P}_\mathbf{x}) = 0 \Longrightarrow \mathbf{V} \in \underset{\mathbf{U}}{\arg\min}\MMD(\mathbb{P}_\mathbf{x},\mathbb{P}_\mathbf{Vx})$.
The first implication comes from the definition of a lens operator, the second for $\kappa$ being characteristic, and the last one is trivial when considering $\MMD_\kappa(\mathfrak{p},\mathfrak{q})\geq 0, ~\forall\mathfrak{p,q} \in \mathcal{M}_1^+$.
\end{proof}

\begin{customthm}{2}\label{th::app_pvx_convergence}
Consider $\mathbf{x}$ a random variable on $(E,\mathcal{T})$~--- a \emph{separable} metric space~--- and $\mathbf{U}$ a random operator taking values on $\Theta(\mathfrak{X})\subset \mathcal{C}(\mathfrak{X})$. Further consider the associated \emph{RKHS} $\mathcal{H}$ of functions on $E$ with \emph{characteristic} kernel $\kappa$, the induced \emph{MMD} metric on $\mathcal{M}_1^+$. Under these conditions, if 
$\mathcal{M}_\mathbf{x}^{{\Theta(\mathfrak{X})}}$
is \emph{compact} and $\mathbf{x}$ $\Theta(\mathfrak{X})$-\emph{myopic}, we have that:
    \begin{enumerate}
    \item[] Given an iterative convergence strategy $\mathfrak{F}$ such that $\mathfrak{F}(\mathfrak{p}_{n-1}) = \mathfrak{p}_n \in \mathcal{N}\subset \mathcal{M}_1^+$ and $\{\mathfrak{p}_n\}_{n\in\mathbb{N}} \longrightarrow \mathfrak{p}'\in 
    \underset{\mathfrak{p}\in \mathcal{N}}{\arg\inf} 
    \text{ \emph{MMD}}_\kappa(\mathbb{P}_\mathbf{x},\mathfrak{p})$, it follows that:\[
    \mathfrak{F}(\mathfrak{p}_{n-1})=\mathfrak{p}_n \in \mathcal{M}_\mathbf{x}^{\Theta(\mathfrak{X})}\Longrightarrow \{\mathfrak{p_n}\}_{n\in \mathbb{N}}\longrightarrow \mathfrak{p}' \in \underset{\mathfrak{p}\in 
    \mathcal{M}_1^+}
    {\arg\min} \text{ \emph{MMD}}_\kappa(\mathbb{P}_\mathbf{x},\mathfrak{p}) \text{ and } \mathfrak{p}'\in \mathcal{M}_\mathbf{x}^{\Theta(\mathfrak{X})}.
    \]
    \end{enumerate}
\end{customthm}

\begin{proof}
    By the definition of $\mathfrak{F}$, we can construct a sequence $\{\mathfrak{p}_n\}_{n\in\mathbb{N}}\in \mathcal{M}_\mathbf{x}^{\Theta(\mathfrak{X})}$ such that \begin{equation}\label{eq:1}
    \{\mathfrak{p}_n\}_{n\in\mathbb{N}}\longrightarrow\mathfrak{p}'\in \underset{\mathfrak{p}\in 
    \mathcal{M}_\mathbf{x}^{\Theta(\mathfrak{X})}}
    {\arg\min} \MMD_\kappa(\mathbb{P}_\mathbf{x},\mathfrak{p})
    \end{equation}

    Since $\mathbf{x}$ is $\Theta(\mathfrak{X})-$myopic, $\exists\mathbf{V}:\Omega\longrightarrow\Theta(\mathfrak{X})$ that is a lens operator for $\mathbf{x}$. By Lemma \ref{lemma::app_myop_by_the_representer_in_charct_RKHS}, and the definion of $\mathcal{M}_\mathbf{x}^{\Theta(\mathfrak{X})}$, is clear that: \begin{equation}\label{eq:2}
        \mathbb{P}_\mathbf{Vx} \in \underset{\mathfrak{p}\in 
    \mathcal{M}_\mathbf{x}^{\Theta(\mathfrak{X})}}
    {\arg\min} \MMD_\kappa(\mathbb{P}_\mathbf{x},\mathfrak{p}).
    \end{equation}
    Additionally, by the definition of a lens operator,\begin{equation}\label{eq:3}
        \mathbb{P}_\mathbf{Vx} \in \underset{\mathfrak{p}\in 
    \mathcal{M}_1^+}
    {\arg\min} \MMD_\kappa(\mathbb{P}_\mathbf{x},\mathfrak{p}).
    \end{equation}

    Thus, by (\ref{eq:1}), (\ref{eq:2}), and (\ref{eq:3}), is clear that
    \[
    \mathfrak{p}' \in \underset{\mathfrak{p}\in 
    \mathcal{M}_1^+}
    {\arg\min} \MMD_\kappa(\mathbb{P}_\mathbf{x},\mathfrak{p}).
    \]
    Additionally, as $\{\mathfrak{p}_n\}_{n \in \mathbb{N}}$ is a sequence in a compact space, \[
    \{\mathfrak{p}_n\}_{n \in \mathbb{N}} \longrightarrow \mathfrak{p}' \in \mathcal{M}_\mathbf{x}^{\Theta(\mathfrak{X})}.
    \]
\end{proof}

\begin{customcor}{3}[Convergence to a myopic operator]\label{col::convergence_to_V}
    Consider $\mathbf{x}$ a random variable on $(E,\mathcal{T})$~--- a \emph{separable} metric space~--- and $\mathbf{U}$ a \emph{continous} random operator taking values on $\Theta(\mathfrak{X})\subset \mathcal{C}(\mathfrak{X})$. Further consider the associated \emph{RKHS} $\mathcal{H}$ of functions on $E$ with \emph{characteristic} kernel $\kappa$ and the induced \emph{MMD} metric on $\mathcal{M}_1^+$. Under this conditions, if $\Theta(\mathfrak{X})$ is \emph{compact} and $\mathbf{x}$ is $\Theta(\mathfrak{X})$-\emph{myopic}, we have that
    \begin{enumerate}
    
    \item[] Given an iterative convergence strategy $\mathfrak{F}$ such that $\mathfrak{F}(\mathfrak{p}_{n-1}) = \mathfrak{p}_n \in \mathcal{N}\subset \mathcal{M}_1^+$ and $\{\mathfrak{p}_n\}_{n\in\mathbb{N}} \longrightarrow \mathfrak{p}'\in 
    \underset{\mathfrak{p}\in \mathcal{N}}{\arg\inf} 
    \text{ \emph{MMD}}_\kappa(\mathbb{P}_\mathbf{x},\mathfrak{p})$, it follows that:
    \[\{\mathbf{U}_n\}_{n\in\mathbb{N}} \text{ such that } \mathfrak{F}(\mathbb{P}_{\mathbf{U}_{n-1}\mathbf{x}})=\mathbb{P}_{\mathbf{U}_n\mathbf{x}} \Longrightarrow \text{\emph{MMD}}_\kappa(\mathbb{P}_\mathbf{x}, \mathbb{P}_{\mathbf{U}_n\mathbf{x}})\longrightarrow0, \text{ and } \{\mathbf{U}_n\}_{n\in\mathbb{N}}\longrightarrow \mathbf{V}\in \Theta(\mathfrak{X}).\] 
    \end{enumerate}
\end{customcor}
\begin{proof}
    Consider $\{\mathfrak{p}_n\}_{n\in\mathbb{N}}\in \mathcal{M}_\mathbf{x}^{\Theta(\mathfrak{X})}$, such that $\mathfrak{F}(\mathfrak{p}_{n-1}) = \mathfrak{p}_n.$ By Zorn's Lemma, one can construct a parallel sequence $\{\mathbf{U}_n\}_{n\in \mathbb{N}}$ such that $\mathfrak{F}(\mathfrak{p}_{n-1})=\mathfrak{p}_n=\mathbb{P}_{\mathbf{U}_n\mathbf{x}}.$ As such, \[
    \{\mathbb{P}_{\mathbf{U}_n\mathbf{x}}\}\longrightarrow \mathfrak{p}' \in \underset{\mathfrak{p}\in \mathcal{M}_\mathbf{x}^{\Theta(\mathfrak{X})}}{\arg\inf}\MMD(\mathbb{P}_\mathbf{x},\mathfrak{p}).
    \]
    If $\mathcal{M}_\mathbf{x}^{\Theta(\mathfrak{X})}$ is compact, we can solve the remainder of the proof equivalently as done for Theorem \ref{th::app_pvx_convergence}. Thus, we will focus on proving such a statement. 
  
    $\Theta(\mathfrak{X})$ compact $\Longrightarrow$ $\{U_n\}_{n\in \mathbb{N}} \longrightarrow V$ for all sequences in $\Theta(\mathfrak{X})$. If we now consider $\{\mathbf{U}_n\}_{n\in\mathbb{N}}$ and $\mathbf{V}$ such that $\mathbf{U}_n(\omega) = U_n \longrightarrow V = \mathbf{V}(\omega)$ for almost all $\omega \in \Omega$, it is clear that: \[
    \mathbb{P}(\lim_n\mathbf{U}_n(\omega) = \mathbf{V}(\omega)) = 1.
    \]
    In other words, $\mathbf{U}_n\overset{a.s}{\longrightarrow}\mathbf{V}$. Therefore, since $\Theta(\mathfrak{X}) \subset \mathcal{C}(\mathfrak{X})$ and $E$ is a separabale metric space, by the definition of almost sure convergence it is clear that: \[\mathbf{U}_n\overset{a.s}{\longrightarrow}\mathbf{V} \Longrightarrow \mathbf{U}_n\mathbf{x}\overset{a.s}{\longrightarrow}\mathbf{Vx}. \]
    And lastly, by the convergence laws of random variables, we know that: \[
    \mathbf{U}_n\mathbf{x}\overset{a.s}{\longrightarrow}\mathbf{Vx} \Longrightarrow \mathbf{U}_n\mathbf{x}\overset{d}{\longrightarrow}\mathbf{Vx} \Longrightarrow
    \mathbb{P}_{\mathbf{U}_n\mathbf{x}}\longrightarrow\mathbb{P}_\mathbf{Vx}. 
    \]
\end{proof}

Now, we introduce the result mentioned at the end of Section \ref{subsec::optimality_guarantees}. This result motivates the way we aggregate in our outlier detection experiments. 
\begin{proposition}\label{prop::projected_myopicity}
    Consider $E$ a Radon space, $(\Omega, \mathcal{F}, \mathbb{P})$ a probability space, $\mathfrak{X}$ the space of random variables on $E$, and $\Theta(\mathfrak{X})$ the space of operators from $\mathfrak{X}$ to $\mathfrak{X}$. Further consider all $U \in \Theta(\mathfrak{X})$ to be defined on fibers of $E$, and
    $\mathbf{U}:\Omega\longrightarrow \Theta(\mathfrak{X}) \subset \mathcal{C}(\mathfrak{X})$ a lens operator for $\mathbf{x}\in \mathfrak{X}$. Lastly, consider the following conditions \begin{itemize}
        \item[$i)$] $\mathbf{U}$ is such that, given any two realizations $U_1$ and $U_2$ ($U_1\neq U_2$), if $\mathbb{P}_{U_1\mathbf{x}}(A) \neq 0 \Longrightarrow \mathbb{P}_{U_2\mathbf{x}}(A) = 0,$ for $A \in \mathcal{F}(\mathbf{Ux})$~---~i.e., all realizations are mutually exclusive. 
        \item[$ii)$] The set of all realizations of $\mathbf{U}$ is countable. 
        \item[$iii)$] There exists a meassure $\mu$ such that $\mu >> \mathbb{P}_{\mathbf{Ux}}$ and $\mu >>\mathbb{P}_{\mathbf{U}(\omega)\mathbf{x}}$, $\forall \omega\in\Omega.$
    \end{itemize}
    In this case, $\mathbb{P}_{\mathbf{x}}= \sum_{\omega\in\Omega}\mathbb{P}_\mathbf{U}(\mathbf{U}(\omega))\mathbb{P}_{\mathbf{U}(\omega)\mathbf{x}}$ and $P_{\mathbf{x}}= \sum_{\omega\in\Omega}\mathbb{P}_\mathbf{U}(\mathbf{U}(\omega))P_{\mathbf{U}(\omega)\mathbf{x}}$.
\end{proposition}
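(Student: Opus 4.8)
The plan is to reduce the proposition to the law of total probability for the mixture random variable $\mathbf{Ux}$, obtained by disintegrating along the (countably many) values of $\mathbf{U}$, and then to transfer the resulting identity of measures to densities by the Radon--Nikodym theorem. First, since $\mathbf{U}$ is a lens operator for $\mathbf{x}$, Definition~\ref{def::gen_myop} (Equation~\ref{eq::myop}) gives $\mathbb{P}_\mathbf{x}=\mathbb{P}_{\mathbf{Ux}}$; and by hypothesis~$iii$ we have $\mathbb{P}_\mathbf{x}=\mathbb{P}_{\mathbf{Ux}}\ll\mu$ and $\mathbb{P}_{\mathbf{U}(\omega)\mathbf{x}}\ll\mu$ for every $\omega$, so the densities $P_\mathbf{x}=d\mathbb{P}_\mathbf{x}/d\mu$ and $P_{\mathbf{U}(\omega)\mathbf{x}}=d\mathbb{P}_{\mathbf{U}(\omega)\mathbf{x}}/d\mu$ all exist. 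It therefore suffices to prove the identity of measures $\mathbb{P}_{\mathbf{Ux}}=\sum_{\omega\in\Omega}\mathbb{P}_\mathbf{U}(\mathbf{U}(\omega))\,\mathbb{P}_{\mathbf{U}(\omega)\mathbf{x}}$ and then differentiate it with respect to $\mu$.

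Next, because $E$ is a Radon space the disintegration theorem lets us write $\mathbb{P}_{\mathbf{Ux}}$ as the $\mathbb{P}_\mathbf{U}$-mixture of the conditional laws of $\mathbf{Ux}$ given $\mathbf{U}$, and by hypothesis~$ii$ the range of $\mathbf{U}$ is countable, so this mixture is the discrete sum
\[
\mathbb{P}_{\mathbf{Ux}}(A)=\sum_{U}\mathbb{P}_\mathbf{U}(U)\,\mathbb{P}\!\left(\mathbf{Ux}\in A\mid\mathbf{U}=U\right),\qquad A\in\mathcal{F}(\mathbf{Ux}),
\]
the sum running over the distinct realizations of $\mathbf{U}$, which we index by $\omega\in\Omega$ as in the statement. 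The delicate point is to identify the conditional law $\mathbb{P}(\mathbf{Ux}\in\cdot\mid\mathbf{U}=U)$ with $\mathbb{P}_{U\mathbf{x}}$: since every $U\in\Theta(\mathfrak{X})$ is defined on fibers of $E$, on the event $\{\mathbf{U}=U\}$ the realization of $\mathbf{Ux}$ is exactly that of $U\mathbf{x}$, and the mutual-exclusivity hypothesis~$i$ guarantees that for every $A$ carrying mass under $\mathbb{P}_{U\mathbf{x}}$ no competing realization contributes mass to $A$, so that the mass of $\{\mathbf{Ux}\in A\}$ is concentrated on a single atom and the conditioning does not distort the law. Combining this with $\mathbb{P}_\mathbf{x}=\mathbb{P}_{\mathbf{Ux}}$ yields the first asserted identity.

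Finally, the identity just proved exhibits $\mathbb{P}_\mathbf{x}$ as a countable sum of non-negative measures, each absolutely continuous with respect to $\mu$, with weights $\mathbb{P}_\mathbf{U}(\mathbf{U}(\omega))$ summing to $1$. Applying the monotone convergence theorem to its partial sums shows that $d/d\mu$ may be taken term by term, so
\[
P_\mathbf{x}=\sum_{\omega\in\Omega}\mathbb{P}_\mathbf{U}(\mathbf{U}(\omega))\,P_{\mathbf{U}(\omega)\mathbf{x}}\qquad\text{ $\mu$-a.e.,}
\]
and uniqueness of the Radon--Nikodym derivative promotes this to the pointwise-almost-everywhere equality claimed; hypothesis~$i$ moreover collapses the sum to a single nonzero term at $\mu$-almost every point, recovering the ``pointwise on the support'' form of the analogous statement of \citet{cribeiroramallo2024}.

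I expect the middle step — identifying the disintegration kernel of $\mathbf{Ux}$ along $\mathbf{U}$ with the family $\{\mathbb{P}_{U\mathbf{x}}\}$ — to be the only real obstacle, and it is precisely there that the hypothesis that each $U$ is defined on fibers of $E$ and the mutual-exclusivity condition~$i$ are indispensable: without them the conditional laws need not coincide with the $\mathbb{P}_{U\mathbf{x}}$ and the clean mixture representation fails. Reducing the disintegration to a countable sum via $ii$ and interchanging $d/d\mu$ with that sum via $iii$ are routine.
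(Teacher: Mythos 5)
Your proof is correct and follows essentially the same route as the paper's: disintegration along the countable range of $\mathbf{U}$, the law of total probability, the lens-operator identity $\mathbb{P}_\mathbf{x}=\mathbb{P}_{\mathbf{Ux}}$, and the Radon--Nikodym theorem for the density statement. The paper's own proof is considerably terser (three sentences) and in particular does not spell out the step you flag as the ``only real obstacle'' --- identifying $\mathbb{P}(\mathbf{Ux}\in\cdot\mid\mathbf{U}=U)$ with $\mathbb{P}_{U\mathbf{x}}$ --- nor does it explicitly invoke the lens-operator equality to pass from $\mathbb{P}_{\mathbf{Ux}}$ on the left-hand side to $\mathbb{P}_\mathbf{x}$; you make both steps explicit, which is an improvement in rigor rather than a different approach.
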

\begin{proof}
    Consider all $U\in\Theta(\mathfrak{X})$ to be defined on fibers of $E$. By the disintegration theorem, we know that the pushforward functions $U_*\mathbb{P}_\mathbf{x}$ are (probability) measures. Then, by $(i)-(ii)$, we can apply the law of total probabilities to derive: \[
    \mathbb{P}_{\mathbf{Ux}} = \sum_{\omega\in \Omega}\mathbb{P}_\mathbf{U}(\mathbf{U}(\omega))\mathbb{P}_{\mathbf{U}(\omega)\mathbf{x}}.
    \]
    The result for the densities (in the Radon-Nikodym sense) is imediate by the Radon-Nikodym Theorem. 
\end{proof}

These conditions are trivially fulfilled by the axis-parallel case for outlier detection~---~akin to the one in \citep[Proposition 1]{cribeiroramallo2024}. Consider that this result focuses on the case when the operators $U\in\Theta(\mathfrak{X})$ each reside on different fibers of the space $E$, which is enough for our downstream setting. 

Coming next, we will introduce a collection of examples of lens operators for a different set of cases~---~not necessarily having $\Theta(\mathfrak{X})$ defined on fibers of $E$.
\begin{example}
    \begin{itemize}
        \item[]
        \item[$i)$] \textbf{Normal Projected Population from Example 1.} Consider the same population as Example 1, with $F=0.5$. Clearly, by the law of total probabilities\[
        \mathbb{P}_\mathbf{x}=\frac{1}{2}\mathbb{P}_{\mathbf{x|x\in} S_1} + \frac{1}{2}\mathbb{P}_{\mathbf{x|x\in} S_2}.
        \]
        In this case, any non-zero measurable set on $S_1$ will get projected into a zero-measure set in $S_2$, and vice-versa. Thus, we can write  $\mathbb{P}_{\mathbf{x|x\in} S_1} = \mathbb{P}_{U_1\mathbf{x}}$ and $\mathbb{P}_{\mathbf{x|x\in} S_2} = \mathbb{P}_{U_2\mathbf{x}}$, with \[U_1=\begin{pmatrix}
            1&0&0\\
            0&1&0\\
            0&0&0
        \end{pmatrix},~ U_2=\begin{pmatrix}
            0&0&0\\
            0&0&0\\
            0&0&1
        \end{pmatrix}.\]
        Equivalently, defining $\mathbf{U}$ as randomly taking values on $\{U_1,U_2\}$ with equal probability trivially fulfills the conditions of Proposition \ref{prop::projected_myopicity}.
        
        \item[$ii)$]\textbf{Homoskedastic errors.}
        Assume the following random variable $\mathbf{y} = \beta\mathbf{x} + \varepsilon$, with $\beta \in \mathbb{R}^{d\times d}$ and $\varepsilon$ another random variable acting as noise. Now, given an infinite set $D = \{\beta x_i + \varepsilon_i\}_{i \in I}$ of samples of $\mathbf{y}$, we can define \[
        U_i:\beta x_j + \varepsilon_j \in\mathbb{R}^d \longmapsto  \beta x_j + \varepsilon_i \in\mathbb{R}^d,~\forall i \in I. 
        \]
        As such, defining a $\mathbf{U}$ selecting all $U_i$ with equal probability will trivially be a lens operator, if all $\varepsilon_i$ are equally distributed. The finite sample setting of this case is a common bootstrap technique known as \emph{Resampling Residuals}.

        \item[$iii)$]\textbf{Location Operator for the Variance.} A trivial example in $\mathbb{R}$ can be obtained by considering the random variables $\mathbf{x}\sim\mathbb{P}_\mathbf{x}$ and $\mathbf{y} = Var(\mathbf{x}).$ Consider $\mathbf{U}(\omega) \in \{U_1,U_2\},$ with $U_1\mathbf{y} = Var(\mathbf{x} + 3)$ and $U_2\mathbf{y} = Var(\mathbf{x} + 10)$. Trivially, as the variance is invariant to location changes, $\mathbf{Uy}= \mathbf{y}.$
    \end{itemize}
\end{example}
%Note to self: In reality, MST just generalizes the notion of resampling under an stochastic resampler. I.e, if before we searched for an R, such that R(X) = X (in distribution), and used the sampled version \hat{R} --> R; now we are interested in a U that alters X in random ways. The question is, *why the stochasticity?*
%
% - Assuming randomness kills the general purpose of resampling, which is to ease-out or "simplify" the estimation of X's distribution, as defining such a random operator can be challenging in practice
% - However, there are cases where we want to resample with an specific "atomizing" property in mind. I.e., when we resample by using transformations that, a priori, alter the samples, *but not the distribution themselve* --- like wildbootstrap or resampling residuals.
% - Even more important are the cases where, thanks to the randomness, we can estimate a previously unknown property of the data --- such as a collection of subspaces that "replicate" the distribution, the main focus of the paper. Many other examples could also be rewriten from this theory, as many resampling strategies rely on transforming the data via a predefined set of transformations (like contrastive learning, for example). 

\subsection{Subspace Generation with MMD-GANs (Extension)}
In this Section, we will extend the results from Section \ref{subsec::VGAN-algorithm} by including a pseudo-code of \VGAN~in Algorithm \ref{alg::vgan_training}. Here, we included the training for \VGAN~with kernel learning. Using an identity matrix $I$ as the encoder is sufficient to derive the pseudo-code for training without kernel learning. In practice, the simultaneous training of the generator $G$ and the autoencoder $(\mathcal{E}_\phi,\mathcal{E}^{-1}_\phi)$ has to be done sequentially. In other words, we will train the autoencoder for a given number of epochs first, then the generator, and after that, we restart the loop until we reach the maximum number of epochs.

\begin{algorithm}
\caption{\VGAN~training}\label{alg::vgan_training}
\begin{algorithmic}[1]
\REQUIRE{Dataset $D$, the RKHS kernel $\kappa$, $epochs$, $batches$, number of epochs training the autoencoder $iternum_{\mathcal{E}_\phi}$, number of epochs training the generator $iternum_{G_\theta}$}
\STATE Initialize Generator \(G_\theta\)
\STATE Initialize the Encoder $\mathcal{E}_\phi$ and Decoder $\mathcal{E}^{-1}_\phi$

\FOR{\(epoch \in \{1,...,epochs\}\)}
    \FOR{\(batch \in \{1,...,batches\}\)}
        \STATE\(noise \gets\) Random noise \(z^{(1)},...,z^{(m)}\) from $Z$
        \STATE \(data \gets\) Draw current batch \(x^{(1)},...,x^{(m)}\)
        \STATE $trained\_epochs_{\mathcal{E}_\phi} = 0,~trained\_epochs_{G_\theta} = 0$
        \IF{\(trained\_epochs_{\mathcal{E}_\phi} < iternum_{\mathcal{E}_\phi} \)}
           \STATE Update $\mathcal{E}_\phi$ and $\mathcal{E}^{-1}_\phi$ by ascending the stochastic gradient: $\nabla_{\phi} \mathcal{L}_\text{kl}(data,noise;\theta,\phi)$
           \STATE $trained\_epochs_{\mathcal{E}_\phi} \pluseq 1$
        \ELSIF{$trained\_epochs_{G_\theta} < iternum_{G_\theta}$}
            \STATE Update \(G\) by descending the stochastic gradient: $\nabla_{\theta} \mathcal{L}_\text{kl}(data,noise;\theta,\phi)$
            \STATE $trained\_epochs_{G_\theta} \pluseq 1$
            \IF{$trained\_epochs_{G_\theta} \geq iternum_{G_\theta}$ \& $trained\_epochs_{\mathcal{E}_\phi} \geq iternum_{\mathcal{E}_\phi}$}
            \STATE $trained\_epochs_{\mathcal{E}_\phi} = 0,~trained\_epochs_{G_\theta} = 0$
            \ENDIF
        \ENDIF
    \ENDFOR
\ENDFOR
\end{algorithmic}
\end{algorithm}

Algorithm \ref{alg::vgan_training} takes as input the dataset $D$, the kernel $\kappa$, the number of epochs and batches, and the iteration count for the autoencoder and generator~---~$iternum_{\mathcal{E}_\phi}$ and $iternum_{G_\theta}$, respectively. During training (lines 3-19), a batch of data points is drawn from $D$, and an equal amount of random noise is sampled (lines 5-6). The update loop (lines 8-17) alternates between updating the encoder and the generator.  The autoencoder is updated (lines 8-10) as long as its epoch counter $trained\_epochs_{\mathcal{E}_\phi}$ is less than $iternum_{\mathcal{E}_\phi}$. After each update, the counter is incremented by 1. Once the autoencoder’s counter reaches $iternum_{\mathcal{E}_\phi}$, the generator is updated (lines 11-16) until its counter $trained\_epochs_{G_\theta}$ reaches $iternum_{G_\theta}$. When both counters reach their limits, they are reset to 0 (lines 14-16), and the process repeats.

\section{Experimental Appendix}

In this Appendix, we extend Section \ref{sec::experiments} by including further information about our experimental settings, extra images and tables from Section \ref{subsec::od}, and further experiments with extra data types.

\subsection{One-class Classification (Extended)}\label{subsec::occ_extended}
In Section \ref{subsec::od} we compared \VGAN~with other subspace selection, embedding, and feature selection methods. We now introduce the exact default values employed for each of them, as well as specific information regarding their implementation. 

\paragraph{ CAE \citep{balin_concrete_2019}.} We followed the original CAE implementation by selecting $K=20$ features, fixing a start and minimum temperature of $10$ and $0.1$, respectively, and 300 epochs with a tryout limit of $5$. The architecture of the network, optimizer, and default learning rate were taken as-is from their official implementation. 

\paragraph{  HiCS \citep{keller_hics_2012}.} We used the only official implementation of HiCS available together with their recommended parameters. 
In particular, we used $100$ runs with $500$ subspace candidates and kept a critical value for the test statistic $\alpha = 0.10$. We did use a different amount of output subspaces, $500$, to keep it consistent as to what \VGAN~uses. 
Additionally, we added direct Python support to their source code~---~originally in \texttt{Nim}. The compiled binary is available for download in our code repository. 

\paragraph{  CLIQUE \citep{agrawal_automatic_2005}.} We used the only readily available implementation of the algorithm in Python.\footnote{https://github.com/georgekatona/Clique} In our experiments we employed the default values of $\xi = 3$ and $\tau = 0.1$.

\paragraph{ ELM \citep{xu_deep_2023}.} We used the Extreme Learning Machines that perform the dimensionality reduction for Deep Isolation Forests as another competitor, due to the popularity and similarity of the method. In particular, we used the default architecture and parameters from their implementation in \texttt{pyod}. This is $50$ ensemble members, a hyperbolic tangent activation layer, and a representation space of dimensionality $20$. 

\paragraph{  GMD \citep{trittenbach_dimension-based_2019}.} We employed the only readily avaialble implementation of GMD online\footnote{https://github.com/andersonvaf/gmd}. We employed the default parameters of $\alpha = 0.1$ and $100$ runs. 

\paragraph{  PCA \citep{mackiewicz_principal_1993}.} We used the implementation of PCA available in \texttt{sklearn} \citep{scikit-learn}. For reducing the dimensionality of the data, we selected the components with the most share of variability, until reaching $90\%$.

\paragraph{  UMAP \citep{healy_uniform_2024}.} We employed the implementation provided in their official package. We chose $15$ neighbors as recommended by the authors. As for the dimensionality of the underlying manifold, the authors recommend using between $10$ and $100$ for downstream machine learning tasks. As the dimensionality of our datasets $\mathcal{D}$ varies, we opted to use $\min\left\{\frac{\dim(\mathcal{D})}{2}, 100\right\}$.

Additionally, we included the figures and tables from the non-myopic case from our experiments. Figure \ref{fig::AUC_boxplots_baselines_non-myopic} contains the boxplots, and Table \ref{tab::conover-iman_competitors_non-myopic} the Conover-Iman test for the baselines. Figure \ref{fig::AUC_boxplots_competitors_non-myopic} and Table \ref{tab::conover-iman_baselines_non-myopic} contain both the boxplots and the Conover-Iman test for the competitors, respectively. To finalize, we included the raw AUC results in Tables \ref{tab::Full-LOF-table}-\ref{tab::Full-CBLOF-table}. We fixed a 5-hour time-out per repetition of the subspace search experiment, denoted by \textbf{OT}. Additionally, if the Outlier Detection Method employed failed to report any results due to an implementation error, we reported \textbf{ERR}. We excluded results with errors from the ODM during the Conover-Iman test analysis but treated time-outs as $0$ AUC values when calculating the ranks. The last column of the tables contains the results of the Myopicity test with the derived operator. A $1$ signifies that the $p$-value of the test statistic was larger than $0.10$~---~i.e. that the distribution is myopic~--- and $0$ signifies the contrary. 

\begin{figure}
    \centering
    \begin{subfigure}{.3\linewidth}
        \centering
        \includegraphics[width=1\linewidth]{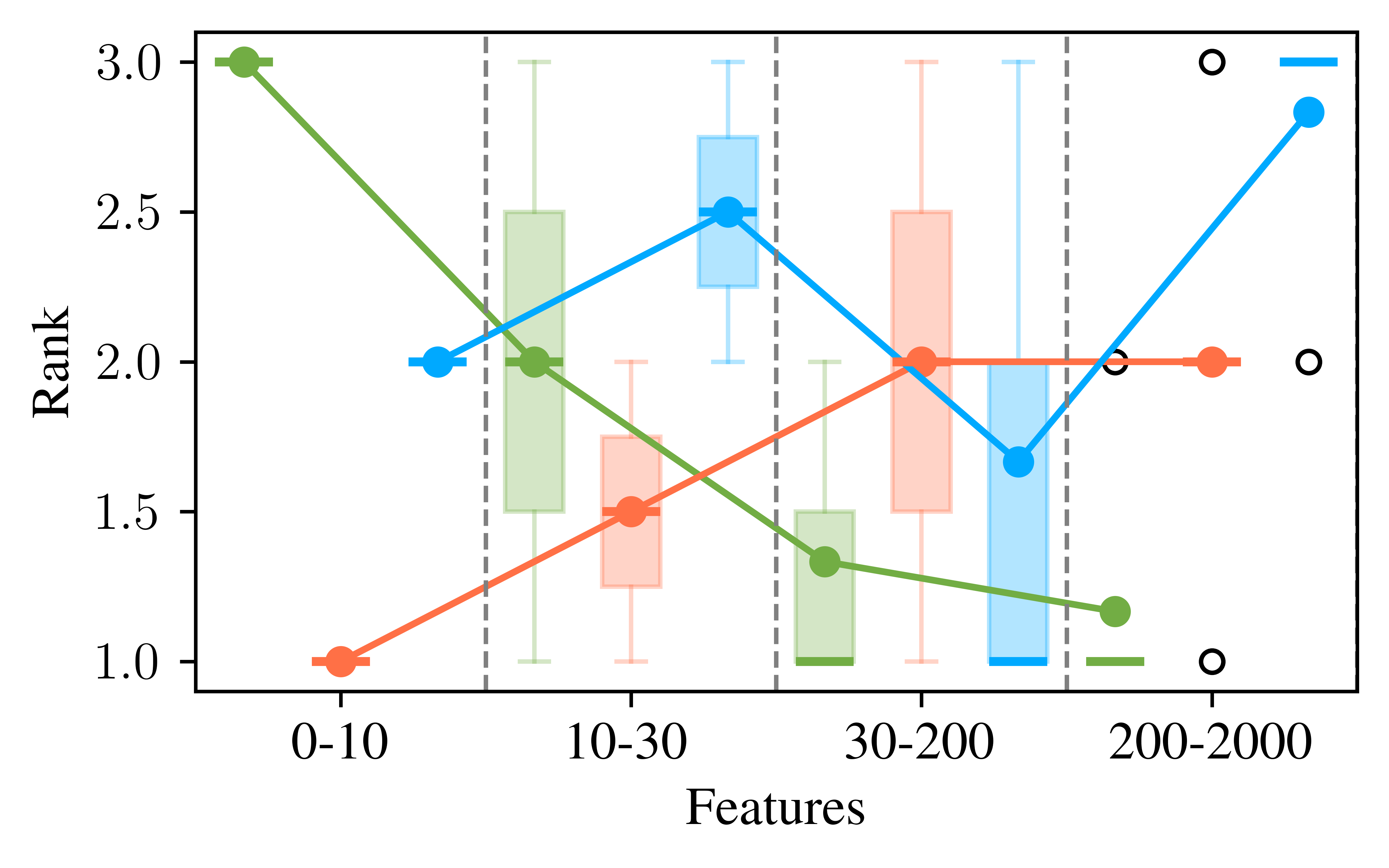}
        \caption{LOF}
    \end{subfigure}~
    \begin{subfigure}{.3\linewidth}
        \centering
        \includegraphics[width=1\linewidth]{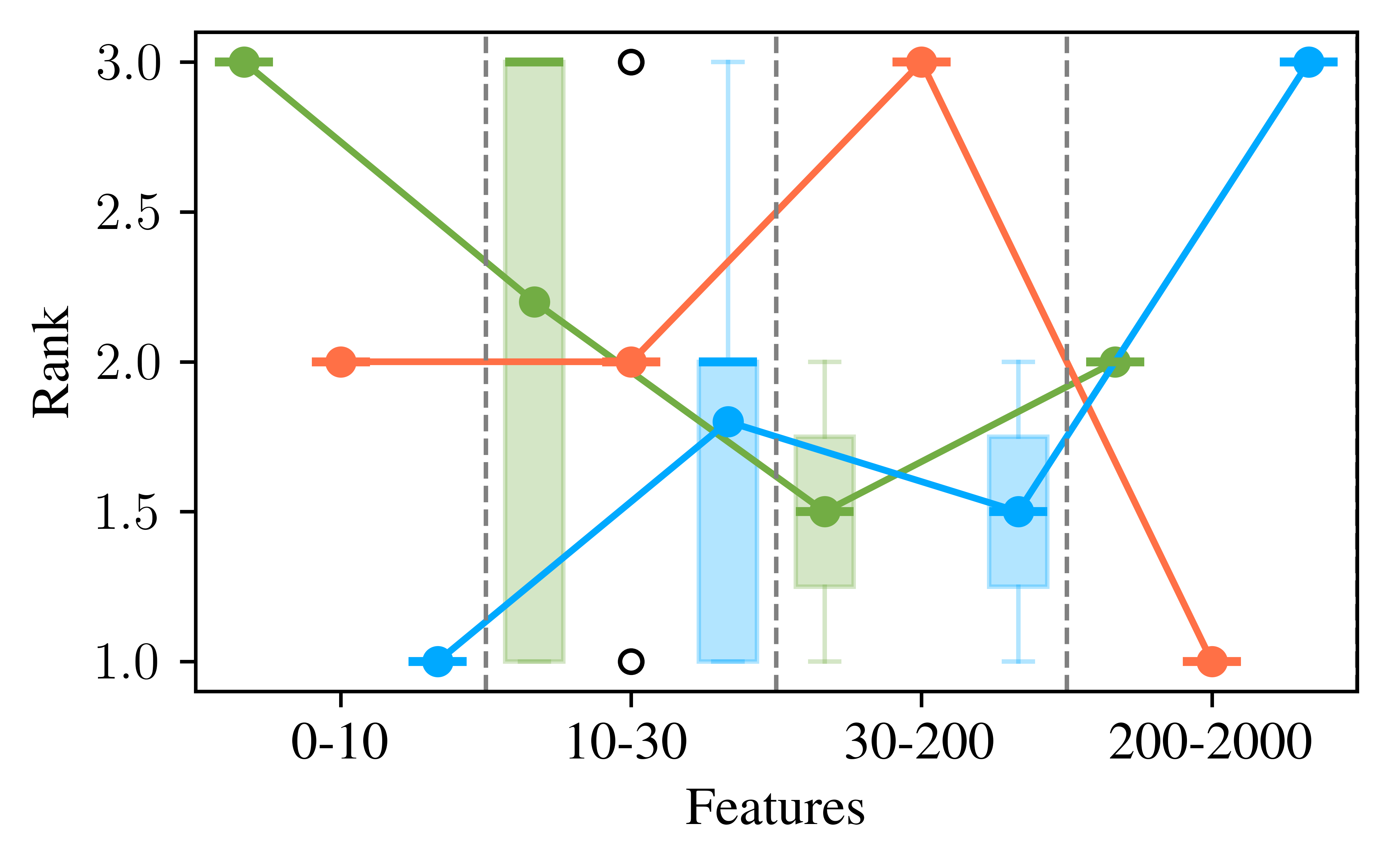}
        \caption{kNN}
    \end{subfigure}
    \begin{subfigure}{.3\linewidth}
        \centering
        \includegraphics[width=1\linewidth]{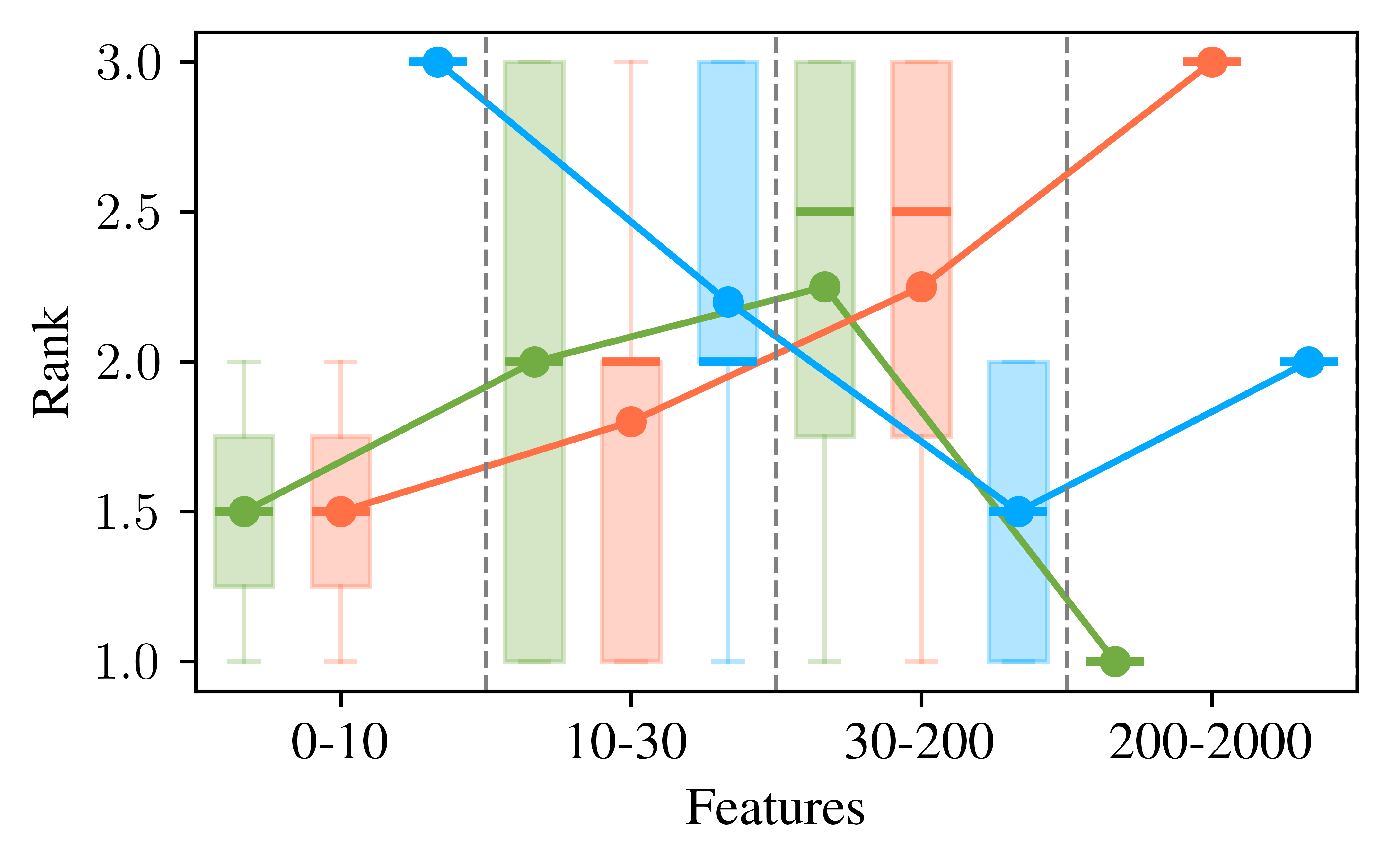}
        \caption{ECOD}
    \end{subfigure}\\
    \begin{subfigure}{.3\linewidth}
        \centering
        \includegraphics[width=1\linewidth]{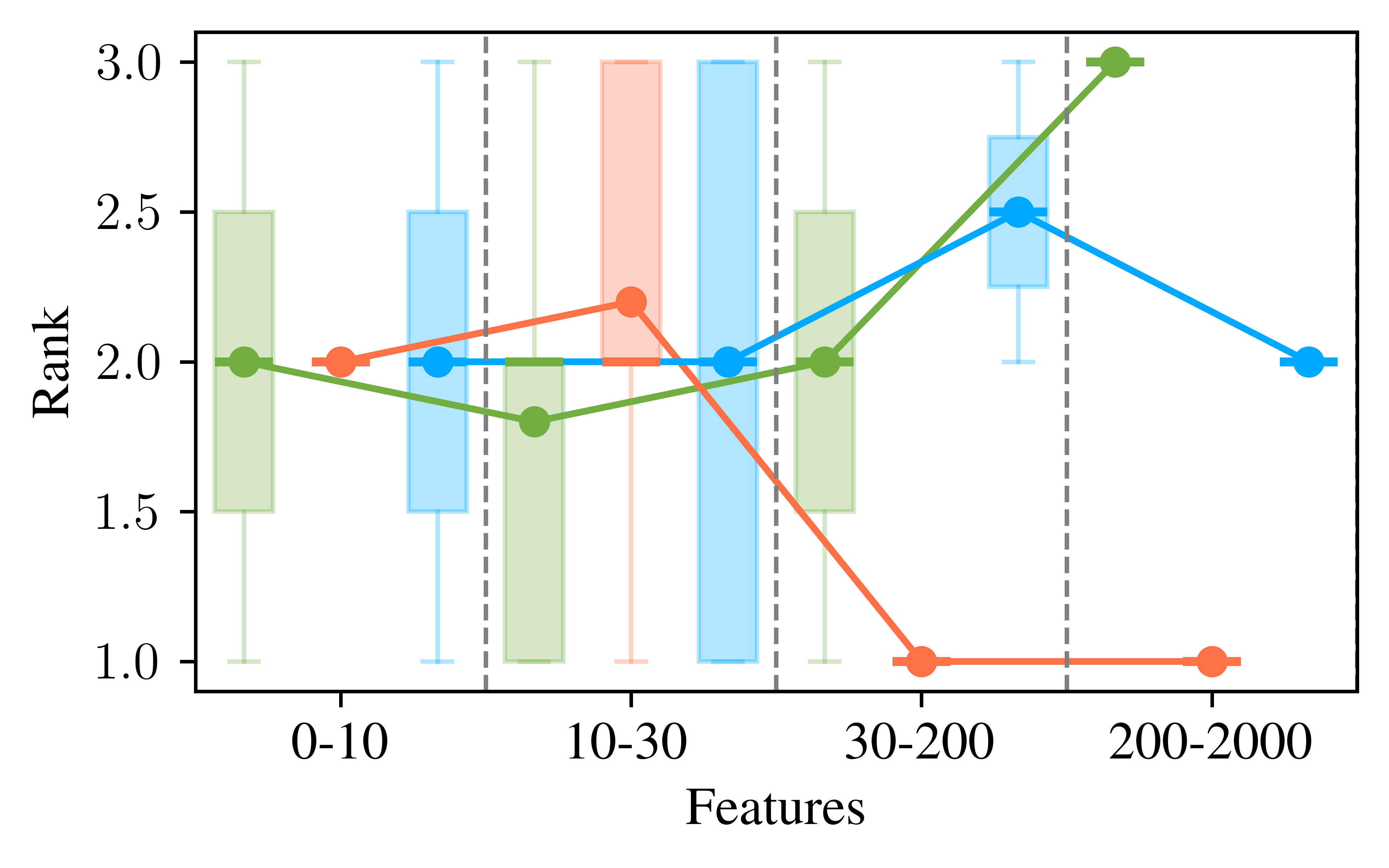}
        \caption{COPOD}
    \end{subfigure}
    \begin{subfigure}{.3\linewidth}
        \centering
        \includegraphics[width=1\linewidth]{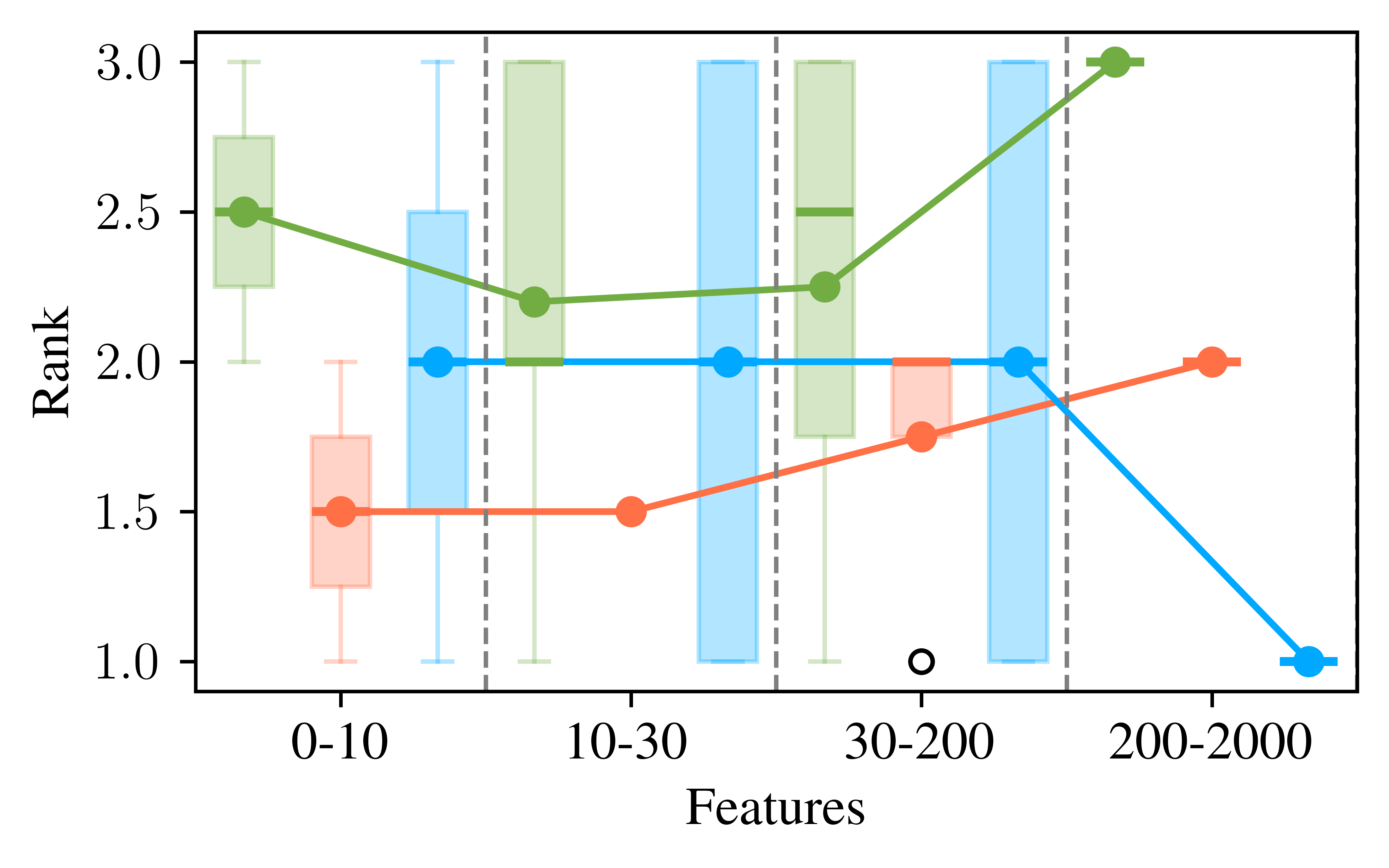}
        \caption{CBLOF}
    \end{subfigure}
    \begin{subfigure}{.3\linewidth}
        \centering
        \includegraphics[width=1\linewidth]{images/5_experiments/Baselines/legend.png}
        \caption{Legend}
    \end{subfigure}
    \caption{Boxplots of ranks of the comparison with baselines using non-myopic datasets across features}
    \label{fig::AUC_boxplots_baselines_non-myopic}
\end{figure}

\begin{table}[]
    \caption{Results of the Conover-Iman test for the rankings on non-myopic datasets}
    \label{tab::conover-iman_baselines_non-myopic}
    \centering
    \scriptsize
    \begin{tabular}{l|ccc|ccc|ccc|ccc|ccc}
    \toprule
    & \multicolumn{3}{c|}{LOF} & \multicolumn{3}{c|}{kNN} & \multicolumn{3}{c|}{ECOD} & \multicolumn{3}{c|}{COPOD} & \multicolumn{3}{c}{CBLOF} \\ 
    \midrule
    & {FB} & {None} & {\textbf{V-GAN}} 
    & {FB} & {None} & {\textbf{V-GAN}} 
    & {FB} & {None} & {\textbf{V-GAN}} 
    & {FB} & {None} & {\textbf{V-GAN}} 
    & {FB} & {None} & {\textbf{V-GAN}} \\ 
    \midrule
    FB &\cellcolor{gray!11}  & + +    &        &\cellcolor{gray!11}  &\cellcolor{gray!11}    &\cellcolor{gray!11}     &\cellcolor{gray!11}  &\cellcolor{gray!11}    &\cellcolor{gray!11}     &\cellcolor{gray!11}  &\cellcolor{gray!11}    &\cellcolor{gray!11}     &\cellcolor{gray!11}  & +     &        \\
    None & - -  &\cellcolor{gray!11}   & - -     &\cellcolor{gray!11}  &\cellcolor{gray!11}    &\cellcolor{gray!11}     &\cellcolor{gray!11}  &\cellcolor{gray!11}    &\cellcolor{gray!11}     &\cellcolor{gray!11}  &\cellcolor{gray!11}    &\cellcolor{gray!11}     & -   &\cellcolor{gray!11}    & - -     \\
    \textbf{V-GAN} &     & + +  &\cellcolor{gray!11}     &\cellcolor{gray!11}  &\cellcolor{gray!11}    &\cellcolor{gray!11}     &\cellcolor{gray!11}  &\cellcolor{gray!11}    &\cellcolor{gray!11}     &\cellcolor{gray!11}  &\cellcolor{gray!11}    &\cellcolor{gray!11}     &     & ++    &\cellcolor{gray!11}     \\ 
    \bottomrule
    \end{tabular}
\end{table}

\begin{figure}
    \centering
    \begin{subfigure}{.48\linewidth}
        \centering
        \includegraphics[width=1\linewidth]{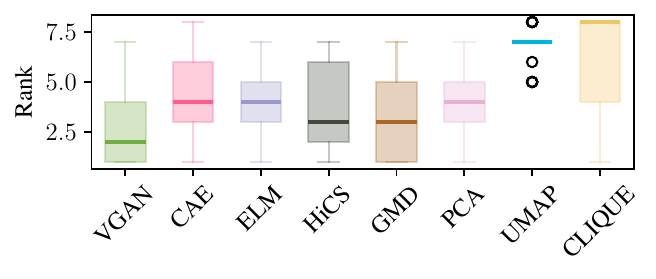}
        \caption{LOF}
    \end{subfigure}~
    \begin{subfigure}{.48\linewidth}
        \centering
        \includegraphics[width=1\linewidth]{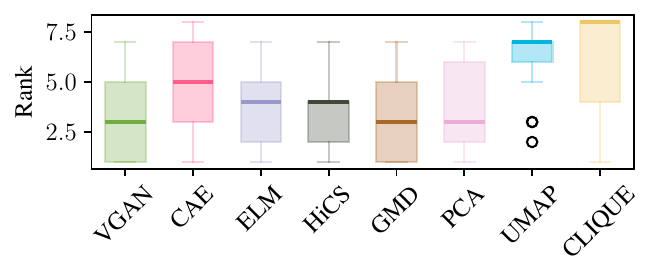}
        \caption{kNN}
    \end{subfigure}\\
    \begin{subfigure}{.48\linewidth}
        \centering
        \includegraphics[width=1\linewidth]{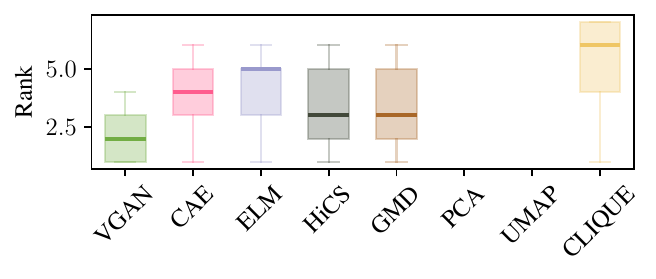}
        \caption{ECOD}
    \end{subfigure}~
    \begin{subfigure}{.48\linewidth}
        \centering
        \includegraphics[width=1\linewidth]{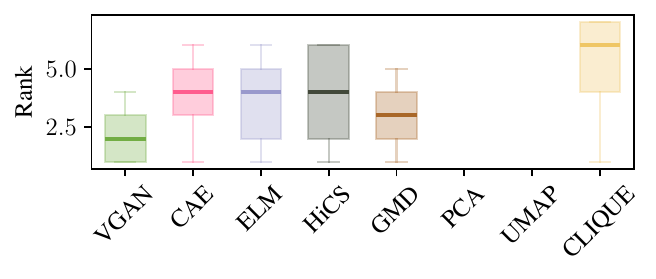}
        \caption{COPOD}
    \end{subfigure}\\
    \begin{subfigure}{.48\linewidth}
        \centering
        \includegraphics[width=1\linewidth]{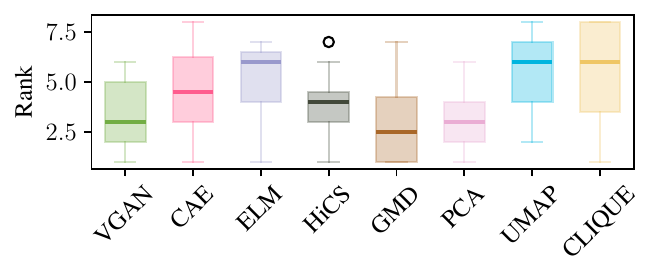}
        \caption{CBLOF}
    \end{subfigure}
    \caption{Boxplots of ranks of the comparison with our competitors using non-myopic datasets.}
    \label{fig::AUC_boxplots_competitors_non-myopic}
\end{figure}

\begin{table}[]
    \caption{Results of the Conover-Iman Test for Rankings on non-Myopic Datasets}
    \label{tab::conover-iman_competitors_non-myopic}
    \centering
    \scriptsize
    \begin{tabular}{llccccccccc|c}
    \toprule
    OD method & SS method & CAE  & HiCS & CLIQUE & ELM  & GMD  & PCA  & UMAP & \textbf{V-GAN} \\
    \midrule
    \multirow{8}{*}{LOF}     & CAE    & \cellcolor{gray!11}   &      & ++     &     &  -   &     & ++   & - -    \\
& HiCS  &     & \cellcolor{gray!11}    & ++     &     &     &     & ++   &     \\
& CLIQUE & - -  & - -    & \cellcolor{gray!11}      & - -   & - -   & - -   &      & - -    \\
& ELM    &     &      & ++     & \cellcolor{gray!11}   &     &     & ++   & -     \\
& GMD    & +    &      & ++     &     & \cellcolor{gray!11}   &     & ++   &     \\
& PCA    &     &      & ++     &     &     & \cellcolor{gray!11}   & ++   & -    \\
& UMAP   & - -   & - -    &        & - -   & - -   & - -   & \cellcolor{gray!11}    & - -    \\
& \textbf{V-GAN}   & ++  &    & ++     & +  &   & +  & ++   & \cellcolor{gray!11}    \\
    \midrule
    \multirow{8}{*}{kNN}     & CAE    & \cellcolor{gray!11}   &   - -   & ++     & -    &   - -  & - -  & +   & - -    \\
& HiCS   &   ++  & \cellcolor{gray!11}    & ++     &   &     &     & ++   &     \\
& CLIQUE & - -   & - -    & \cellcolor{gray!11}      & - -   & - -   & - -   &      & - -    \\
& ELM    & +  &     & ++     & \cellcolor{gray!11}   &     &     & ++   & - -    \\
& GMD    & ++    &      & ++     &     & \cellcolor{gray!11}   &     & ++   &     \\
& PCA    & ++   &      & ++     &     &     & \cellcolor{gray!11}   & ++   &     \\
& UMAP   & -    & - -    &        & - -   & - -   & - -   & \cellcolor{gray!11}    & - -    \\
& \textbf{V-GAN}   & ++  &   & ++     &   &   &  & ++   & \cellcolor{gray!11}     \\
    \midrule
    \multirow{8}{*}{ECOD}    & CAE    & \cellcolor{gray!11}   &   & ++     &   & -   &   \cellcolor[HTML]{EFEFEF} $~~~$  & \cellcolor[HTML]{EFEFEF} $~~~$   & - -    \\
& HiCS   &   & \cellcolor{gray!11}    & ++     &     &     &   \cellcolor[HTML]{EFEFEF} $~~~$  & \cellcolor[HTML]{EFEFEF} $~~~$  & - -    \\
& CLIQUE & - -   & - -    & \cellcolor{gray!11}      &    & - -   & \cellcolor[HTML]{EFEFEF} $~~~$  &     \cellcolor[HTML]{EFEFEF} $~~~$ & - -    \\
& ELM    &    &      &      & \cellcolor{gray!11}   &   - -  &   \cellcolor[HTML]{EFEFEF} $~~~$  & \cellcolor[HTML]{EFEFEF} $~~~$   & - -    \\
& GMD    & +  &      & ++     &  ++   & \cellcolor{gray!11}   &  \cellcolor[HTML]{EFEFEF} $~~~$   & \cellcolor[HTML]{EFEFEF} $~~~$  &     \\
& PCA    &  \cellcolor[HTML]{EFEFEF} $~~~$ &  \cellcolor[HTML]{EFEFEF} $~~~$    & \cellcolor{gray!11}   &  \cellcolor{gray!11}   &   \cellcolor{gray!11}  & \cellcolor{gray!11}   & \cellcolor{gray!11}   &   \cellcolor{gray!11}   \\
& UMAP   & \cellcolor{gray!11} & \cellcolor{gray!11} &   \cellcolor{gray!11}     & \cellcolor{gray!11} & \cellcolor{gray!11}& \cellcolor{gray!11} & \cellcolor{gray!11}    & \cellcolor{gray!11}   \\
& \textbf{V-GAN}   & ++  & ++   & ++     & ++  & ++  &   \cellcolor{gray!11}  & \cellcolor{gray!11}   & \cellcolor{gray!11}  \\
    \midrule
    \multirow{8}{*}{COPOD}   & CAE    & \cellcolor{gray!11}   &     & ++     &     & -    &    \cellcolor{gray!11} & \cellcolor{gray!11}   & - -    \\
& HiCS   &   & \cellcolor{gray!11}    & ++     &     &   -  &   \cellcolor{gray!11}  & \cellcolor{gray!11}   & - -    \\
& CLIQUE & - -   & - -    & \cellcolor{gray!11}      & - -   & - -   & \cellcolor{gray!11}    &  \cellcolor{gray!11}    & - -    \\
& ELM    &     &      & ++     & \cellcolor{gray!11}   &     &  \cellcolor{gray!11}   & \cellcolor{gray!11}  & - -    \\
& GMD    & +  &   +   & ++     &     & \cellcolor{gray!11}   &  \cellcolor{gray!11}   & \cellcolor{gray!11}   &     \\
& PCA    &  \cellcolor{gray!11}   &   \cellcolor{gray!11}   &    \cellcolor{gray!11}    &  \cellcolor{gray!11}   &  \cellcolor{gray!11}   & \cellcolor{gray!11}   &   \cellcolor{gray!11}   & \cellcolor{gray!11}    \\
& UMAP   & \cellcolor{gray!11}  & \cellcolor{gray!11}   &    \cellcolor{gray!11}    & \cellcolor{gray!11}  & \cellcolor{gray!11}  &   \cellcolor{gray!11} & \cellcolor{gray!11}    & \cellcolor{gray!11}   \\
& \textbf{V-GAN}   & ++  & ++   & ++     & ++  &    & \cellcolor{gray!11}  & \cellcolor{gray!11}  & \cellcolor{gray!11}   \\
    \midrule
    \multirow{8}{*}{CBLOF}   & CAE    & \cellcolor{gray!11}   &      &      &     &  - -   &   - -  &      & -     \\
& HiCS   &     & \cellcolor{gray!11}    & ++     & ++  &     &     & ++   &     \\
& CLIQUE &    & - -    & \cellcolor{gray!11}      &     & - -   & - -   &      & - -    \\
& ELM    &     & - -    &        & \cellcolor{gray!11}   & - -   & - -   &      & - -    \\
& GMD    &  ++   &      & ++     & ++  & \cellcolor{gray!11}   &     & ++   &    \\
& PCA    &   ++  &      & ++     & ++  &     & \cellcolor{gray!11}   & ++   &    \\
& UMAP   &     & - -    &        &     & - -   & - -   & \cellcolor{gray!11}    & - -    \\
& \textbf{V-GAN}  & +  &    & ++     & ++  &    &   & ++   & \cellcolor{gray!11}  \\
    \bottomrule
    \end{tabular}
\end{table}

\begin{sidewaystable}[]
\caption{Full table of raw AUCs from Section \ref{subsec::od} using LOF. We denoted time-outs by \textbf{OT}, and ODM errors by \textbf{ERR}.}
\label{tab::Full-LOF-table}
\begin{tabular}{@{}llccccccccccc@{}}
\toprule
Dataset          & Features & Feature Bagging & Full Space & CAE    & CLIQUE & HiCS   & GMD    & PCA    & UMAP   & ELM    & \textbf{\VGAN}  & Myopicity \\ \midrule
InternetAds      & 1555       & 0.8613          & 0.7757     & 0.5154 & OT     & 0.8300 & 0.8127 & 0.4632 & 0.4546 & 0.7642 & 0.8211 & 0         \\
20news         & 768        & 0.7867          & 0.7861     & 0.6613 & OT     & 0.5463 & 0.6609 & 0.6382 & 0.5219 & 0.7647 & 0.7902 & 1         \\
Agnews         & 768        & 0.6920          & 0.6900     & 0.6336 & OT     & 0.4863 & 0.5239 & 0.5489 & 0.5280 & 0.6841 & 0.6971 & 0         \\
Amazon           & 768        & 0.5975          & 0.5975     & 0.5880 & OT     & 0.5071 & 0.6302 & 0.5952 & 0.5226 & 0.5777 & 0.5998 & 0         \\
Imdb             & 768        & 0.5433          & 0.5433     & 0.5555 & OT     & 0.5086 & 0.5905 & 0.5720 & 0.5058 & 0.5345 & 0.5443 & 0         \\
Yelp             & 768        & 0.6792          & 0.6782     & 0.6151 & OT     & 0.5198 & 0.6704 & 0.6443 & 0.5196 & 0.6584 & 0.6825 & 0         \\
CIFAR10          & 512        & 0.7389          & 0.7328     & 0.6122 & OT     & 0.6624 & 0.6492 & 0.7449 & 0.5942 & 0.7415 & 0.7491 & 1         \\
FashionMNIST     & 512        & 0.8891          & 0.8869     & 0.8636 & OT     & 0.7863 & 0.9227 & 0.9048 & 0.8570 & 0.8923 & 0.8978 & 1         \\
MNIST-C          & 512        & 0.9757          & 0.9752     & 0.8736 & OT     & 0.6405 & 0.6187 & 0.7066 & 0.5099 & 0.9288 & 0.9762 & 1         \\
MVTec-AD         & 512        & 0.9704          & 0.9700     & 0.9624 & OT     & 0.9275 & 0.7539 & 0.8635 & 0.5969 & 0.9543 & 0.9713 & 0         \\
SVHN             & 512        & 0.7168          & 0.7151     & 0.6474 & OT     & 0.5891 & 0.5795 & 0.6981 & 0.5184 & 0.6766 & 0.7175 & 1         \\
speech           & 400        & 0.5513          & 0.5243     & 0.5995 & OT     & 0.5390 & 0.5812 & 0.3882 & 0.5325 & 0.5704 & 0.5895 & 1         \\
musk             & 166        & 1.0000          & 1.0000     & 0.9988 & OT     & 1.0000 & 1.0000 & 1.0000 & 0.6175 & 1.0000 & 1.0000 & 0         \\
mnist            & 100        & 0.9681          & 0.9664     & 0.8394 & OT     & 0.6373 & 0.7369 & 0.9126 & 0.5523 & 0.9638 & 0.9733 & 1         \\
optdigits        & 64         & 0.9984          & 0.9988     & 0.9964 & OT     & 0.5057 & 0.6032 & 0.9996 & 0.6970 & 0.9802 & 0.9986 & 0         \\
SpamBase         & 57         & 0.4305          & 0.3608     & 0.3619 & OT     & 0.8251 & 0.7897 & 0.3315 & 0.4732 & 0.5017 & 0.4901 & 0         \\
landsat          & 36         & 0.7785          & 0.7744     & 0.7810 & OT     & 0.8208 & 0.7447 & 0.7880 & 0.5257 & 0.6925 & 0.7792 & 1         \\
satellite        & 36         & 0.8664          & 0.8686     & 0.8591 & OT     & 0.8075 & 0.8148 & 0.8489 & 0.6293 & 0.5741 & 0.8702 & 1         \\
satimage-2       & 36         & 0.9972          & 0.9968     & 0.9926 & OT     & 0.9972 & 0.9977 & 0.9970 & 0.5501 & 0.9969 & 0.9974 & 1         \\
Ionosphere       & 33         & 0.9311          & 0.9287     & 0.9190 & OT     & 0.9376 & 0.5723 & 0.5227 & 0.4699 & 0.8865 & 0.9327 & 0         \\
WPBC             & 33         & 0.5304          & 0.5326     & 0.4433 & OT     & 0.5950 & 0.9384 & 0.8379 & 0.5290 & 0.5011 & 0.5370 & 0         \\
letter           & 32         & 0.9221          & 0.9093     & 0.8523 & OT     & 0.7813 & 0.7609 & 0.8387 & 0.6419 & 0.9207 & 0.9286 & 1         \\
WDBC             & 30         & 0.9958          & 0.9944     & 0.9845 & OT     & 1.0000 & 0.9915 & 0.9944 & 0.8313 & 0.9799 & 0.9956 & 0         \\
fault            & 27         & 0.6647          & 0.6516     & 0.6378 & OT     & 0.6107 & 0.6011 & 0.6312 & 0.5320 & 0.6316 & 0.6578 & 1         \\
annthyroid       & 21         & 0.9357          & 0.8752     & 0.6605 & 0.9593 & 0.9629 & 0.7895 & 0.8107 & 0.5842 & 0.7229 & 0.9273 & 1         \\
cardio           & 21         & 0.6727          & 0.6135     & 0.7542 & OT     & 0.8836 & 0.8517 & 0.6721 & 0.5587 & 0.6425 & 0.6827 & 1         \\
Cardiotocography & 21         & 0.8004          & 0.7818     & 0.7275 & OT     & 0.7638 & 0.8136 & 0.3545 & 0.6144 & 0.7033 & 0.8107 & 1         \\
Waveform         & 21         & 0.8157          & 0.8131     & 0.6935 & OT     & 0.8173 & 0.7599 & 0.7317 & 0.6614 & 0.8292 & 0.8173 & 1         \\
Hepatitis        & 19         & 0.6432          & 0.6272     & 0.6923 & OT     & 0.8107 & 0.8757 & 0.6331 & 0.4615 & 0.5083 & 0.6473 & 0         \\
Lymphography     & 18         & 0.9756          & 0.9821     & 0.9018 & OT     & 0.9524 & 0.9524 & 0.9464 & 0.7226 & 0.9643 & 0.9994 & 0         \\
pendigits        & 16         & 0.9962          & 0.9916     & 0.8698 & OT     & 0.9825 & 0.9698 & 0.9871 & 0.6882 & 0.9945 & 0.9699 & 1         \\
wine             & 13         & 0.9737          & 0.9708     & 0.9792 & 0.9792 & 0.9625 & 0.9708 & 0.9708 & 0.7038 & 0.8008 & 0.9771 & 0         \\
vowels           & 12         & 0.9653          & 0.9610     & 0.6538 & 0.9285 & 0.9196 & 0.9446 & 0.8443 & 0.6826 & 0.9639 & 0.9447 & 0         \\
PageBlocks       & 10         & 0.9645          & 0.9411     & 0.9432 & 0.9789 & 0.9767 & 0.9708 & 0.9339 & 0.5048 & 0.7374 & 0.9662 & 1         \\
breastw          & 9          & 0.8019          & 0.8059     & 0.7698 & 0.8391 & 0.8619 & 0.6721 & 0.7349 & 0.4886 & 0.7599 & 0.8726 & 0         \\
Stamps           & 9          & 0.9802          & 0.9662     & 0.9136 & 0.9886 & 0.9579 & 0.9818 & 0.9287 & 0.5517 & 0.8256 & 0.9760 & 0         \\
WBC              & 9          & 0.7370          & 0.7047     & 0.7372 & 0.7326 & 0.6884 & 0.8934 & 0.8453 & 0.5833 & 0.6912 & 0.5921 & 0         \\
Pima             & 8          & 0.6422          & 0.6343     & 0.4597 & 0.6530 & 0.6365 & 0.6224 & 0.6253 & 0.5863 & 0.6225 & 0.6103 & 0         \\
yeast            & 8          & 0.5097          & 0.4734     & 0.5299 & 0.5810 & 0.5464 & 0.5779 & 0.4715 & 0.5231 & 0.5186 & 0.5604 & 1         \\
thyroid          & 6          & 0.9357          & 0.8752     & 0.6613 & 0.9593 & 0.9629 & 0.8493 & 0.6721 & 0.5587 & 0.7229 & 0.9173 & 1         \\
vertebral        & 6          & 0.4958          & 0.4873     & 0.4915 & 0.4952 & 0.4944 & 0.4786 & 0.4841 & 0.4293 & 0.5696 & 0.4612 & 0         \\
Wilt             & 5          & 0.8907          & 0.9054     & 0.7215 & 0.8728 & 0.8565 & 0.8398 & 0.5836 & 0.5540 & 0.7306 & 0.6122 & 1         \\ \bottomrule
\end{tabular}
\end{sidewaystable}

% Please add the following required packages to your document preamble:
% \usepackage{booktabs}

\begin{sidewaystable}[]
\caption{Full table of raw AUCs from Section \ref{subsec::od} using kNN. We denoted time-outs by \textbf{OT}, and ODM errors by \textbf{ERR}.}
\label{tab::Full-KNN-table}
\begin{tabular}{@{}llccccccccccc@{}}
\toprule
Dataset              &Features & Feature Bagging & Full Space & CAE    & CLIQUE & HiCS   & GMD    & PCA    & UMAP   & ELM    & \VGAN  & Myopicity \\ \midrule
InternetAds        & 1555       & 0.8914          & 0.8801     & 0.6037 & OT     & 0.8716 & 0.8814 & 0.5578 & 0.6910 & 0.8403 & 0.7420 & 0         \\
20news           & 768        & 0.6852          & 0.6847     & 0.6659 & OT     & 0.5213 & 0.6793 & 0.6120 & 0.4779 & 0.6819 & 0.6860 & 1         \\
Agnews           & 768        & 0.5689          & 0.5696     & 0.5092 & OT     & 0.4663 & 0.5304 & 0.5310 & 0.5286 & 0.5569 & 0.5650 & 0         \\
Amazon             & 768        & 0.6019          & 0.6019     & 0.5717 & OT     & 0.5328 & 0.5533 & 0.5303 & 0.5735 & 0.5860 & 0.6024 & 0         \\
Imdb               & 768        & 0.5288          & 0.5285     & 0.5526 & OT     & 0.5177 & 0.5898 & 0.5818 & 0.4757 & 0.5309 & 0.5294 & 0         \\
Yelp               & 768        & 0.6668          & 0.6668     & 0.6019 & OT     & 0.5526 & 0.6466 & 0.6398 & 0.4890 & 0.6448 & 0.6676 & 0         \\
CIFAR10          & 512        & 0.7898          & 0.7882     & 0.6933 & OT     & 0.7037 & 0.7854 & 0.8621 & 0.5670 & 0.7671 & 0.7913 & 1         \\
FashionMNIST     & 512        & 0.9169          & 0.9161     & 0.8400 & OT     & 0.8477 & 0.9031 & 0.9717 & 0.3872 & 0.9035 & 0.9171 & 1         \\
MNIST-C & 512        & 0.8960          & 0.8958     & 0.7859 & OT     & 0.6464 & 0.7432 & 0.7812 & 0.5003 & 0.8568 & 0.8993 & 1         \\
MVTec-AD    & 512        & 0.9748          & 0.9745     & 0.9658 & OT     & 0.9279 & 0.8649 & 0.9083 & 0.5747 & 0.9666 & 0.9745 & 0         \\
SVHN            & 512        & 0.7219          & 0.7205     & 0.6407 & OT     & 0.6034 & 0.5880 & 0.7149 & 0.5700 & 0.6984 & 0.7220 & 1         \\
speech             & 400        & 0.6570          & 0.6503     & 0.5486 & OT     & 0.5346 & 0.6664 & 0.5636 & 0.6075 & 0.5857 & 0.6745 & 1         \\
musk               & 166        & 1.0000          & 1.0000     & 1.0000 & OT     & 1.0000 & 1.0000 & 1.0000 & 0.6791 & 1.0000 & 1.0000 & 0         \\
mnist              & 100        & 0.9771          & 0.9779     & 0.8867 & OT     & 0.8349 & 0.8685 & 0.9489 & 0.4804 & 0.9618 & 0.9780 & 1         \\
optdigits          & 64         & 0.9979          & 0.9981     & 0.9975 & OT     & 0.5289 & 0.9077 & 0.9994 & 0.7617 & 0.9844 & 0.9973 & 0         \\
SpamBase           & 57         & 0.3579          & 0.3584     & 0.3758 & OT     & 0.7329 & 0.4295 & 0.3511 & 0.4495 & 0.3057 & 0.3601 & 0         \\
landsat            & 36         & 0.7850          & 0.7842     & 0.7815 & OT     & 0.7824 & 0.7624 & 0.8215 & 0.5057 & 0.7715 & 0.7857 & 1         \\
satellite          & 36         & 0.8290          & 0.8295     & 0.8304 & OT     & 0.7941 & 0.8153 & 0.8243 & 0.5704 & 0.7882 & 0.8291 & 1         \\
satimage-2         & 36         & 0.9993          & 0.9993     & 0.9992 & OT     & 0.9983 & 0.9981 & 0.9990 & 0.1325 & 0.9993 & 0.9993 & 1         \\
Ionosphere         & 33         & 0.9767          & 0.9771     & 0.9690 & OT     & 0.9646 & 0.5702 & 0.5298 & 0.4509 & 0.9786 & 0.9777 & 0         \\
WPBC               & 33         & 0.5091          & 0.5191     & 0.4801 & OT     & 0.5709 & 0.9621 & 0.8760 & 0.6175 & 0.4955 & 0.5128 & 0         \\
letter             & 32         & 0.9350          & 0.9305     & 0.9261 & OT     & 0.8602 & 0.8608 & 0.8989 & 0.6701 & 0.9344 & 0.9364 & 1         \\
WDBC               & 30         & 0.9742          & 0.9718     & 0.9338 & OT     & 0.9958 & 0.9930 & 0.9803 & 0.4587 & 0.9168 & 0.9756 & 0         \\
fault              & 27         & 0.8105          & 0.8093     & 0.7886 & OT     & 0.7169 & 0.7058 & 0.7854 & 0.6394 & 0.8034 & 0.8098 & 1         \\
annthyroid         & 21         & 0.7092          & 0.7681     & 0.5823 & 0.7954 & 0.8065 & 0.8263 & 0.8064 & 0.4749 & 0.6669 & 0.7167 & 1         \\
cardio             & 21         & 0.8153          & 0.6976     & 0.5419 & OT     & 0.8378 & 0.6687 & 0.6091 & 0.4294 & 0.6734 & 0.8166 & 1         \\
Cardiotocography   & 21         & 0.7816          & 0.7187     & 0.7598 & OT     & 0.7502 & 0.7903 & 0.4990 & 0.5295 & 0.6553 & 0.8087 & 1         \\
Waveform           & 21         & 0.7190          & 0.8040     & 0.7220 & OT     & 0.8592 & 0.7457 & 0.6822 & 0.4876 & 0.6996 & 0.7264 & 1         \\
Hepatitis          & 19         & 0.7284          & 0.7337     & 0.6331 & OT     & 0.7811 & 0.7929 & 0.5799 & 0.5959 & 0.6314 & 0.6580 & 0         \\
Lymphography       & 18         & 0.9577          & 0.9583     & 0.7321 & OT     & 0.9583 & 0.9524 & 0.9583 & 0.7399 & 0.9548 & 0.9804 & 0         \\
pendigits          & 16         & 0.9987          & 0.9988     & 0.9805 & OT     & 0.9943 & 0.9903 & 0.9991 & 0.6486 & 0.9969 & 0.9905 & 1         \\
wine               & 13         & 0.9396          & 0.9417     & 0.9542 & 0.9417 & 0.9417 & 0.9708 & 0.9625 & 0.2663 & 0.4542 & 0.9337 & 0         \\
vowels             & 12         & 0.9836          & 0.9806     & 0.7430 & 0.9572 & 0.9679 & 0.9717 & 0.9163 & 0.5841 & 0.9803 & 0.9572 & 0         \\
PageBlocks         & 10         & 0.9720          & 0.9805     & 0.9646 & 0.9738 & 0.9547 & 0.9729 & 0.9665 & 0.2513 & 0.9549 & 0.9592 & 1         \\
breastw            & 9          & 0.9876          & 0.9245     & 0.8457 & 0.9571 & 0.9644 & 0.9140 & 0.7860 & 0.6814 & 0.9066 & 0.9830 & 0         \\
Stamps             & 9          & 0.8270          & 0.9880     & 0.9240 & 0.9886 & 0.9631 & 0.9657 & 0.9693 & 0.6981 & 0.9627 & 0.8640 & 0         \\
WBC                & 9          & 0.9415          & 0.7814     & 0.6814 & 0.8767 & 0.9186 & 0.9735 & 0.9245 & 0.7544 & 0.7995 & 0.9686 & 0         \\
Pima               & 8          & 0.5518          & 0.5593     & 0.4660 & 0.5431 & 0.5249 & 0.4781 & 0.5340 & 0.4849 & 0.5213 & 0.5151 & 0         \\
yeast              & 8          & 0.5834          & 0.5871     & 0.5192 & 0.5710 & 0.5606 & 0.5787 & 0.5883 & 0.5355 & 0.5955 & 0.5833 & 1         \\
thyroid            & 6          & 0.7816          & 0.7681     & 0.5823 & 0.7954 & 0.8065 & 0.7883 & 0.6091 & 0.4294 & 0.6669 & 0.8087 & 1         \\
vertebral          & 6          & 0.5260          & 0.5333     & 0.4770 & 0.5278 & 0.5254 & 0.4913 & 0.5611 & 0.4828 & 0.5293 & 0.5214 & 0         \\
Wilt               & 5          & 0.8434          & 0.8620     & 0.6910 & 0.8257 & 0.8493 & 0.7971 & 0.6548 & 0.5643 & 0.8578 & 0.7216 & 1         \\ \bottomrule
\end{tabular}
\end{sidewaystable}

% Please add the following required packages to your document preamble:
% \usepackage{booktabs}
\begin{sidewaystable}[]
\caption{Full table of raw AUCs from Section \ref{subsec::od} using ECOD. We denoted time-outs by \textbf{OT}, and ODM errors by \textbf{ERR}.}
\label{tab::Full-ECOD-table}
\begin{tabular}{@{}llccccccccccc@{}}
\toprule
Dataset               & Features & Feature Bagging & Full Space & CAE    & CLIQUE & HiCS   & GMD    & PCA    & UMAP   & ELM    & \VGAN  & Myopicity \\ \midrule
InternetAds        & 1555       & 0.6994          & 0.6990     & 0.3337 & OT     & 0.7746 & 0.7114 & ERR    & 0.2977 & 0.3670 & 0.6350 & 0         \\
20news          & 768        & 0.6547          & 0.6547     & 0.6032 & OT     & 0.4776 & 0.6198 & 0.6552 & 0.5953 & 0.6510 & 0.6556 & 1         \\
Agnews          & 768        & 0.5089          & 0.5090     & 0.4934 & OT     & 0.4646 & 0.5093 & ERR    & 0.5182 & 0.4916 & 0.5094 & 0         \\
Amazon             & 768        & 0.5635          & 0.5635     & 0.5604 & OT     & 0.5015 & 0.4939 & ERR    & 0.5065 & 0.5675 & 0.5640 & 0         \\
Imdb               & 768        & 0.5052          & 0.5052     & 0.4834 & OT     & 0.5130 & 0.5509 & ERR    & 0.5334 & 0.4941 & 0.5043 & 0         \\
Yelp               & 768        & 0.6023          & 0.6024     & 0.5605 & OT     & 0.5023 & 0.5849 & ERR    & 0.5488 & 0.5930 & 0.6020 & 0         \\
CIFAR10        & 512        & 0.7228          & 0.7222     & 0.6382 & OT     & 0.7027 & 0.6562 & ERR    & 0.4936 & 0.7218 & 0.7229 & 1         \\
FashionMNIST     & 512        & 0.8215          & 0.8213     & 0.6934 & OT     & 0.8216 & 0.9393 & ERR    & 0.5920 & 0.8118 & 0.8216 & 1         \\
MNIST-C & 512        & 0.6473          & 0.6470     & 0.5242 & OT     & 0.6288 & 0.7073 & ERR    & 0.4760 & 0.6364 & 0.6490 & 1         \\
MVTec-AD   & 512        & 0.9399          & 0.9403     & 0.9331 & OT     & 0.9234 & 0.8127 & ERR    & 0.5296 & 0.8566 & 0.9415 & 0         \\
SVHN             & 512        & 0.5609          & 0.5608     & 0.4566 & OT     & 0.6007 & 0.5634 & ERR    & 0.3949 & 0.5232 & 0.5653 & 1         \\
speech             & 400        & 0.5451          & 0.5421     & 0.4884 & OT     & 0.5325 & 0.5814 & ERR    & 0.4251 & 0.5420 & 0.5420 & 1         \\
musk               & 166        & 0.8685          & 0.8681     & 0.1663 & OT     & 0.0430 & 0.6342 & ERR    & 0.4911 & 0.1949 & 0.8683 & 0         \\
mnist              & 100        & 0.7476          & 0.7478     & 0.7719 & OT     & 0.6536 & 0.7024 & ERR    & 0.5676 & 0.8846 & 0.7822 & 1         \\
optdigits          & 64         & 0.5611          & 0.5608     & 0.4640 & OT     & 0.4910 & 0.6191 & ERR    & ERR    & 0.3226 & 0.5615 & 0         \\
SpamBase           & 57         & 0.5957          & 0.5962     & 0.2578 & OT     & 0.8083 & 0.6351 & ERR    & ERR    & 0.3359 & 0.5943 & 0         \\
landsat            & 36         & 0.4304          & 0.4310     & 0.4292 & OT     & 0.5084 & 0.4487 & ERR    & ERR    & 0.3224 & 0.4326 & 1         \\
satellite          & 36         & 0.6979          & 0.6983     & 0.6802 & OT     & 0.6199 & 0.6801 & ERR    & ERR    & 0.6694 & 0.6985 & 1         \\
satimage-2         & 36         & 0.9932          & 0.9931     & 0.9949 & OT     & 0.9859 & 0.9916 & ERR    & ERR    & 0.9917 & 0.9932 & 1         \\
Ionosphere         & 33         & 0.8406          & 0.8409     & 0.8335 & OT     & 0.7944 & 0.4972 & ERR    & ERR    & 0.8720 & 0.8405 & 0         \\
WPBC               & 33         & 0.5090          & 0.5085     & 0.4638 & OT     & 0.4801 & 0.8097 & ERR    & ERR    & 0.4228 & 0.5077 & 0         \\
letter             & 32         & 0.5106          & 0.5112     & 0.4830 & OT     & 0.4861 & 0.4862 & ERR    & ERR    & 0.4993 & 0.5107 & 1         \\
WDBC               & 30         & 0.9486          & 0.9479     & 0.9324 & OT     & 0.9648 & 0.9620 & ERR    & ERR    & 0.9565 & 0.9477 & 0         \\
fault              & 27         & 0.4431          & 0.4430     & 0.4110 & OT     & 0.4056 & 0.4269 & ERR    & ERR    & 0.3800 & 0.4442 & 1         \\
annthyroid         & 21         & 0.7514          & 0.7513     & 0.6448 & 0.7527 & 0.7513 & 0.6585 & ERR    & ERR    & 0.5977 & 0.7521 & 1         \\
cardio             & 21         & 0.5101          & 0.5107     & 0.4407 & OT     & 0.7397 & 0.6742 & ERR    & ERR    & 0.1092 & 0.5129 & 1         \\
Cardiotocography   & 21         & 0.7369          & 0.7375     & 0.7176 & OT     & 0.7322 & 0.7808 & ERR    & ERR    & 0.6140 & 0.7386 & 1         \\
Waveform           & 21         & 0.7132          & 0.7128     & 0.6209 & OT     & 0.6922 & 0.7235 & ERR    & ERR    & 0.7609 & 0.7118 & 1         \\
Hepatitis          & 19         & 0.6361          & 0.6391     & 0.6391 & OT     & 0.6095 & 0.6272 & ERR    & ERR    & 0.4450 & 0.6402 & 0         \\
Lymphography       & 18         & 0.9327          & 0.9286     & 0.7857 & OT     & 0.9643 & 0.9464 & ERR    & ERR    & 0.9452 & 0.9304 & 0         \\
pendigits          & 16         & 0.8690          & 0.8709     & 0.8053 & OT     & 0.8996 & 0.8998 & ERR    & ERR    & 0.8313 & 0.8725 & 1         \\
wine               & 13         & 0.8913          & 0.8917     & 0.9292 & 0.8917 & 0.8708 & 0.8417 & ERR    & ERR    & 0.7492 & 0.8900 & 0         \\
vowels             & 12         & 0.4608          & 0.4596     & 0.4044 & 0.4752 & 0.2231 & 0.3682 & ERR    & ERR    & 0.4261 & 0.4929 & 0         \\
PageBlocks         & 10         & 0.9475          & 0.9472     & 0.9342 & 0.9472 & 0.9482 & 0.9356 & ERR    & ERR    & 0.9633 & 0.9468 & 1         \\
breastw            & 9          & 0.7857          & 0.7884     & 0.2853 & 0.7836 & 0.8384 & 0.8953 & ERR    & ERR    & 0.3482 & 0.7856 & 0         \\
Stamps             & 9          & 0.9085          & 0.9095     & 0.8387 & 0.9095 & 0.8715 & 0.8933 & ERR    & ERR    & 0.6948 & 0.9113 & 0         \\
WBC                & 9          & 0.7944          & 0.7977     & 0.6442 & 0.7930 & 0.8349 & 0.8742 & ERR    & ERR    & 0.4526 & 0.7935 & 0         \\
Pima               & 8          & 0.5485          & 0.5474     & 0.5256 & 0.5494 & 0.5422 & 0.5159 & ERR    & ERR    & 0.4825 & 0.5481 & 0         \\
yeast              & 8          & 0.6006          & 0.6016     & 0.4625 & 0.5989 & 0.5843 & 0.5595 & ERR    & ERR    & 0.3393 & 0.6014 & 1         \\
thyroid            & 6          & 0.7514          & 0.7513     & 0.6333 & 0.7527 & 0.7543 & 0.6732 & ERR    & ERR    & 0.5977 & 0.7501 & 1         \\
vertebral          & 6          & 0.5171          & 0.5159     & 0.5774 & 0.5278 & 0.5143 & 0.5167 & ERR    & ERR    & 0.2838 & 0.5194 & 0         \\
Wilt               & 5          & 0.3977          & 0.3970     & 0.3821 & 0.3970 & 0.4068 & 0.4109 & ERR    & ERR    & 0.4220 & 0.3968 & 1         \\ \bottomrule
\end{tabular}
\end{sidewaystable}

% Please add the following required packages to your document preamble:
% \usepackage{booktabs}
\begin{sidewaystable}[]
\caption{Full table of raw AUCs from Section \ref{subsec::od} using COPOD. We denoted time-outs by \textbf{OT}, and ODM errors by \textbf{ERR}.}
\label{tab::Full-COPOD-table}
\begin{tabular}{@{}llccccccccccc@{}}
\toprule
Dataset               & Features & Feature Bagging & Full Space & CAE    & CLIQUE & HiCS   & GMD    & PCA    & UMAP   & ELM    & \VGAN  & Myopicity \\ \midrule
InternetAds        & 1555       & 0.7000          & 0.6994     & 0.4041 & OT     & 0.7748 & 0.7117 & ERR    & 0.2415 & 0.4269 & 0.6356 & 0         \\
20news          & 768        & 0.6480          & 0.6482     & 0.6107 & OT     & 0.4755 & 0.6185 & 0.6030 & 0.5922 & 0.6347 & 0.6492 & 1         \\
Agnews          & 768        & 0.5235          & 0.5235     & 0.4441 & OT     & 0.4661 & 0.4985 & ERR    & 0.5110 & 0.4818 & 0.5242 & 0         \\
Amazon             & 768        & 0.5715          & 0.5715     & 0.5608 & OT     & 0.4758 & 0.4984 & ERR    & 0.5220 & 0.5719 & 0.5731 & 0         \\
Imdb               & 768        & 0.5119          & 0.5118     & 0.5178 & OT     & 0.4716 & 0.5488 & ERR    & 0.5374 & 0.5029 & 0.5121 & 0         \\
Yelp               & 768        & 0.6186          & 0.6186     & 0.5638 & OT     & 0.4704 & 0.5913 & ERR    & 0.5219 & 0.5945 & 0.6190 & 0         \\
CIFAR10        & 512        & 0.7126          & 0.7128     & 0.6685 & OT     & 0.7027 & 0.6481 & ERR    & 0.4877 & 0.7454 & 0.7131 & 1         \\
FashionMNIST     & 512        & 0.7996          & 0.7999     & 0.6227 & OT     & 0.8216 & 0.9703 & ERR    & 0.5923 & 0.7817 & 0.8002 & 1         \\
MNIST-C & 512        & 0.6441          & 0.6436     & 0.4702 & OT     & 0.6288 & 0.6387 & ERR    & 0.4320 & 0.6409 & 0.6440 & 1         \\
MVTec-AD   & 512        & 0.9722          & 0.9720     & 0.9638 & OT     & 0.9234 & 0.7939 & ERR    & 0.5365 & 0.9460 & 0.9717 & 0         \\
SVHN             & 512        & 0.5357          & 0.5362     & 0.4512 & OT     & 0.6008 & 0.5407 & ERR    & 0.4338 & 0.5969 & 0.5410 & 1         \\
speech             & 400        & 0.5885          & 0.5891     & 0.5854 & OT     & 0.5562 & 0.6189 & ERR    & 0.3711 & 0.5453 & 0.5950 & 1         \\
musk               & 166        & 0.7623          & 0.7597     & 0.1850 & OT     & 0.1501 & 0.5693 & ERR    & 0.4921 & 0.3524 & 0.7637 & 0         \\
mnist              & 100        & 0.7418          & 0.7422     & 0.7218 & OT     & 0.6248 & 0.6977 & ERR    & 0.5500 & 0.8736 & 0.7430 & 1         \\
optdigits          & 64         & 0.6509          & 0.6511     & 0.6553 & OT     & 0.4930 & 0.6764 & ERR    & ERR    & 0.3467 & 0.6522 & 0         \\
SpamBase           & 57         & 0.6223          & 0.6233     & 0.2214 & OT     & 0.8106 & 0.6552 & ERR    & ERR    & 0.3679 & 0.6213 & 0         \\
landsat            & 36         & 0.6420          & 0.6416     & 0.6065 & OT     & 0.6256 & 0.6257 & ERR    & ERR    & 0.3353 & 0.6429 & 1         \\
satellite          & 36         & 0.7318          & 0.7318     & 0.7100 & OT     & 0.6295 & 0.7087 & ERR    & ERR    & 0.7075 & 0.7321 & 1         \\
satimage-2         & 36         & 0.9944          & 0.9944     & 0.9952 & OT     & 0.9861 & 0.9923 & ERR    & ERR    & 0.9978 & 0.9944 & 1         \\
Ionosphere         & 33         & 0.8499          & 0.8499     & 0.8148 & OT     & 0.8127 & 0.4468 & ERR    & ERR    & 0.8718 & 0.8497 & 0         \\
WPBC               & 33         & 0.4465          & 0.4454     & 0.3652 & OT     & 0.4376 & 0.8360 & ERR    & ERR    & 0.4087 & 0.4465 & 0         \\
letter             & 32         & 0.5559          & 0.5558     & 0.5152 & OT     & 0.5524 & 0.5355 & ERR    & ERR    & 0.5205 & 0.5564 & 1         \\
WDBC               & 30         & 0.7954          & 0.7944     & 0.7915 & OT     & 0.8324 & 0.8211 & ERR    & ERR    & 0.9445 & 0.7939 & 0         \\
fault              & 27         & 0.4025          & 0.4020     & 0.3901 & OT     & 0.3885 & 0.3710 & ERR    & ERR    & 0.3900 & 0.4034 & 1         \\
annthyroid         & 21         & 0.7539          & 0.7522     & 0.5694 & 0.7528 & 0.7534 & 0.7704 & ERR    & ERR    & 0.5686 & 0.7599 & 1         \\
cardio             & 21         & 0.7797          & 0.7808     & 0.3937 & OT     & 0.7832 & 0.6177 & ERR    & ERR    & 0.1989 & 0.7837 & 1         \\
Cardiotocography   & 21         & 0.6114          & 0.6103     & 0.7371 & OT     & 0.6082 & 0.8992 & ERR    & ERR    & 0.6057 & 0.6097 & 1         \\
Waveform           & 21         & 0.8284          & 0.8280     & 0.7212 & OT     & 0.8003 & 0.5962 & ERR    & ERR    & 0.8110 & 0.8330 & 1         \\
Hepatitis          & 19         & 0.6568          & 0.6627     & 0.4852 & OT     & 0.6627 & 0.6391 & ERR    & ERR    & 0.4716 & 0.6621 & 0         \\
Lymphography       & 18         & 0.9417          & 0.9405     & 0.6964 & OT     & 0.9464 & 0.9167 & ERR    & ERR    & 0.9536 & 0.9423 & 0         \\
pendigits          & 16         & 0.8993          & 0.8988     & 0.8096 & OT     & 0.9187 & 0.9079 & ERR    & ERR    & 0.8613 & 0.8999 & 1         \\
wine               & 13         & 0.5942          & 0.6083     & 0.7333 & 0.6083 & 0.5458 & 0.5833 & ERR    & ERR    & 0.6925 & 0.6042 & 0         \\
vowels             & 12         & 0.4099          & 0.4005     & 0.3800 & 0.4285 & 0.2643 & 0.3070 & ERR    & ERR    & 0.4760 & 0.4176 & 0         \\
PageBlocks         & 10         & 0.9534          & 0.9537     & 0.9270 & 0.9537 & 0.9559 & 0.9410 & ERR    & ERR    & 0.9659 & 0.9534 & 1         \\
breastw            & 9          & 0.8441          & 0.8452     & 0.1872 & 0.8397 & 0.8876 & 0.8977 & ERR    & ERR    & 0.4579 & 0.8445 & 0         \\
Stamps             & 9          & 0.9580          & 0.9579     & 0.9386 & 0.9579 & 0.9209 & 0.9542 & ERR    & ERR    & 0.9019 & 0.9579 & 0         \\
WBC                & 9          & 0.8058          & 0.8047     & 0.7558 & 0.8023 & 0.8651 & 0.9179 & ERR    & ERR    & 0.5363 & 0.8044 & 0         \\
Pima               & 8          & 0.5686          & 0.5681     & 0.5171 & 0.5710 & 0.5415 & 0.5441 & ERR    & ERR    & 0.4992 & 0.5701 & 0         \\
yeast              & 8          & 0.4574          & 0.4576     & 0.4297 & 0.4631 & 0.4503 & 0.4833 & ERR    & ERR    & 0.3536 & 0.4589 & 1         \\
thyroid            & 6          & 0.7539          & 0.7522     & 0.5694 & 0.7528 & 0.7534 & 0.6180 & ERR    & ERR    & 0.5686 & 0.7499 & 1         \\
vertebral          & 6          & 0.4110          & 0.4111     & 0.4198 & 0.4087 & 0.3746 & 0.3952 & ERR    & ERR    & 0.3143 & 0.4105 & 0         \\
Wilt               & 5          & 0.5192          & 0.5196     & 0.4749 & 0.5196 & 0.5302 & 0.5282 & ERR    & ERR    & 0.5057 & 0.5191 & 1         \\ \bottomrule
\end{tabular}
\end{sidewaystable}

% Please add the following required packages to your document preamble:
% \usepackage{booktabs}
\begin{sidewaystable}[]
\caption{Full table of raw AUCs from Section \ref{subsec::od} using CBLOF. We denoted time-outs by \textbf{OT}, and ODM errors by \textbf{ERR}.}
\label{tab::Full-CBLOF-table}
\begin{tabular}{@{}llccccccccccc@{}}
\toprule
Dataset               & Features & Feature Bagging & Full Space & CAE    & CLIQUE & HiCS   & GMD    & PCA    & UMAP   & ELM    & \VGAN  & Myopicity \\ \midrule
InternetAds        & 1555       & 0.7347          & 0.7211     & 0.4141 & OT     & ERR    & ERR    & 0.1804 & 0.3566 & ERR    & ERR    & 0         \\
20news          & 768        & 0.6779          & 0.6680     & 0.5954 & OT     & 0.4792 & 0.6906 & 0.6154 & 0.5232 & 0.6650 & 0.6788 & 1         \\
Agnews          & 768        & ERR             & 0.7543     & 0.4909 & OT     & 0.5193 & 0.5323 & 0.5357 & 0.5312 & ERR    & ERR    & 0         \\
Amazon             & 768        & 0.8843          & 0.8805     & 0.5550 & OT     & 0.4940 & 0.5449 & 0.5138 & 0.5161 & ERR    & 0.8864 & 0         \\
Imdb               & 768        & 0.7984          & 0.7945     & 0.5489 & OT     & 0.5211 & 0.5865 & 0.5667 & 0.5151 & ERR    & 0.8015 & 0         \\
Yelp               & 768        & 0.9767          & 0.9790     & 0.5595 & OT     & 0.5265 & 0.6398 & 0.6092 & 0.5349 & ERR    & 0.9757 & 0         \\
CIFAR10        & 512        & 0.6997          & 0.6966     & 0.6747 & OT     & 0.7061 & ERR    & 0.7314 & 0.5709 & ERR    & ERR    & 1         \\
FashionMNIST     & 512        & ERR             & 0.5382     & 0.7742 & OT     & ERR    & ERR    & 0.9738 & 0.6374 & ERR    & ERR    & 1         \\
MNIST-C & 512        & 1.0000          & 1.0000     & 0.7079 & OT     & ERR    & 0.7391 & 0.7547 & 0.5110 & ERR    & 1.0000 & 1         \\
MVTec-AD   & 512        & ERR             & 0.9491     & 0.9679 & OT     & ERR    & 0.8407 & 0.8571 & 0.5750 & 0.9588 & ERR    & 0         \\
SVHN             & 512        & ERR             & 0.9230     & ERR    & OT     & ERR    & 0.5935 & 0.6729 & 0.5181 & ERR    & ERR    & 1         \\
speech             & 400        & 0.3935          & 0.4077     & ERR    & OT     & ERR    & ERR    & 0.3581 & 0.5406 & ERR    & 0.3924 & 1         \\
musk               & 166        & 0.7634          & 0.7414     & 0.9234 & OT     & 0.9981 & 1.0000 & 0.9998 & 0.3957 & ERR    & 0.7628 & 0         \\
mnist              & 100        & 0.8194          & 0.8170     & ERR    & OT     & ERR    & 0.9247 & ERR    & 0.5780 & ERR    & 0.8196 & 1         \\
optdigits          & 64         & 0.9989          & 0.9991     & ERR    & OT     & ERR    & 0.8250 & 0.8529 & 0.7542 & ERR    & 0.9995 & 0         \\
SpamBase           & 57         & 0.9501          & 0.9631     & 0.4160 & OT     & 0.6151 & 0.3576 & 0.4038 & 0.5056 & 0.3331 & 0.9486 & 0         \\
landsat            & 36         & 0.5294          & 0.5759     & 0.7566 & OT     & 0.8156 & 0.7337 & 0.7972 & 0.6178 & 0.7317 & 0.5258 & 1         \\
satellite          & 36         & 0.8236          & 0.8009     & 0.8244 & OT     & 0.7985 & 0.8072 & 0.8030 & 0.6327 & 0.7855 & 0.8298 & 1         \\
satimage-2         & 36         & 0.9835          & 0.9915     & 0.9988 & OT     & 0.9979 & 0.9979 & 0.9987 & 0.7356 & 0.9990 & 0.9797 & 1         \\
Ionosphere         & 33         & 0.7743          & 0.7512     & 0.9155 & OT     & 0.9434 & 0.5442 & 0.5255 & 0.4682 & 0.9647 & 0.7760 & 0         \\
WPBC               & 33         & 0.6590          & 0.6615     & 0.5691 & OT     & 0.5423 & 0.9523 & 0.8279 & 0.5247 & 0.4844 & 0.6631 & 0         \\
letter             & 32         & 0.6793          & 0.6288     & 0.7445 & OT     & 0.8223 & 0.8040 & 0.7145 & 0.5806 & ERR    & 0.6990 & 1         \\
WDBC               & 30         & 0.7224          & 0.8073     & 0.9266 & OT     & 0.9901 & 0.9693 & 0.9859 & 0.6718 & 0.9513 & 0.7761 & 0         \\
fault              & 27         & ERR             & 0.7940     & 0.7229 & OT     & 0.6983 & 0.6748 & 0.7142 & 0.5132 & ERR    & 0.7979 & 1         \\
annthyroid         & 21         & 0.7414          & 0.7219     & 0.6131 & 0.6682 & 0.6661 & 0.7971 & 0.7800 & 0.4919 & 0.6109 & 0.5769 & 1         \\
cardio             & 21         & 0.9689          & 0.9643     & 0.5914 & OT     & ERR    & 0.6037 & 0.6129 & 0.6238 & ERR    & 0.9804 & 1         \\
Cardiotocography   & 21         & 0.9773          & 0.9798     & 0.6766 & OT     & ERR    & 0.8350 & ERR    & 0.5623 & 0.6647 & 0.9506 & 1         \\
Waveform           & 21         & 0.9317          & 0.8542     & 0.7152 & OT     & ERR    & 0.7812 & 0.7142 & 0.5700 & ERR    & 0.9246 & 1         \\
Hepatitis          & 19         & ERR             & 0.9374     & 0.5811 & OT     & 0.5030 & 0.7675 & 0.5503 & 0.4885 & 0.6166 & 0.8797 & 0         \\
Lymphography       & 18         & 0.9710          & 0.9694     & 0.7905 & OT     & 0.9583 & 0.9542 & 0.9762 & 0.7280 & 0.9548 & 0.9724 & 0         \\
pendigits          & 16         & 0.9139          & 0.9011     & 0.9733 & OT     & ERR    & 0.9664 & 0.9795 & 0.4131 & ERR    & 0.9291 & 1         \\
wine               & 13         & 0.9886          & 0.9802     & 0.9342 & 0.9042 & 0.8639 & 0.8996 & 0.8375 & 0.1929 & 0.5398 & 0.9750 & 0         \\
vowels             & 12         & 0.7842          & 0.6349     & 0.5752 & ERR    & ERR    & 0.9353 & 0.9288 & 0.6589 & ERR    & 0.7914 & 0         \\
PageBlocks         & 10         & 0.5891          & 0.5583     & 0.9484 & 0.9715 & 0.9698 & 0.9712 & 0.9664 & 0.8058 & 0.9725 & 0.5604 & 1         \\
breastw            & 9          & 0.5777          & 0.5675     & 0.7112 & 0.9245 & 0.9364 & 0.8826 & 0.7395 & 0.5253 & 0.8548 & 0.5701 & 0         \\
Stamps             & 9          & 0.6590          & 0.6615     & 0.9285 & 0.9888 & 0.9649 & 0.9728 & 0.9558 & 0.4891 & 0.9462 & 0.6591 & 0         \\
WBC                & 9          & 0.5151          & 0.5254     & 0.5772 & 0.8281 & 0.8756 & 0.9587 & 0.8979 & 0.7153 & 0.7213 & 0.4733 & 0         \\
Pima               & 8          & 0.7218          & 0.7151     & 0.4886 & 0.5906 & 0.5766 & 0.5696 & 0.5702 & 0.5443 & 0.5388 & 0.7403 & 0         \\
yeast              & 8          & 0.5942          & 0.6083     & 0.5064 & 0.5590 & 0.5670 & 0.5786 & 0.5502 & 0.5429 &        & 0.6042 & 1         \\
thyroid            & 6          & 0.4465          & 0.4454     & 0.6131 & 0.6687 & 0.6731 & 0.5980 & 0.6129 & 0.6238 & 0.6109 & 0.4465 & 1         \\
vertebral          & 6          & 0.4574          & 0.4576     & 0.4322 & 0.5042 & 0.4927 & 0.4702 & 0.5825 & 0.4916 & 0.4952 & 0.4589 & 0         \\
Wilt               & 5          & 0.6186          & 0.6186     & 0.5734 & 0.7171 & 0.7651 & 0.6892 & 0.5999 & 0.5858 & 0.7365 & 0.6190 & 1         \\ \bottomrule
\end{tabular}
\end{sidewaystable}

\subsection{Other Data types}\label{sec::otherdatatypes}

This section aims to exemplify the flexibility of the Myopic Subspace Theory (MST) to adapt to different data types. In particular, we will present preliminary experiments for both Images and Natural Language in what follows. Our goal will be to exploit the human-friendly nature of images and text to visualize what a lens operator looks like in these cases. We include more specific details regarding all implementations and the training for both data types in their respective repositories. To encourage further research on MST, we are releasing all original data, code, and even model weights for all experiments for both data types\footnote{Forked from the original repository}.

\subsubsection{\VGAN~vision: Myopic Subspace Theory on Image Data}

We will first explore the results of applying \VGAN~to image data. For this experiment, we defined $E=\mathbb R^{3\cdot h\cdot w}$ and $\Theta(\mathfrak{X})= Diag_{(h\cdot w)\times (h\cdot w)}(\{0,1\})$, with $h$ and $w$ being the height and width of an image. Therefore, we are considering $E$ to be the vectorized space of all 3 \texttt{RGB} channels of an image, and $\Theta(\mathfrak{X})$ to be the space of $(h\cdot w)\times (h\cdot w)$ binary diagonal matrices acting on random images as: \begin{equation}\begin{split}
\Theta(\mathfrak{X}) \ni U: ~&\mathfrak{X} \longrightarrow \mathfrak{X}  \\
& x \leadsto  Ux = (Ux_\text{R}\mid Ux_\text{G}\mid Ux_\text{B}).
\end{split}
\end{equation} 

In other words, we are applying the operators simultaneously in all color channels. To make \VGAN~generate such subspaces, the only required change is to alter the head of the network to output a binary vector of the required size. In particular, we changed the layers of \VGAN~in order to better handle image data. In particular, we employ 5 sequential layers, each consisting of a linear layer for the learning, a Gaussian noise layer and a Batch Normalization layer for regularization, and a Leaky ReLU as an activation. After this, we include a last Batch Discrimination layer and a final linear layer, before passing the output into an upper softmax. The kernel that we use is a Gaussian kernel composed with an ImageNet-pretrained Autoencoder, and we did not employ kernel learning during training. We included a summary in Figure \ref{fig::vgan_extradatatypes_diagram}.
We will study \VGAN's lens operators for image data with both real and synthetic data.
\begin{figure}
\centering
    \begin{subfigure}{0.49\textwidth}
    \includegraphics[width=1\textwidth]{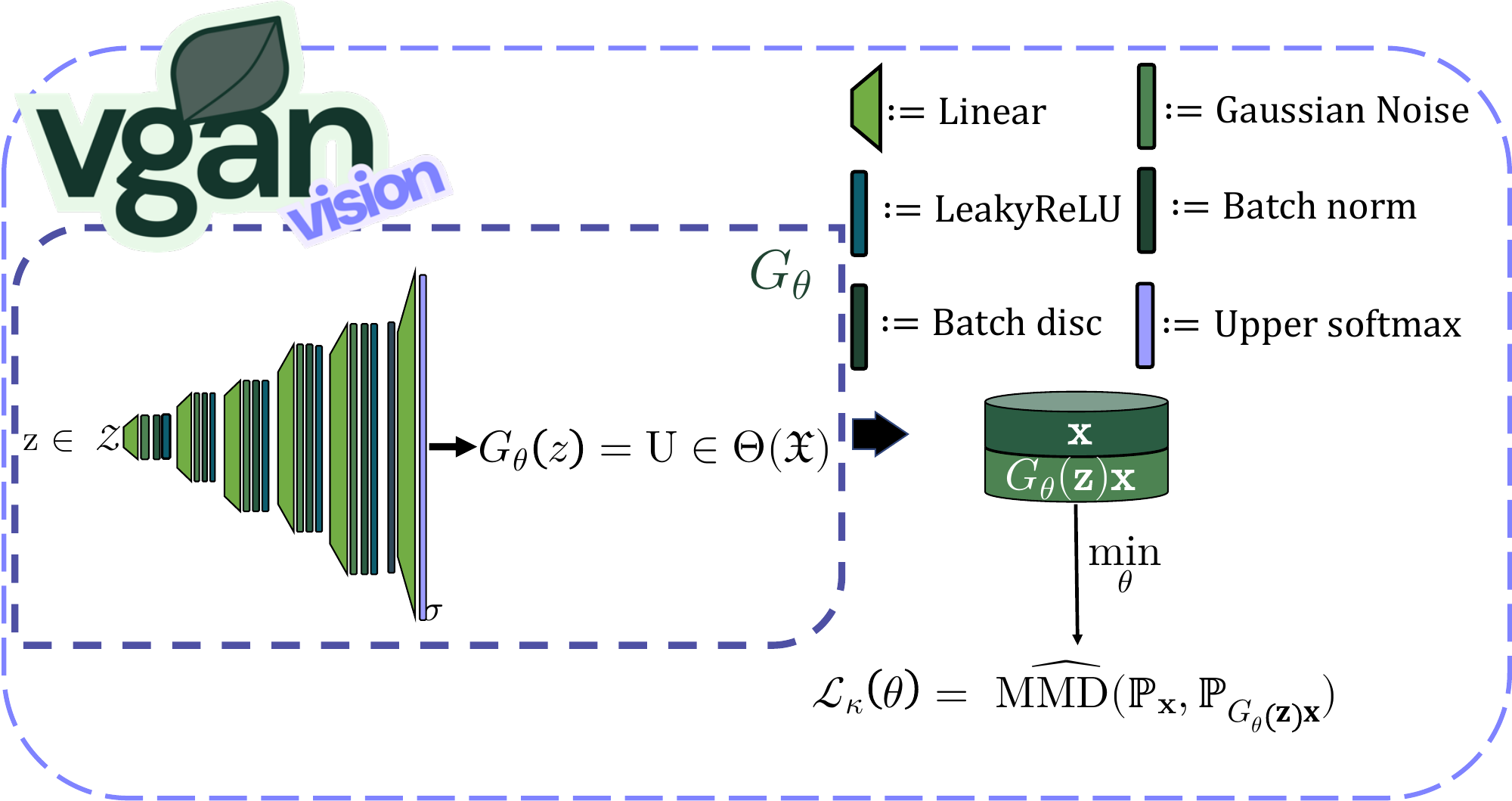}
    \caption{Image data architecture}
    \end{subfigure}~
    \begin{subfigure}{0.49\textwidth}
    \includegraphics[width=1\textwidth]{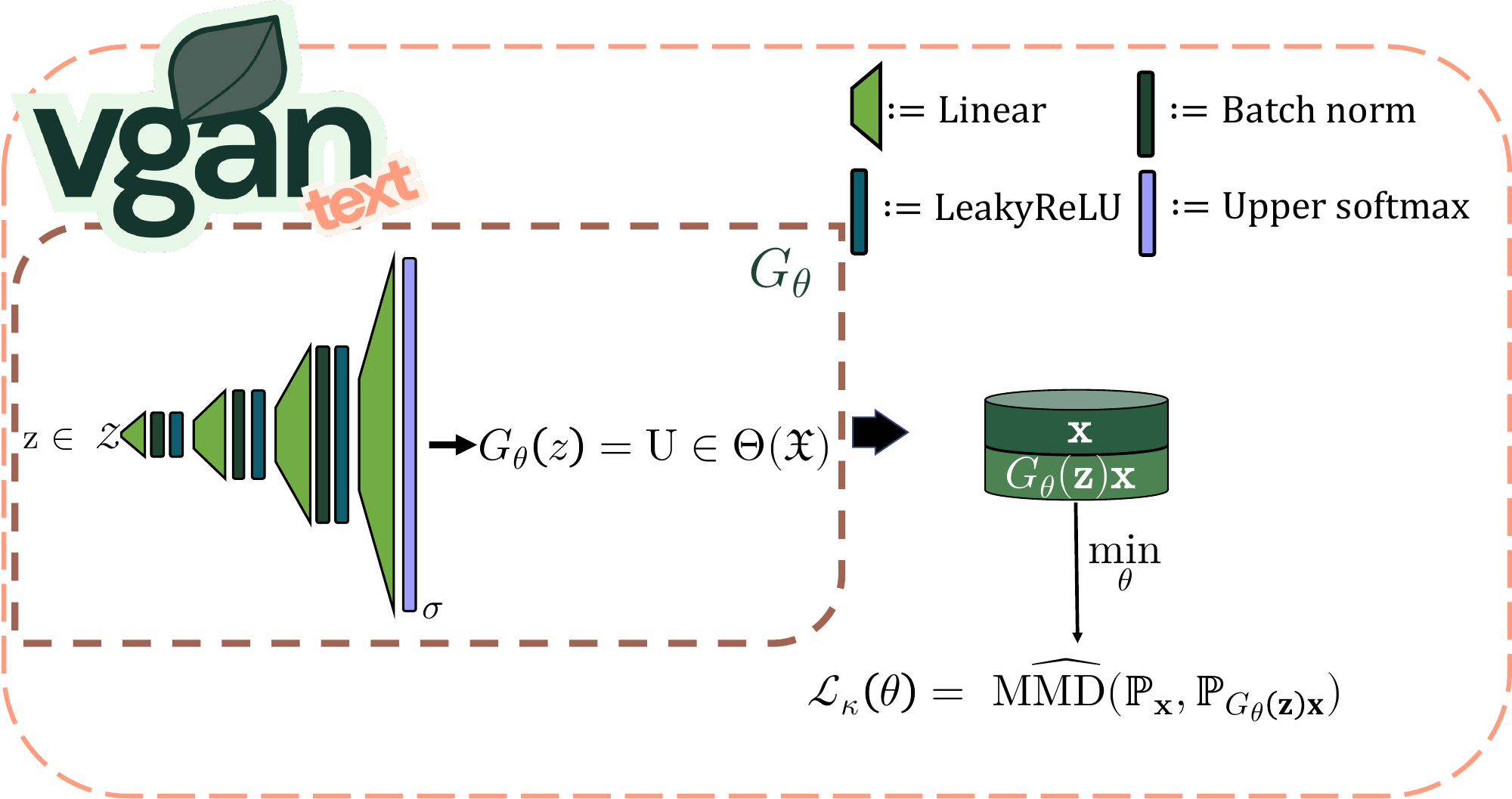}
    \caption{Text data architecture}
    \end{subfigure}
    \caption{Diagram of the network for both image (left) and text (right) data.}
    \label{fig::vgan_extradatatypes_diagram}
\end{figure}

\paragraph{Real Data.} We chose two popular datasets, FashionMNIST and MVTec-AD \cite{}. The first consists of 28$\times$28 grey-scaled images of clothes, while the second consists of 900$\times$900 colored images of different industrial materials. We used the classes \texttt{pants} (images of pants) and \texttt{bottle} (cross-sections of steel bottles), respectively. Figure \ref{fig::VGAN_vision} contains the results for both FashionMNIST (Figure \ref{subfig::FMNIST}) and MVTec-AD (Figure \ref{subfig::MVTec-AD}). As we can see, the lens operators managed to extract interesting similarity patterns for each class. While in FashionMNIST, operators are mostly pant-shaped, in MVTec-AD, a more complex pattern emerged. Here, the operators selected different ring-shaped parts of the cross-sections rather than chunks of the image. These results further validate our theoretical claims that lens operators can extract complex relations in the data.

\paragraph{Synthetic Data.} To mirror the methodology in Section \ref{subsec::extraction}, we generated 10,000 synthetic images (32×32 pixels) with half-white/half-black sections. In particular, 5000 images had a white top part and a black bottom part, while the remaining 5000 had the inverse. Logically, as what happened with Section's \ref{subsec::extraction} population, one would expect that a lens operator consists of the two parts with equal probability. As we can see in Figure \ref{fig::VGAN_vision_synth}, the derived lens operator was exactly as we expected, further strengthening our derivations.
\begin{figure}
    \centering
    \begin{subfigure}{0.495\linewidth}
        \includegraphics[width=1\linewidth]{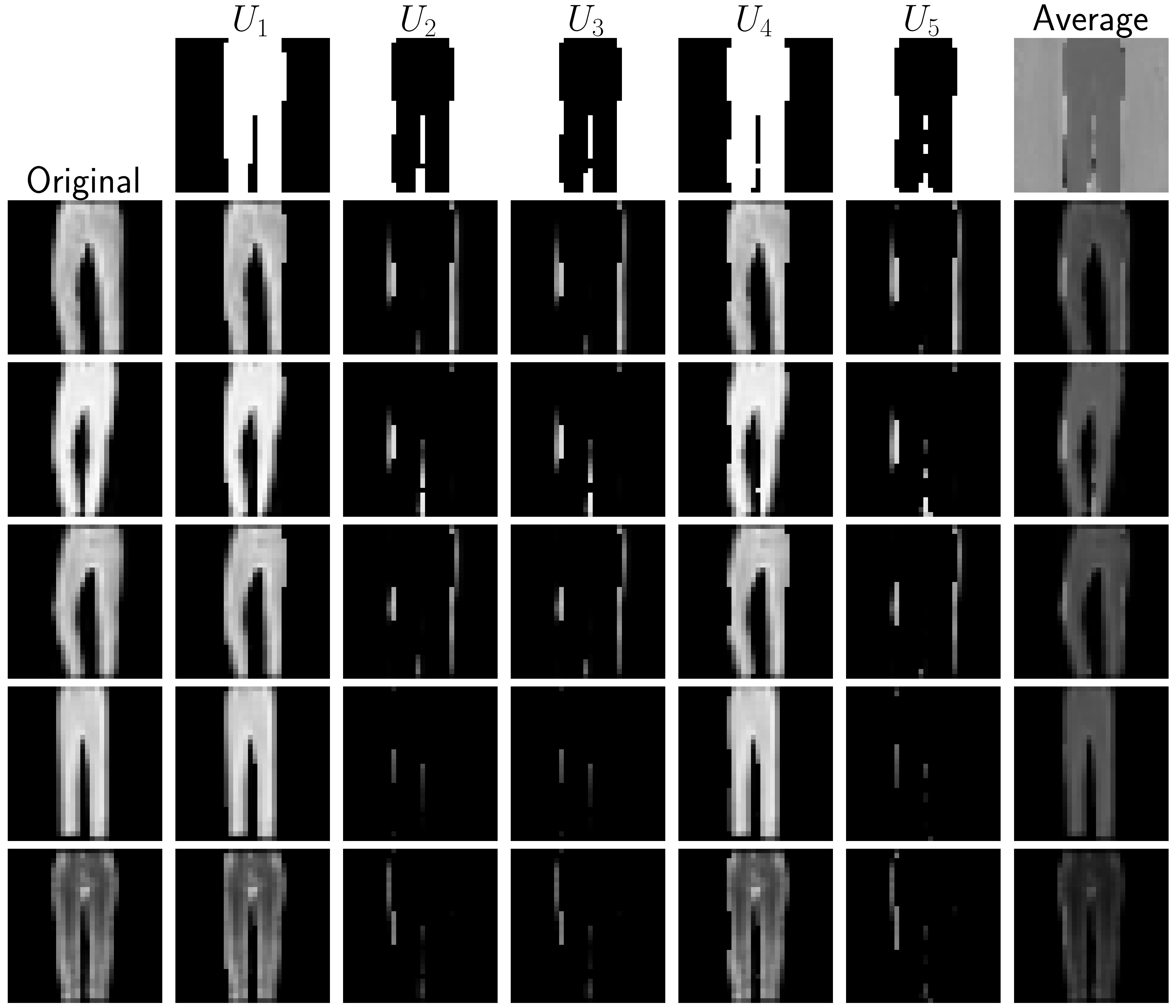}
        \caption{FashionMNIST (28$\times$28)}
        \label{subfig::FMNIST}
    \end{subfigure}
    \begin{subfigure}{0.495\linewidth}
        \includegraphics[width=1\linewidth]{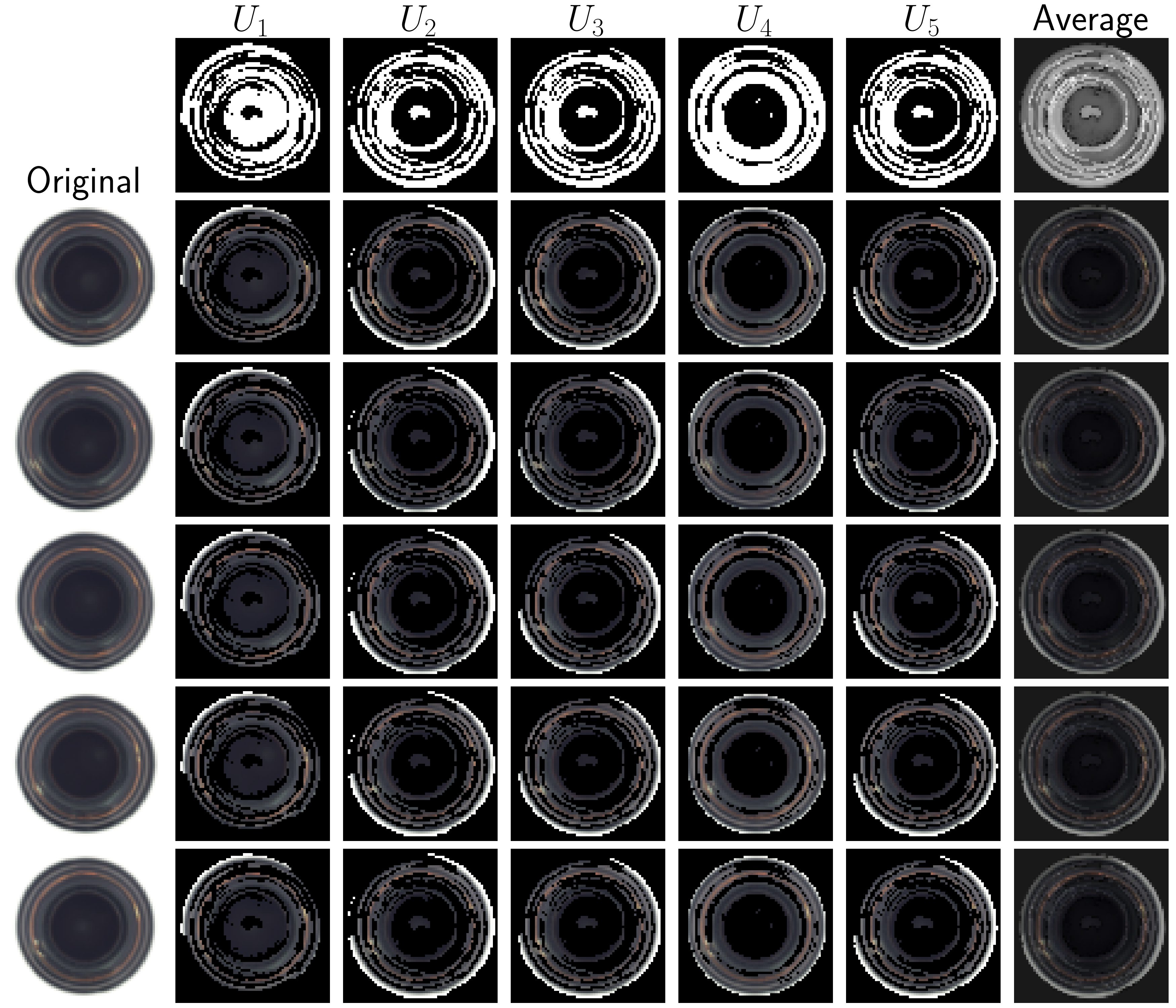}
        \caption{MVTec-AD (rescaled 64$\times$64)}
        \label{subfig::MVTec-AD}
    \end{subfigure}
    \caption{Visualization of the lens operators obtained using \VGAN~in FashionMNIST and MVTec-AD. In the top row of both figures, we plotted 5 realizations of $\mathbf{U}$, and a final average between 500 samples. On the left-most column of both figures, we plotted 5 non-altered images. In the remaining columns and rows, we plotted each image after applying the corresponding operator in the column.}
    \label{fig::VGAN_vision}
\end{figure}

\begin{figure}
    \centering
    \includegraphics[width=0.49\linewidth]{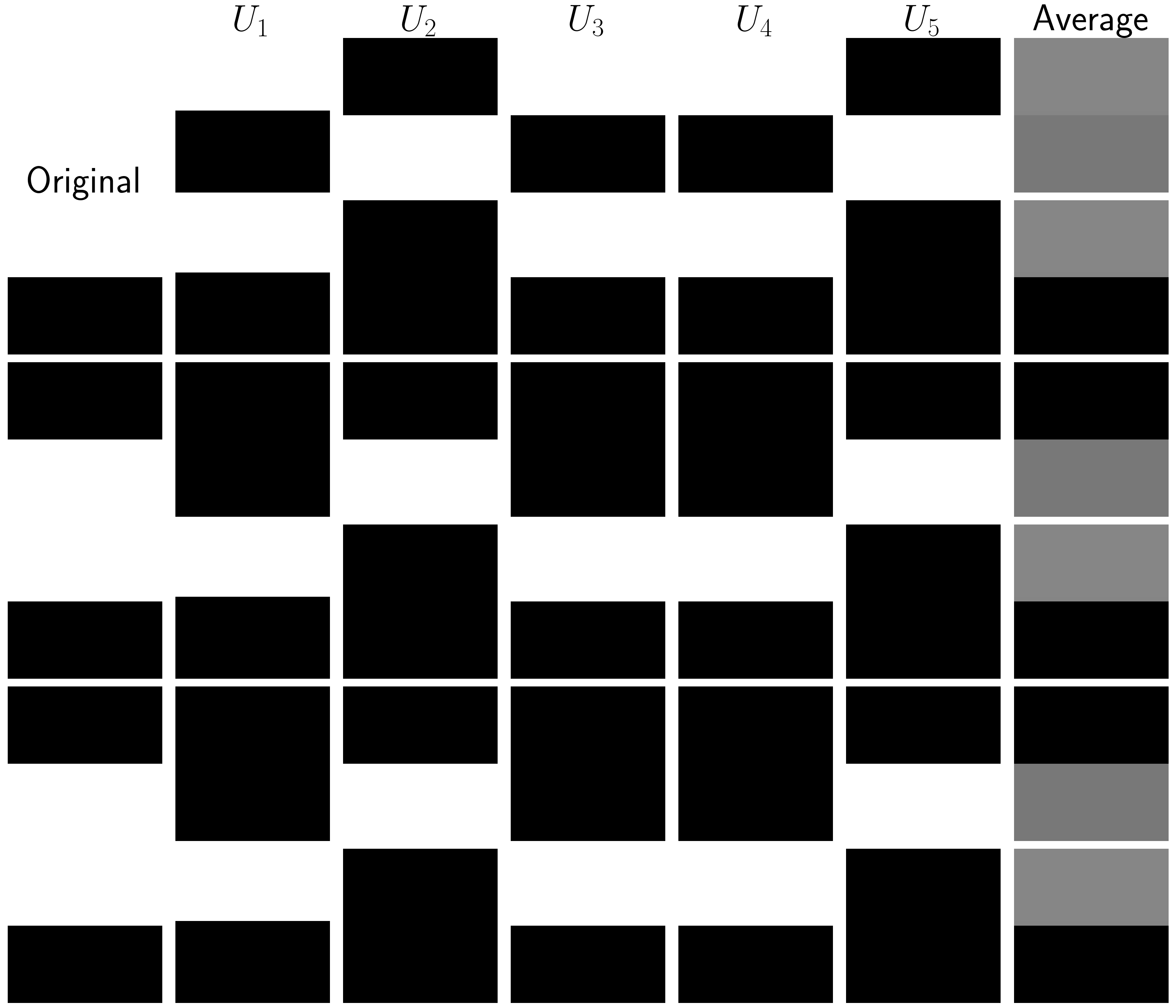}
    \caption{Visualization of the lens operators obtained using \VGAN~in synthetic data. In the top row of the figure, we plotted 5 realizations of $\mathbf{U}$, and a final average between 500 samples. On the left-most column, we plotted 5 non-altered images. In the remaining columns and rows, we plotted each image after applying the corresponding operator in the column.}
    \label{fig::VGAN_vision_synth}
\end{figure}

%GeneratorOneChannelV4DBNSoftmax(
% (layers): Sequential(
% (0): Sequential(
% (0): Linear(in_features=100, out_features=129, bias=True)
% (1): GaussianNoise()
% (2): BatchNorm1d(129, eps=1e-05, momentum=0.1, affine=True, track_running_stats=True)
% (3): LeakyReLU(negative_slope=0.2)
% )
% (1): Sequential(
% (0): Linear(in_features=129, out_features=165, bias=True)
% (1): GaussianNoise()
% (2): BatchNorm1d(165, eps=1e-05, momentum=0.1, affine=True, track_running_stats=True)
% (3): LeakyReLU(negative_slope=0.2)
% )
% (2): Sequential(
% (0): Linear(in_features=165, out_features=212, bias=True)
% (1): GaussianNoise()
% (2): BatchNorm1d(212, eps=1e-05, momentum=0.1, affine=True, track_running_stats=True)
% (3): LeakyReLU(negative_slope=0.2)
% )
% (3): Sequential( 
% (0): Linear(in_features=212, out_features=273, bias=True)
% (1): GaussianNoise()
% (2): BatchNorm1d(273, eps=1e-05, momentum=0.1, affine=True, track_running_stats=True)
% (3): LeakyReLU(negative_slope=0.2)
% )
% (4): Sequential(
% (0): Linear(in_features=273, out_features=350, bias=True)
% (1): GaussianNoise()
% (2): BatchNorm1d(350, eps=1e-05, momentum=0.1, affine=True, track_running_stats=True)
% (3): LeakyReLU(negative_slope=0.2)
% )
% (5): Sequential(
% (0): BatchDiscrimination()
% (1): Linear(in_features=351, out_features=1024, bias=True)
% )
% )
% (upper_softmax): UpperSoftmax1D()
% (softmax): Softmax(dim=1)
% )

\subsubsection{\VGAN~text: Myopic Subspace Theory on Natural Language}
We will try to extract lens operators from the token space of Natural Language data directly. We could do this by considering $E = \mathbb{N}^s$ and $\Theta(\mathfrak{X}) = Diag_{s\times s}(\{0,1\})$, where $s$ is the predifined sentence length for each dataset. We also utilize a different architecture than before, featuring 3 sequential layers, each consisting in a Linear layer, a leaky ReLu and a Batch Normalization layer. After them, the outputs are passed into a final Linear layer with an upper softmax output for obtaining the operators. We use a simple gaussian kernel during training with no kernel learning. 
\begin{figure}
    \centering
    \begin{subfigure}{0.4\linewidth}
        \includegraphics[width=1\linewidth]{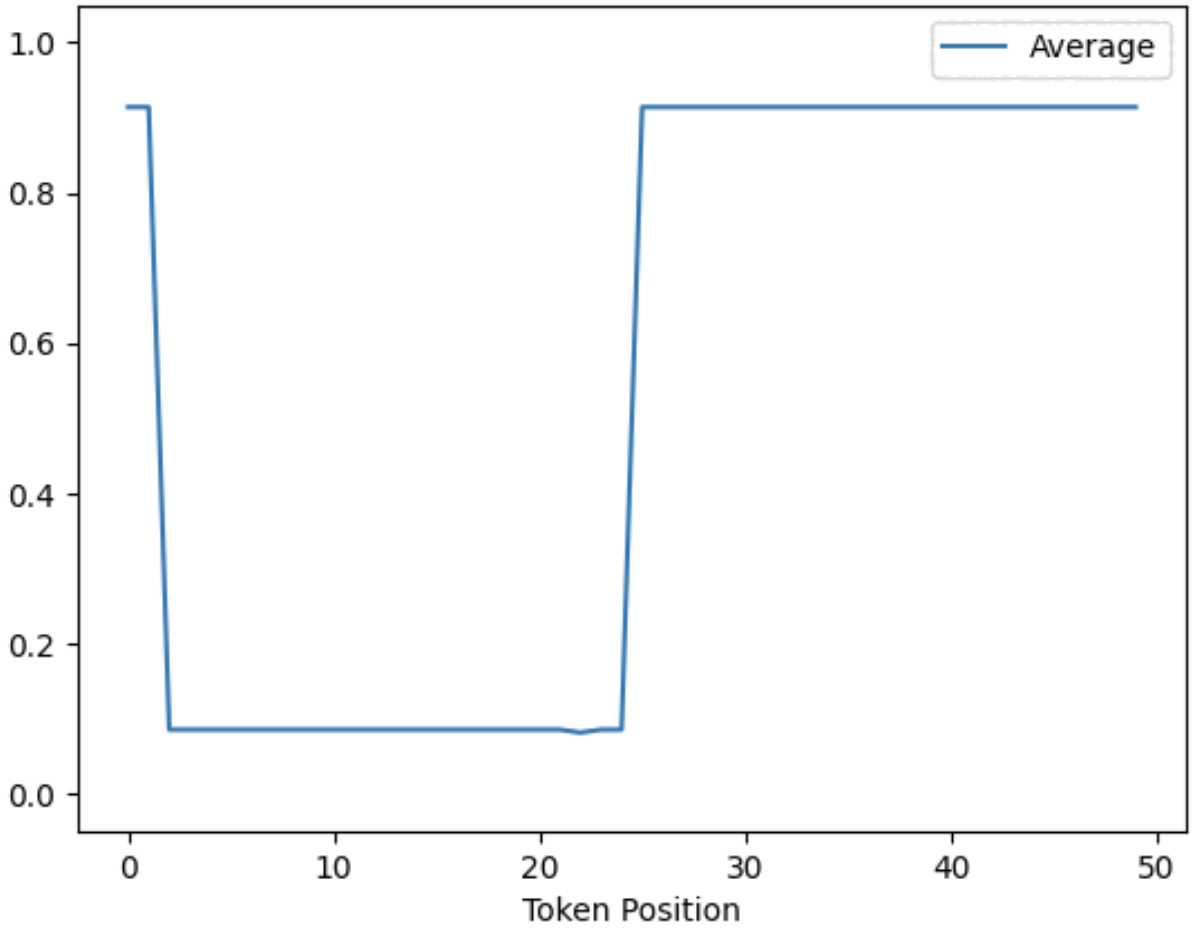}
        \caption{Average selected token by $\mathbf{U}$ in AGnews}
        \label{subfig::AGnews_graph}
    \end{subfigure}
    \begin{subfigure}{0.59\linewidth}
        \includegraphics[width=1\linewidth]{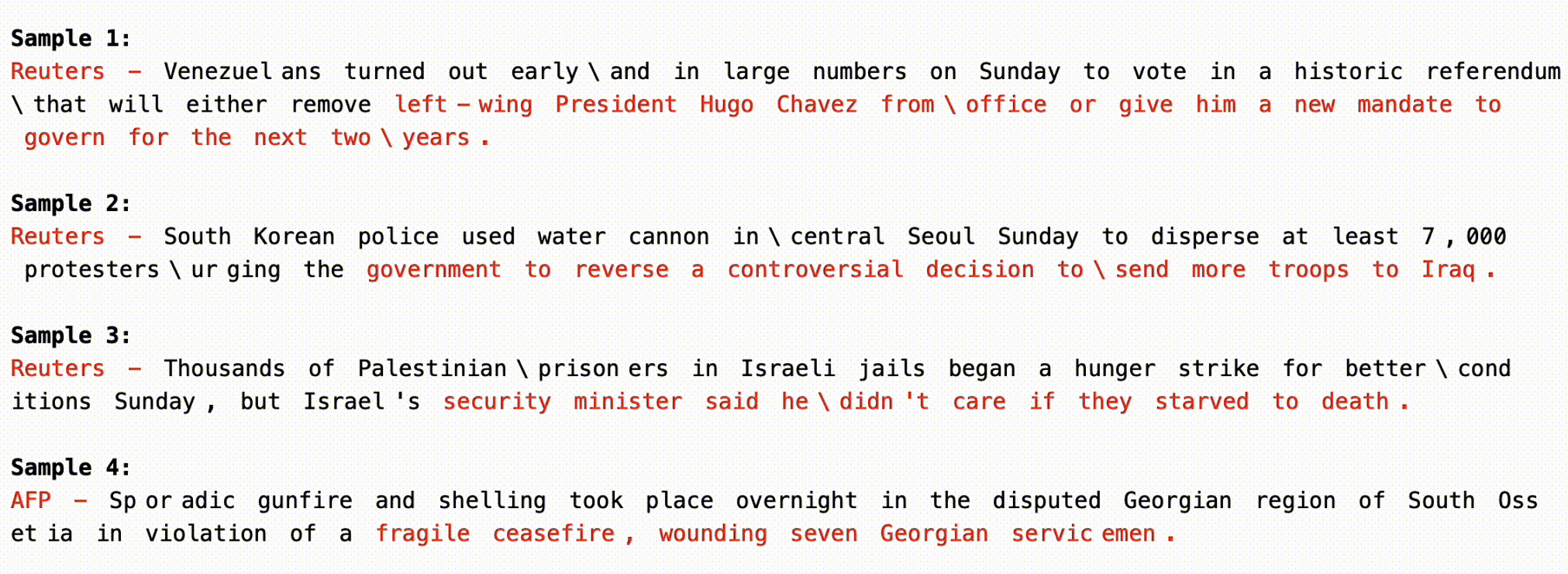}
        \caption{Average selected token in AGnews' text}
        \label{subfig::AGnews_text}
    \end{subfigure}\vspace*{10mm}
    \begin{subfigure}{0.4\linewidth}
        \includegraphics[width=1\linewidth]{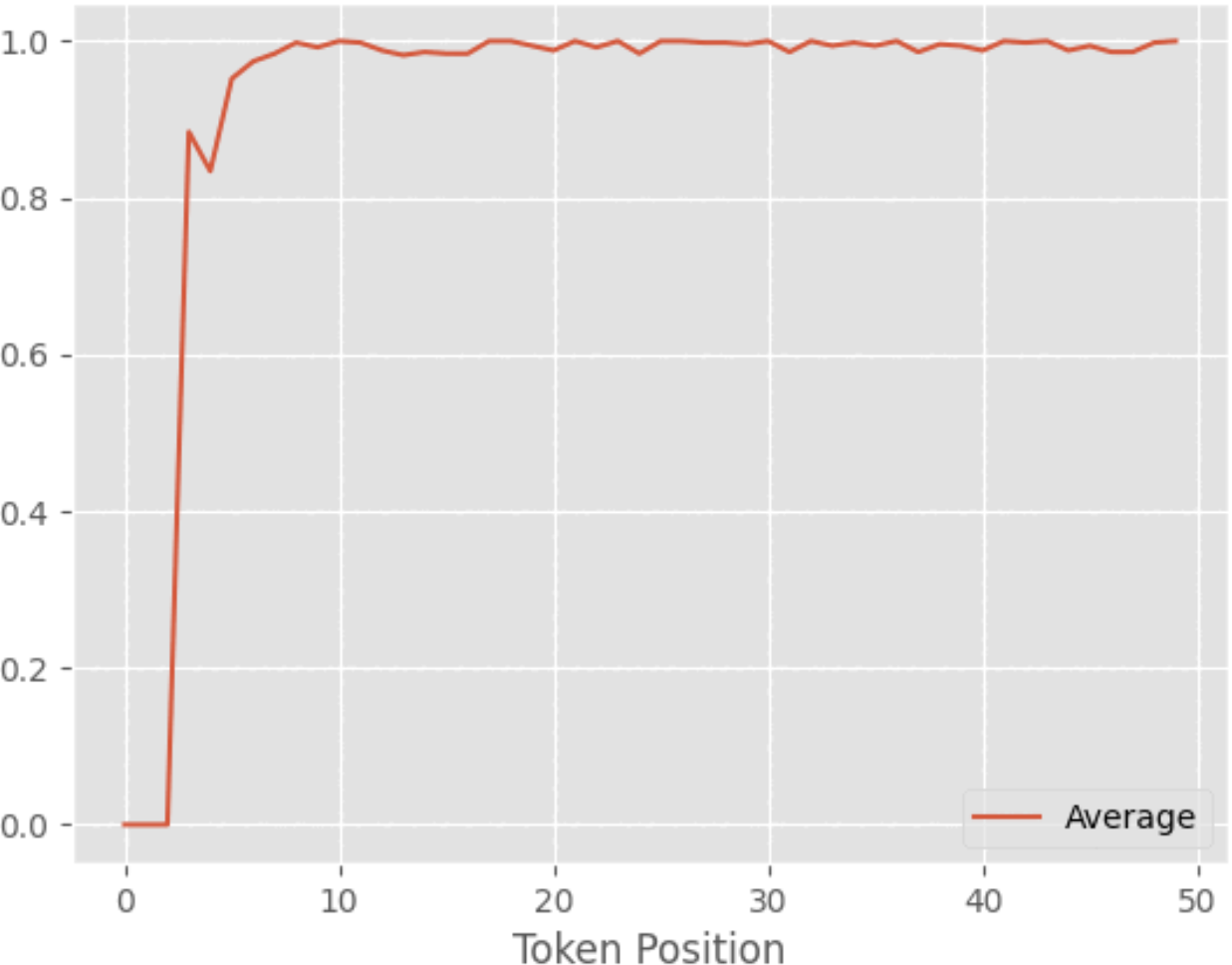}
        \caption{Average selected token by $\mathbf{U}$ in Emotions}
        \label{subfig::Emotions_graph}
    \end{subfigure}
    \begin{subfigure}{0.59\linewidth}
        \includegraphics[width=1\linewidth]{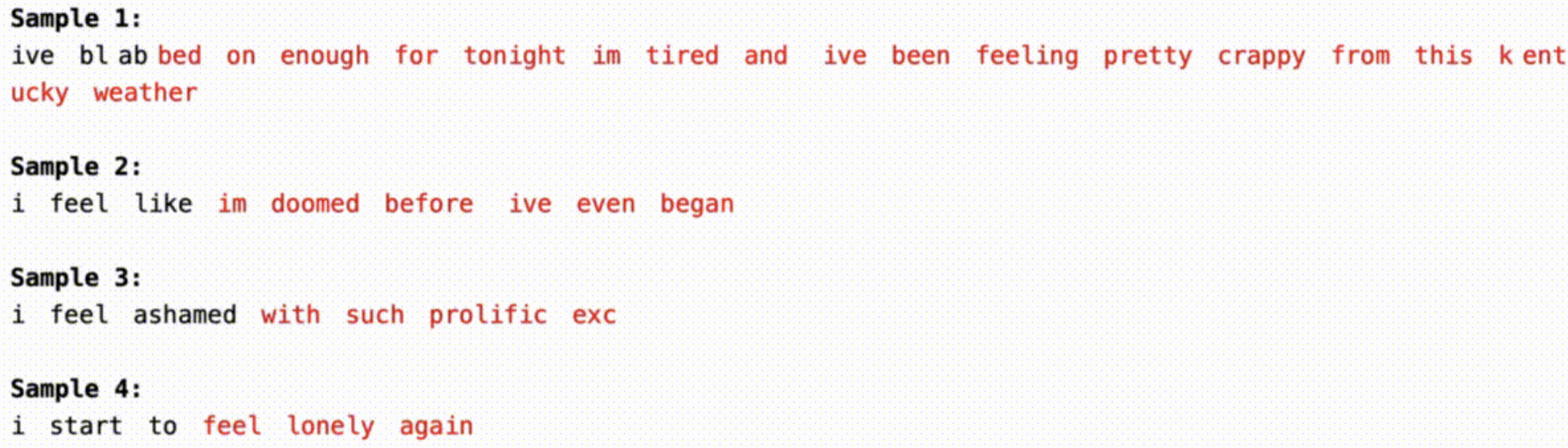}
        \caption{Average selected token in Emotions' text}
        \label{subfig::Emotions_text}
    \end{subfigure}
    \caption{Visualization of the lens operators obtained using \VGAN~in AGnews and Emotions. The first row corresponds to AGnews, and the second to Emotions. The left figures contain the average probability of selecting each token using $\mathbf{U}$. The right figures contain a visualization of the previous average probability on top of 4 text samples. }
    \label{fig::VGAN_text}
\end{figure}

We planned similar experiments utilizing real data for Natural Language. In particular, we employed the AG news dataset and the Emotions dataset. The first one contains descriptions of news articles from different outlets, and the latter contains tweets about sentiments and feelings. Figure \ref{fig::VGAN_text} contains the results for both. 

In AGnews, Figure \ref{subfig::AGnews_graph} shows how the derived lens operator $\mathbf{U}$ selects, on average, the beginning and the end of the sentence. By Figure \ref{subfig::AGnews_text}, this means that $\mathbf{U}$ on average selects the news outlet~---~the beginning of each sample~---~and the core event of the news~---~the end of each sample. A similar behaviour can be observed for Emotions, where we can see that the network focuses on selecting the end of each tweet, corresponding to the actual feeling~---~see Figures \ref{subfig::Emotions_graph} and \ref{subfig::Emotions_text}. These results motivate the possible use of Myopic Subspace Theory also for Natural Language, further strengthening the applicability of this theory.

\end{document}